\documentclass{article}

\PassOptionsToPackage{numbers, compress}{natbib}
\usepackage[preprint]{neurips_2026}

\usepackage[utf8]{inputenc}
\usepackage[T1]{fontenc}
\usepackage{hyperref}
\usepackage{url}
\usepackage{booktabs}
\usepackage{amsfonts}
\usepackage{nicefrac}
\usepackage{microtype}
\usepackage{xcolor}
\usepackage{graphicx}
\usepackage{subcaption}
\usepackage{algorithm}
\usepackage{algorithmicx}
\usepackage{algpseudocode}
\usepackage{multirow}
\usepackage{xspace}
\usepackage{enumitem}
\usepackage{amsmath}
\usepackage{tabularx}
\usepackage{makecell}
\usepackage{adjustbox}
\usepackage{tikz}
\usepackage{svg}
\usepackage{listings}
\usepackage{fontawesome}
\usepackage{amsthm}
\usepackage{amssymb}
\usepackage{commands}
\usepackage{thmtools}
\usepackage[labelfont=bf,textfont=normal]{caption}
\usepackage{array}
\usepackage{wrapfig}
\usepackage{float}
\usepackage{tcolorbox}
\usepackage[capitalize,noabbrev]{cleveref}
\title{HalluTracer: Hallucination Detection via Depth-Averaging Truth Signals}

\author{%
  \normalfont
  \textbf{Zhihao Guo}\textsuperscript{1,2}\thanks{Equal contribution.} \quad
  \textbf{Zonghan Wu}\textsuperscript{1}\footnotemark[1] \quad
  \textbf{Huan Huo}\textsuperscript{2} \quad
  \textbf{DaYong Ye}\textsuperscript{3} \\
  \normalfont
  \textbf{Junwei Zhang}\textsuperscript{4} \quad
  \textbf{Weiran Yao}\textsuperscript{5} \quad
  \textbf{Zhiwei Liu}\textsuperscript{6} \quad
  \textbf{Qingsong Wen}\textsuperscript{1,7}\thanks{Corresponding authors.} \quad
  \textbf{Yilei Shao}\textsuperscript{1}\footnotemark[2]
  \\[0.6em]
  \normalfont 
  \textsuperscript{1}East China Normal University, China \quad
  \textsuperscript{2}University of Technology Sydney, Australia \\
  \textsuperscript{3}City University of Macau, Macau \quad
  \textsuperscript{4}Meta, USA \quad 
  \textsuperscript{5}actAVA AI, USA \\
  \textsuperscript{6}Microsoft AI, USA
  \textsuperscript{7}Squirrel Ai Learning, USA
  \\[0.4em]
  \normalfont 
  \texttt{zhihao.guo-1@student.uts.edu.au, zhwu@sem.ecnu.edu.cn, Huan.Huo@uts.edu.au} \\
  \texttt{dyye@cityu.edu.mo, Junweizhang23@gmail.com, weiran@actava.ai} \\
  \texttt{zhiweiliu@microsoft.com, qingsongedu@gmail.com, yileishao@sem.ecnu.edu.cn}
}

\begin{document}

\maketitle

\begin{abstract}
Even well-aligned large language models confidently generate factually incorrect text, making hallucination a persistent reliability risk in high-stakes deployments.
These models nonetheless carry linearly separable truthfulness signals in their internal representations.
Existing white-box detectors, however, collapse this evidence to isolated components or a single depth, discarding discriminative information distributed across the full forward pass.
We introduce \thistool{}, a detection framework that reads and aggregates truthfulness evidence across every layer of the forward pass before the model emits any answer token. A geometric analysis reveals that the per-layer signals are weakly correlated, so that simple depth averaging suppresses layer-specific noise and captures nearly all linearly accessible information. Across six open-source language models and five hallucination benchmarks, \thistool{} consistently outperforms matched white-box baselines, with gains ranging from one to fourteen points. Collectively, our work recasts hallucination detection from a layer-selection problem into a depth-aggregation problem governed by the geometric sparsity of the truthfulness signal.
\end{abstract}

\section{Introduction}
\label{sec:intro}

Large language models (LLMs) produce fluent but unfaithful text, a failure mode termed hallucination, which poses concrete risks in safety-critical deployments~\citep{ji2023survey,zhang2023siren,huang2025survey} and persists despite alignment training~\citep{ouyang2022training,bai2022constitutional}.
Mechanistic studies, however, reveal that these models internally encode truth-relevant structure that is not fully exploited during generation: linear probes recover separable truthfulness directions from hidden activations~\citep{marks2023geometry}, and models can exhibit calibrated self-evaluation when explicitly prompted~\citep{kadavath2022language}.
This gap between internal knowledge and output fidelity suggests a detection opportunity for token cost saving---if the model's representations already distinguish fact from fabrication, that distinction can in principle be read out before any text is generated. Reliably doing so remains an open problem.

In practice, existing methods capture only a fraction of this internal evidence. For example, 
Contrast-Consistent Search~\citep{ccs} discovers latent truth directions without supervision but requires paired statements and operates at a single depth.

Inference-Time Intervention~\citep{iti} steers model behaviour using truthfulness directions extracted from a few attention heads, ignoring how factual commitment continuously develops across depth.
Fact-Probe~\citep{fact-probe} demonstrates that a simple linear classifier on one layer's hidden state can detect hallucinations, yet the choice of which layer to probe remains dataset-dependent.
Representation Engineering~\citep{zou2023representation} extracts concept-level reading vectors, and IRIS~\citep{iris} classifies the terminal hidden state of a self-verification trace with an multi-layer perceptron.

The common thread is that these methods collapse internal evidence to isolated components or limited depths, implicitly assuming that truthfulness is a layer-localized property rather than a quantity that spans across the forward pass.
However, feed-forward layers function as distributed key-value memories that hierarchically refine predictions~\citep{geva2021transformer}, and the residual stream acts as a shared channel through which successive blocks compose representations~\citep{elhage2021mathematical}; truthfulness is progressively constructed rather than instantaneously encoded.
Existing work has observed or exploited partial depth information, but the aggregation problem itself, namely how to combine truthfulness evidence across all layers and when simple averaging suffices, has not been systematically characterised for hallucination detection.

In this work, we introduce \thistool{} (Figure~\ref{fig:pipeline} provides an overview), a detection framework that treats the truthfulness signal as an evolving quantity across depth rather than a property of any single layer.
A layer-specific linear probe at each Transformer block maps the answer-onset representation to a scalar truth logit; stacking these scalars across all layers yields a logit trajectory that records how factual commitment develops through the network.
Because the readout position is fixed by the prompt, this trajectory is available before the model emits any answer token, enabling anticipatory detection.
A geometric analysis then reveals that adjacent probe directions are near-orthogonal across depth, we observe the truthfulness signal occupies less than one percent of the representation variance. This weak correlation across layers means that depth-averaging the probe outputs suppresses per-layer noise, yielding a detection statistic unavailable to single-layer methods.
Our contributions are as follows:

\begin{enumerate}[nosep,leftmargin=1.5em]
  \item \textbf{Depth-wise pre-decoding detector.} We introduce \thistool{}, a lightweight hallucination detector that reads truthfulness evidence from every Transformer layer at the answer-onset position and aggregates these layer-wise scores into a single pre-decoding risk signal.
  \item \textbf{Geometry-guided explanation of depth averaging.} We show that truthfulness readouts are low-energy and weakly aligned across depth, and derive a covariance-aware diagnostic explaining when simple depth averaging captures nearly all linearly accessible detection signal.
  \item \textbf{Matched-budget empirical validation.} Across six open-weight LLMs and five hallucination benchmarks, \thistool{} consistently improves over matched pre-decoding white-box baselines, with ablations showing that the gains come primarily from depth aggregation rather than head selection, readout choice, or additional trajectory features.
\end{enumerate}

\begin{figure}[t]
    \centering
    \includegraphics[width=\linewidth]{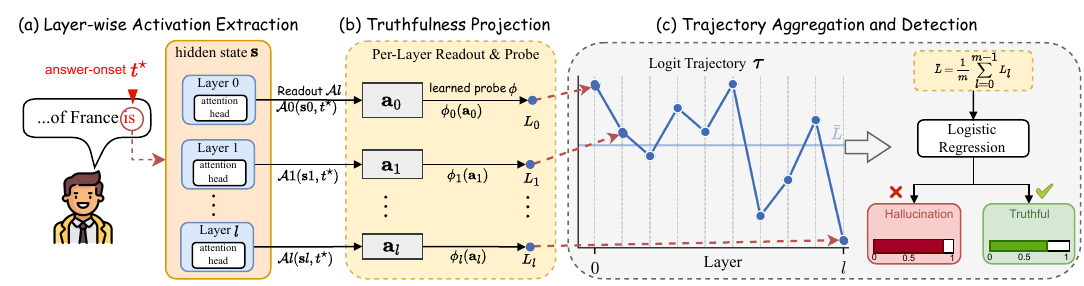}
    \caption{Overview of \thistool{}.
    \textbf{(a)}~At each Transformer layer, a readout extractor selects the hidden state at the answer-onset position.
    \textbf{(b)}~A per-layer learned probe projects the readout onto a scalar truth logit.
    \textbf{(c)}~The truth logits across all layers form a logit trajectory whose depth-averaged mean is fed to a logistic classifier for hallucination detection.}
    \label{fig:pipeline}
    \vspace{-5mm}
\end{figure}
\section{Related Work}
\label{sec:relate}

\textbf{Output-Level and Internal-State Detection.}
Hallucination detection divides into output-level and internal-access methods.
Output-level methods, often usable in black-box settings, such as SelfCheckGPT~\citep{selfcheckgpt} and Semantic Entropy~\citep{farquhar2024detecting}, score consistency or uncertainty from generated responses, but require multiple sampled outputs and incur substantial decoding cost.
Post-generation white-box methods such as HalluGuard~\citep{HalluGuard} instead use gradient geometry over the full generated response, providing a richer information budget but operating after answer emission.
Internal-access methods probe model activations: \citet{ccs} discover latent truth directions unsupervised, \citet{iti} steer model behaviour via targeted layer interventions, and follow-up work deploys classifiers on selected hidden states, attention heads, or terminal representations~\citep{fact-probe, iris, helm, chen2024inside}.
Recent pre-decoding studies further show that query-side or input-side hidden states already encode hallucination risk before generation~\citep{ji2024internalrisk}, and FactCheckmate~\citep{alnuhait2025factcheckmatepreemptivelydetectingmitigating} uses such signals for learned detection and hidden-state intervention.
These works establish that factuality-related signals are accessible from generated outputs, post-generation gradients, or selected internal states, but they typically rely on sampled generations, generated responses, selected layers, selected heads, token positions, or learned intervention modules.
In contrast, we keep the pre-decoding setting but replace layer or head selection with a full-depth truth-logit trajectory, thereby using the depth profile and covariance structure of internal evidence rather than a single selected readout.
Prior work~\citep{geng-etal-2024-survey, kadavath2022language} shows that factual and hallucinated examples can differ not only in mean signal but also in uncertainty, motivating our explicit variance and covariance diagnostics.

\textbf{Cross-Layer Dynamics.}
Mechanistic interpretability has established that semantic properties are refined across depth: feed-forward layers function as key-value memories~\citep{geva2021transformer}, the residual stream serves as a communication channel for blocks~\citep{elhage2021mathematical}, and the logit lens~\citep{nostalgebraist2020logitlens} and its learned variant~\citep{belrose2023tunedlens} reveal how token predictions sharpen across layers.
Several hallucination methods exploit depth or generation-time dynamics: DoLa~\citep{dola} contrasts layer-wise vocabulary logits during decoding, HalluGuard~\citep{HalluGuard} uses NTK gradients over the full generated response, \citet{lsd2025} analyses layer-wise semantic dynamics, and ICR Probe~\citep{icrprobe} tracks residual-stream update dynamics across layers.
These approaches are complementary but either intervene during decoding, operate after answer emission, select layer/module-specific signals, or learn cross-layer detectors.
In contrast, HalluTracer assigns a pre-decoding risk score from answer-onset truth-logit trajectories and provides a covariance-aware account of when uniform aggregation across all depths is Fisher-near-optimal.

\section{Method}
\label{sec:method}

\subsection{Overview}
\label{sec:overview}

At each Transformer layer, a supervised probe projects the internal representation at the \emph{answer-onset position} onto a scalar truth logit. Stacking these scalars across depth yields a \emph{logit trajectory}~(\S\ref{sec:logit}; Figure~\ref{fig:pipeline}). The scope and limitations of this answer-onset detection paradigm are discussed in Appendix~\ref{app:scope}.
We analyse the geometry of the probe directions and show that adjacent probes occupy near-orthogonal coordinate frames, so that successive readouts are weakly correlated~(\S\ref{sec:physics}).
An SNR (Signal-to-Noise Ratio) analysis shows that the depth-averaged mean is a near-optimal detection statistic under the empirically verified level-dominant regime, yielding a single-scalar detector fed to a linear classifier~(\S\ref{sec:detector}).
An exact decomposition of each layer-to-layer logit increment provides a mechanistic account of how depth averaging concentrates the class signal~(\S\ref{sec:decomp}).

\subsection{Layer-wise Truthfulness Trajectory}
\label{sec:logit}

\textbf{Setup.} 
Let $m$ denote the number of Transformer layers, and $l \in \{0, \dots, m-1\}$ index depth.
Let $t^{\star}$ denote the answer-onset position, defined as the final prompt token immediately preceding answer generation.
At layer~$l$, let $\mathbf{s}_{l,t^{\star}}$ denote the model's internal state at position~$t^{\star}$.
A layer-specific extractor $\mathcal{A}_{l}$ maps $\mathbf{s}_{l,t^{\star}}$ into a shared observation space (Figure~\ref{fig:pipeline}a), yielding the \emph{readout representation}
\begin{equation}
  \label{eq:readout}
  \mathbf{a}_{l} \;:=\; \mathcal{A}_{l}\!\bigl(\mathbf{s}_{l,t^{\star}}\bigr) \;\in\; \mathbb{R}^{d},
\end{equation}
where $d$ is the readout dimension.
Because $t^{\star}$ is determined entirely by the prompt, the resulting trajectory is available before the model emits any answer token, enabling anticipatory detection.
In this work, $\mathcal{A}_{l}$ extracts a single attention head per layer (Appendix~\ref{app:probe-training}), though the detector is robust to the choice of head and readout mechanism (\S\ref{sec:experiments}).

\begin{definition}[Truthfulness Observable]
\label{def:truth-coord}
A \emph{truthfulness observable} at layer~$l$ is a learned affine function mapping any readout vector $\mathbf{a} \in \mathbb{R}^{d}$ to a scalar:
\begin{equation}
  \label{eq:observable}
  \phi_{l}(\mathbf{a}) \;=\; \mathbf{v}_{l}^{\!\top} \mathbf{a} + b_{l},
\end{equation}
where $\mathbf{v}_{l} \in \mathbb{S}^{d-1}$ is a unit-norm direction learned by a regularised linear probe and $b_{l} \in \mathbb{R}$ is a scalar geometric bias.
The \emph{truthfulness coordinate} (or \emph{truth logit}) at layer~$l$ is the scalar projection
\begin{equation}
  \label{eq:logit}
  L_{l} \;=\; \phi_{l}(\mathbf{a}_{l}),
\end{equation}
where $\mathbf{a}_{l} = \mathcal{A}_{l}(\mathbf{s}_{l,t^{\star}})$ is the readout representation at layer~$l$.
\end{definition}

Cross-layer evaluations such as $\phi_{l}(\mathbf{a}_{l+1})$ are algebraically well-defined under the fixed readout gauge that identifies all layer coordinates with $\mathbb{R}^d$; they should not be interpreted as gauge-invariant comparisons across heads.

\textbf{Logit Trajectory.}
Evaluating the sequence of observables sweeps out the \emph{logit trajectory}:
\begin{equation}
  \label{eq:trajectory}
  \boldsymbol{\tau} \;=\; \bigl[L_{0},\; L_{1},\; \ldots,\; L_{m-1}\bigr]^{\top} \;\in\; \mathbb{R}^{m}.
\end{equation}
This trajectory is a depth-wise trace of the model's latent internal state, not a sequence over generated tokens.
For a single inference graph, this projection traces a static curve over depth. Evaluated over the data distribution $\mathcal{D}$, these curves form a trajectory distribution whose class-conditional geometry determines the discriminative power of trajectory-level statistics; this geometry is shaped by both the common-mode variance and the structure of the probe directions $\{\mathbf{v}_l\}$.

\subsection{Probe Isotropy and Variance Structure}
\label{sec:physics}

Before specifying the detector, we analyse the geometric structure of the probe directions $\{\mathbf{v}_l\}$, as this structure determines how much discriminative information depth aggregation can recover.
The argument proceeds in three steps: a structural condition on the readout distributions (Assumption~\ref{asm:sparse}), a geometric consequence for the probe directions (Property~\ref{asm:manifold}), and a variance decomposition whose conclusions directly motivate the detector design (Remark~\ref{prop:ensemble}).
Full derivations are deferred to Appendix~\ref{app:hd-geometry}. Empirical verification appears in~\S\ref{sec:exp-geometry}.

The following assumption identifies a geometric condition on the readout distributions under which adjacent probes are sufficiently decorrelated for depth aggregation to be effective.

\begin{assumption}[Sparse Semantic Separation]
\label{asm:sparse}
For each intermediate layer~$l$, the class-conditional distributions of the readout representation $\mathbf{a}_l \in \mathbb{R}^{d_h}$ ($d_h$ is the attention head dimension, $d = d_h$ in Eq.~\ref{eq:readout}) satisfy three conditions (formal definitions and quantitative calibration in Appendix~\ref{app:hd-geometry}):
\begin{enumerate}[nosep,leftmargin=1.5em]
  \item[\textup{(i)}] \textbf{Mean separation:} the factual and hallucinated class means differ.
  \item[\textup{(ii)}] \textbf{Signal sparsity:} the learned probe direction $\mathbf{v}_l$ (Definition~\ref{def:truth-coord}) captures no more variance from the within-class pooled covariance than a uniformly random direction in $\mathbb{R}^{d_h}$.
  \item[\textup{(iii)}] \textbf{Directional calibration:} the class-conditional variances along the probe direction are of comparable magnitude. This condition governs the tightness of the SNR bound but is not required for the framework's validity.
\end{enumerate}
\end{assumption}

\begin{proof}[Justification]
Low energy alone does not imply isotropy. We therefore treat probe isotropy as an empirical prediction of the perturbative quiet-subspace model (Appendix~\ref{app:ortho-grounding}) rather than a direct consequence of variance sparsity: because the probe captures less than $1/d_h$ of the total variance, its orientation is effectively unconstrained by the covariance spectrum, and each Transformer block is predicted to resample this orientation. This prediction is verified across all models in Table~\ref{tab:validation_results}.
\end{proof}

\begin{property}[Probe Isotropy]
\label{asm:manifold}
Under Assumption~\ref{asm:sparse} and the perturbative quiet-subspace model of Appendix~\ref{app:ortho-grounding}, adjacent-layer probe directions are predicted to occupy near-orthogonal coordinate frames: $|\langle \mathbf{v}_{l},\mathbf{v}_{l+1}\rangle| \approx \sqrt{2/(\pi\, d_h)}$, matching the theoretical expectation for independent unit vectors on $\mathbb{S}^{d_h - 1}$.
Define the sample-level common mode $\alpha_i := \bar{L}^{(i)} - \bar{\mu}^{(c_i)}$ as the depth-averaged deviation of sample~$i$ from its class mean $\bar{\mu}^{(c_i)}$, where $c_i \in \{\mathcal{H}, \mathcal{F}\}$, $\mathcal{H}$ denotes hallucinated examples, and $\mathcal{F}$ denotes factual examples.
After removing~$\alpha_i$, residual cross-layer correlations are predicted to decay rapidly with layer separation (Table~\ref{tab:variance-decomp}).
\end{property}
Near-orthogonality motivates weak residual correlation after removing the sample-level common mode~$\alpha_i$; it is not assumed to imply unconditional statistical independence (Appendix~\ref{app:theory-scope}).

\begin{remark}[Level--Shape Decomposition]
\label{prop:ensemble}
Under Property~\ref{asm:manifold}, the logit trajectory decomposes into two components:
\begin{equation}
  \label{eq:level-shape}
  L_l^{(i)} \;=\; \underbrace{\alpha_i + \bar{\mu}^{(c_i)}}_{\text{level (shared)}} \;+\; \underbrace{(\mu_l^{(c_i)} - \bar{\mu}^{(c_i)}) + \varepsilon_{i,l}}_{\text{shape (layer-specific)}},
\end{equation}
where $L_l^{(i)}$ denotes the truth logit (Eq.~\ref{eq:logit}) of sample~$i$ at layer~$l$, $\mu_l^{(c_i)} := \mathbb{E}[L_l \mid c_i]$ is the class-conditional mean at layer~$l$, and $\varepsilon_{i,l} := L_l^{(i)} - \mu_l^{(c_i)} - \alpha_i$ is the layer-specific residual (by construction $\sum_l \varepsilon_{i,l} = 0$).
The modelling content lies in treating $\alpha_i \sim (0, \sigma_\alpha^2)$ and $\varepsilon_{i,l} \sim (0, \sigma_\varepsilon^2)$ as independent random components with the variance structure of a random-intercept model.
Note that the empirical projection used to estimate variance components (Appendix~\ref{app:hd-geometry}) constructs residuals satisfying $\sum_l \varepsilon_{i,l}=0$ by construction, whereas Prop.~\ref{prop:snr} treats them as independent across layers.
The trajectory mean $\bar{L} = \frac{1}{m} \sum_{l=0}^{m-1} L_l$ captures the level component $\alpha_i + \bar{\mu}^{(c_i)}$, which carries the dominant class-conditional signal (Table~\ref{tab:variance-decomp}).
A single well-chosen layer already captures most of this signal. Depth averaging improves robustness by suppressing~$\varepsilon_{i,l}$.
The shape statistics, constructed from centred increments $\Delta L_l$, act as high-pass filters on the depth profile. Their discriminative power depends on the divergence ratio~$\rho$: when the class-conditional gap $\delta_l$ varies slowly across depth ($\rho \ll 1$), the mean captures the dominant signal and shape statistics are noise-dominated.
This decomposition predicts two testable consequences verified in~\S\ref{sec:experiments}: (i)~$\bar{L}$ alone captures nearly all discriminative power, and (ii)~trajectory features consistently outperform the best single-layer probe.
These conclusions directly inform the detector architecture presented next.
\end{remark}

\subsection{Detection via Depth-Averaged Truth Logits}
\label{sec:detector}

The level--shape decomposition (Remark~\ref{prop:ensemble}) shows that the trajectory mean~$\bar{L}$ captures the dominant class-conditional signal (Figure~\ref{fig:pipeline}c).
We formalise this via the signal-to-noise ratio (SNR), defined as the magnitude of the class-conditional mean gap divided by the pooled within-class standard deviation~\citep{fisher1936use} (formal definition in Appendix~\ref{app:snr-proof}).
We compare the SNR of integral (level) and differential (shape) statistics to determine when $\bar{L}$ is near-optimal.
We use SNR as an analytically tractable proxy for linear separability; empirical verification appears in~\S\ref{sec:exp-ablation}.

\begin{proposition}[SNR Dominance of the Trajectory Mean]
\label{prop:snr}
In the level--shape model~\eqref{eq:level-shape} with independent residuals, let
$\delta_l := \mu_l^{(\mathcal{F})} - \mu_l^{(\mathcal{H})}$ denote the per-layer class gap, $\bar{\delta} := m^{-1}\sum_l \delta_l$ the depth-averaged gap, and
$\hat{\beta} := \sum_l w_l L_l$ the ordinary least-squares (OLS) slope of the trajectory on the depth index, where the centred weights $w_l \propto (l - \bar{l})$ satisfy $\sum_l w_l = 0$ (details in Appendix~\ref{app:snr-proof}).
Define the divergence ratio $\rho := |\bar{\delta}'|\cdot m\,/\,|\bar{\delta}|$, where $\bar{\delta}' := \sum_l w_l\delta_l$ is the OLS slope of the class-gap profile,
and the origin penalty factor $\gamma := \sqrt{1 + m\,\sigma_\alpha^2/\sigma_\varepsilon^2}$.
Then the exact SNR ratio is:
\begin{equation}
  \label{eq:snr-ratio}
  \frac{\mathrm{SNR}(\hat{\beta})}{\mathrm{SNR}(\bar{L})}
  \;=\; \frac{\rho \cdot \gamma}{\sqrt{12}}\,\sqrt{1 - \tfrac{1}{m^2}}\,,
\end{equation}
which simplifies to $\rho\gamma/\sqrt{12}$ with a relative overestimation bounded by $1/m^2$ (Appendix~\ref{app:snr-proof}).
When $\rho \cdot \gamma / \sqrt{12} \ll 1$, the trajectory mean is the SNR-dominant statistic among representative centred linear filters; empirical verification of this condition appears in~\S\ref{sec:exp-ablation}.
\end{proposition}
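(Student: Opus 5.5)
The plan is a direct second-moment computation inside the level--shape model~\eqref{eq:level-shape}. We treat the residuals $\varepsilon_{i,l}$ as i.i.d.\ across layers with variance $\sigma_\varepsilon^2$, independent of $\alpha_i\sim(0,\sigma_\alpha^2)$, exactly as the statement stipulates. We do not impose the empirical sum-to-zero constraint noted in Remark~\ref{prop:ensemble}. Both $\bar L$ and $\hat\beta$ are fixed linear functionals $\sum_l c_l L_l$ of the trajectory. For any such functional, the class-conditional mean gap is $\sum_l c_l\delta_l$, where the class-specific offsets $\bar\mu^{(c)}$ and $\mu^{(c)}_l-\bar\mu^{(c)}$ recombine into $\mu^{(c)}_l$. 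The within-class variance is $\sigma_\alpha^2(\sum_l c_l)^2+\sigma_\varepsilon^2\sum_l c_l^2$. Because this variance does not depend on the class, the pooled within-class standard deviation equals the per-class one. The SNR (Appendix~\ref{app:snr-proof}) of the functional is therefore $|\sum_l c_l\delta_l|\big/\sqrt{\sigma_\alpha^2(\sum_l c_l)^2+\sigma_\varepsilon^2\sum_l c_l^2}$.

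\textbf{The two statistics.} For $\bar L$ we take $c_l=1/m$. This gives $\sum_l c_l=1$ and $\sum_l c_l^2=1/m$, so
\[
\mathrm{SNR}(\bar L)=\frac{|\bar\delta|}{\sqrt{\sigma_\alpha^2+\sigma_\varepsilon^2/m}} .
\]
For $\hat\beta$ we use the OLS weights $w_l=(l-\bar l)/S$ with $S=\sum_l(l-\bar l)^2=m(m^2-1)/12$, the standard sum of squares of $0,\dots,m-1$ about their mean. These weights satisfy $\sum_l w_l=0$, so the common mode $\alpha_i$ cancels exactly; this is the origin of the factor $\gamma$. They also satisfy $\sum_l w_l^2=1/S$. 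The gap is $\bar\delta'$ and the variance is $\sigma_\varepsilon^2/S$, which gives
\[
\mathrm{SNR}(\hat\beta)=\frac{|\bar\delta'|\sqrt S}{\sigma_\varepsilon} .
\]

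\textbf{The ratio.} Dividing the two expressions gives
\[
\frac{|\bar\delta'|}{|\bar\delta|}\,\sqrt{S}\,\sqrt{\frac{\sigma_\alpha^2}{\sigma_\varepsilon^2}+\frac1m}
=\frac{|\bar\delta'|}{|\bar\delta|}\sqrt{\frac{m^2-1}{12}}\;\gamma .
\]
Here we used $\sqrt{S/m}=\sqrt{(m^2-1)/12}$ and $\sqrt{m\sigma_\alpha^2/\sigma_\varepsilon^2+1}=\gamma$. Substituting $|\bar\delta'|/|\bar\delta|=\rho/m$ and writing $\sqrt{m^2-1}/m=\sqrt{1-1/m^2}$ yields~\eqref{eq:snr-ratio}.

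\textbf{Approximation and dominance.} For the simplification, dropping the factor $\sqrt{1-1/m^2}$ overestimates the ratio by the relative amount $(1-1/m^2)^{-1/2}-1$. By the elementary inequality $(1-x)^{-1/2}\le 1+x$ for $x\in[0,1/2]$, this is at most $1/m^2$ for every $m\ge 2$. The dominance claim then follows immediately: if $\rho\gamma/\sqrt{12}\ll1$, then $\mathrm{SNR}(\hat\beta)\ll\mathrm{SNR}(\bar L)$. For "representative centred linear filters" we would observe that any filter with $\sum_l c_l=0$ likewise cancels $\alpha_i$ and obeys Cauchy--Schwarz. Its SNR is therefore at most $\|\delta-\bar\delta\mathbf 1\|_2/\sigma_\varepsilon$, and the slope filter attains this bound when the gap profile is linear in depth.

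The main obstacle is not the algebra but the modelling consistency. The residuals in Remark~\ref{prop:ensemble} are empirically constrained to sum to zero, which conflicts with the i.i.d.\ assumption used here. A second gap is that the pooled-SD definition must genuinely reduce to the per-class variance, which requires $\sigma_\alpha^2$ and $\sigma_\varepsilon^2$ to be shared across the two classes. We would state both points explicitly as hypotheses, consistent with the proposition's phrase ``with independent residuals''. Under the sum-to-zero constraint the slope variance changes only by a $(1-1/m)$-type correction, which we would relegate to a remark.
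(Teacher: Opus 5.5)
Your proposal is correct and follows essentially the same route as the paper's proof: you compute the class gap and within-class variance of $\bar L$ and of the OLS slope ($\alpha_i$ cancels because $\sum_l w_l=0$), use $\sum_l(l-\bar l)^2=m(m^2-1)/12$, and carry out the same ratio algebra. Your zero-sum Cauchy--Schwarz remark matches the paper's extension bound $\gamma\rho_{\mathrm{shape}}$, and the only cosmetic difference is in the overestimation bound: you measure it relative to the exact ratio, $(1-1/m^2)^{-1/2}-1$, while the paper measures it relative to the simplified formula, $1-\sqrt{1-1/m^2}$, and both are at most $1/m^2$.
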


Prop.~\ref{prop:snr} establishes mean dominance over representative centred statistics such as slope and zone contrast under the independent-residual idealisation.
Appendix~\ref{app:snr-proof} extends this to all unit-norm zero-sum linear filters.
Neither Prop.~\ref{prop:snr} nor Thm.~\ref{thm:uniform-optimal} claims Bayes optimality under unequal class covariances or nonlinear decision rules; both are Fisher-criterion statements about linear aggregation under pooled within-class covariance.
The following theorem handles the unrestricted linear case without the independent-residual assumption, exactly quantifying the possible Fisher gain of any linear aggregation over uniform averaging.

\begin{theorem}[Covariance-Aware Fisher Gap]
\label{thm:uniform-optimal}
Let $\Sigma_\tau \succ 0$ denote the within-class trajectory covariance (no structural assumption on $\Sigma_\varepsilon$), $\mathbf{d} \in \mathbb{R}^m$ the class-gap vector with entries $\delta_l$, and $\mathbf{u} = m^{-1/2}\,\mathbf{1}$ the uniform-averaging direction.
Decompose both $\mathbf{d}$ and $\Sigma_\tau$ into level ($\mathbf{u}$) and shape ($\mathbf{u}^\perp$) components (explicit construction in Appendix~\ref{app:uniform-proof}), yielding a level signal $s = \mathbf{u}^\top\mathbf{d} = \sqrt{m}\,\bar{\delta}$, a shape signal coordinate $\mathbf{z} \in \mathbb{R}^{m-1}$, and covariance blocks: level variance~$a$, level--shape cross-covariance~$\mathbf{b}$, and shape covariance~$C$.
Assume $s \neq 0$, define three empirical diagnostics:
\begin{equation}
  \label{eq:fisher-diagnostics}
  \theta \;:=\; \frac{\mathbf{b}^\top C^{-1}\mathbf{b}}{a}\,,
  \qquad
  \chi^2 \;:=\; \frac{a\;\mathbf{z}^\top C^{-1}\mathbf{z}}{s^2}\,,
  \qquad
  t \;:=\; \frac{\mathbf{b}^\top C^{-1}\mathbf{z}}{s}\,.
\end{equation}
Then the Fisher-optimal linear score $\mathbf{w}_\star \propto \Sigma_\tau^{-1}\mathbf{d}$ satisfies the exact identity:
\begin{equation}
  \label{eq:fisher-gap-exact}
  \frac{\mathrm{SNR}^2(\mathbf{w}_\star^\top\boldsymbol{\tau})}
       {\mathrm{SNR}^2(\bar{L})}
  \;=\;
  \chi^2 \;+\; \frac{(1 - t)^2}{1 - \theta}\,.
\end{equation}
When $\theta \ll 1$ and $\chi \ll 1$, the right-hand side reduces to $1 + \mathcal{O}(\chi^2 + \theta)$, and uniform averaging is Fisher-near-optimal
(proof, Cauchy--Schwarz bound $|t| \leq \sqrt{\theta}\,\chi$, and upper bound in Appendix~\ref{app:uniform-proof}).
\end{theorem}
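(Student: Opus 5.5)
Fix an orthogonal matrix $Q = [\mathbf{u}\;\, U]$, where the columns of $U \in \mathbb{R}^{m\times(m-1)}$ form an orthonormal basis of $\mathbf{u}^\perp$. In these coordinates,
\[
Q^\top\mathbf{d} = \begin{pmatrix} s \\ \mathbf{z}\end{pmatrix},
\qquad
Q^\top\Sigma_\tau Q = \begin{pmatrix} a & \mathbf{b}^\top \\ \mathbf{b} & C\end{pmatrix},
\]
with $a = \mathbf{u}^\top\Sigma_\tau\mathbf{u}$, $\mathbf{b} = U^\top\Sigma_\tau\mathbf{u}$, $C = U^\top\Sigma_\tau U$, and $\mathbf{z} = U^\top\mathbf{d}$. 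Because $Q$ is orthogonal, Fisher quantities are invariant under this change of basis. The argument then reduces to two computations in these coordinates, followed by a block inversion.

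\textbf{Step 1: the two SNRs.} For the uniform average, $\bar L = m^{-1/2}\mathbf{u}^\top\boldsymbol{\tau}$, and SNR is scale invariant. Hence
\[
\mathrm{SNR}^2(\bar L) = \frac{(\mathbf{u}^\top\mathbf{d})^2}{\mathbf{u}^\top\Sigma_\tau\mathbf{u}} = \frac{s^2}{a}.
\]
For the optimal score, the Fisher criterion $(\mathbf{w}^\top\mathbf{d})^2/(\mathbf{w}^\top\Sigma_\tau\mathbf{w})$ is maximised at $\mathbf{w}_\star \propto \Sigma_\tau^{-1}\mathbf{d}$, by Cauchy--Schwarz in the $\Sigma_\tau$ inner product. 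The maximal value is $\mathbf{d}^\top\Sigma_\tau^{-1}\mathbf{d}$. By orthogonal invariance this equals
\[
(s,\mathbf{z}^\top)\begin{pmatrix} a & \mathbf{b}^\top\\ \mathbf{b} & C\end{pmatrix}^{-1}\begin{pmatrix} s\\ \mathbf{z}\end{pmatrix}.
\]

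\textbf{Step 2: block inversion.} I will use the Schur complement $S := a - \mathbf{b}^\top C^{-1}\mathbf{b} = a(1-\theta)$. Since $\Sigma_\tau \succ 0$, both $C \succ 0$ and $S > 0$, so $\theta < 1$ and all divisions are legitimate. The standard completion-of-squares identity gives
\[
\mathbf{d}^\top\Sigma_\tau^{-1}\mathbf{d} = \mathbf{z}^\top C^{-1}\mathbf{z} + \frac{(s - \mathbf{b}^\top C^{-1}\mathbf{z})^2}{S}.
\]
Dividing by $s^2/a$ (which is nonzero because $s \neq 0$) gives the first term $a\,\mathbf{z}^\top C^{-1}\mathbf{z}/s^2 = \chi^2$. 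The second term is
\[
\frac{a\,s^2(1-t)^2}{s^2\,a(1-\theta)} = \frac{(1-t)^2}{1-\theta},
\]
which is exactly \eqref{eq:fisher-gap-exact}. This completion-of-squares step is the only nontrivial algebra, and I expect it to be the main place where care is needed: the signs and normalisations of $t$ and $\theta$ have to come out exactly. I would verify it by writing $\Sigma^{-1}$ in block form with $S^{-1}$ in the top-left entry and $-S^{-1}C^{-1}\mathbf{b}$ off the diagonal.

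\textbf{Step 3: bounds and the asymptotic claim.} Apply Cauchy--Schwarz in the $C^{-1}$ inner product:
\[
|\mathbf{b}^\top C^{-1}\mathbf{z}| \le \sqrt{\mathbf{b}^\top C^{-1}\mathbf{b}}\,\sqrt{\mathbf{z}^\top C^{-1}\mathbf{z}}.
\]
Dividing by $|s|$ and inserting the normalisations gives $|t| \le \sqrt{a\theta}\,\sqrt{\chi^2 s^2/a}\,/\,|s| = \sqrt{\theta}\,\chi$.

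From this the upper bound follows:
\[
\text{ratio} \le \chi^2 + \frac{(1+\sqrt{\theta}\,\chi)^2}{1-\theta}.
\]
For the lower bound, the ratio is always at least $1$ by optimality of $\mathbf{w}_\star$.

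When $\theta, \chi \ll 1$, expand $(1-\theta)^{-1} = 1 + \mathcal{O}(\theta)$ and use $2|t| + t^2 \le 2\sqrt{\theta}\,\chi + \theta\chi^2$ together with $2\sqrt{\theta}\,\chi \le \theta + \chi^2$ (AM--GM). The ratio is therefore $1 + \mathcal{O}(\chi^2 + \theta)$, so uniform averaging is Fisher-near-optimal.
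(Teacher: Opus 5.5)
Your proposal is correct and follows the paper's proof essentially step for step: the orthogonal level/shape basis, the Schur-complement block inversion giving $\mathbf{d}^\top\Sigma_\tau^{-1}\mathbf{d} = \mathbf{z}^\top C^{-1}\mathbf{z} + (s-\mathbf{b}^\top C^{-1}\mathbf{z})^2/\bigl(a(1-\theta)\bigr)$, division by $\mathrm{SNR}^2(\bar{L}) = s^2/a$, and Cauchy--Schwarz in the $C^{-1}$ inner product to get $|t|\le\sqrt{\theta}\,\chi$. Two parts of your write-up go slightly beyond the paper: the explicit AM--GM argument for the $1+\mathcal{O}(\chi^2+\theta)$ reduction, which the paper only asserts, and the lower bound of $1$ from optimality of $\mathbf{w}_\star$, which the paper omits.
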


Thm.~\ref{thm:uniform-optimal} is an \textbf{exact identity} for arbitrary within-class covariance.
The Fisher identity does not prescribe uniform averaging universally; rather, it provides \textbf{three directly estimable diagnostics} ($\theta$, $\chi$, $t$) under which uniform averaging is certified to lose little relative to the population-optimal linear score.
The diagnostics have transparent interpretations:
$\chi$~measures whether the shape coordinate vector~$\mathbf{z}$ falls on discriminative directions of~$C$;
$\theta$~quantifies whether shape coordinates serve as control variates for the level mean;
and $t$~captures the alignment of~$\mathbf{b}$ and~$\mathbf{z}$ under the $C^{-1}$-inner product.
Empirical evaluation of these diagnostics appears in~\S\ref{sec:exp-ablation} and Appendix~\ref{app:snr-empirical}.
A sufficient condition for near-optimality is $\theta \ll 1$ and $\chi \ll 1$; in the measured regimes, however, both are moderate.
Near-optimality is instead certified directly by the exact identity, because the cross term~$t$ nearly saturates its Cauchy--Schwarz upper bound, yielding systematic cancellation in the $(1-t)^2/(1-\theta)$ term (Appendix~\ref{app:snr-empirical}).
Crucially, any Fisher gain over uniform averaging is a \emph{population} upper bound that assumes exact knowledge of $\Sigma_\tau$ and~$\mathbf{d}$; in practice, realising it would require estimating and inverting the full $m \times m$ within-class covariance at $\mathcal{O}(Nm^2 + m^3)$ cost, and the resulting estimation variance is expected to exceed the marginal oracle benefit (detailed in Appendix~\ref{rem:why-uniform}).
The aggregation rule is \textbf{parameter-free} and requires no covariance estimation, transferring across models and datasets without refitting the aggregation weights; the per-layer probes are still trained within each training fold (\S\ref{sec:exp-setup}).
Under these conditions, \thistool{} reduces to a single scalar:
\begin{equation}
  \label{eq:detector}
  \bar{L} \;=\; \frac{1}{m}\sum_{l=0}^{m-1} L_l\,,
\end{equation}
fed to a logistic regression classifier.
By restricting the meta-classifier to an affine mapping, we ensure that performance gains arise intrinsically from the geometric properties of the trajectory rather than the capacity of the classifier.
The remaining trajectory diagnostics serve exclusively as ablation controls for verifying the SNR prediction.
The SNR theory covers centred linear filters such as slope and zone contrast, while nonlinear high-pass summaries are included as empirical ablation controls. Exact definitions of all five diagnostics appear in Appendix~\ref{app:features}. We also provide an approximate sufficiency argument under a Gaussian increment model motivating this diagnostic set in Appendix~\ref{app:proof}.

\subsection{Algebraic Decomposition of Trajectory Increments}
\label{sec:decomp}

While the SNR analysis predicts that $\bar{L}$ is near-optimal, it does not reveal the algebraic structure by which depth averaging concentrates the class signal.
Each discrete increment $\Delta L_l = L_{l+1} - L_l$ admits an exact additive decomposition $\Delta L_l = W_l + K_l$ (Proposition~\ref{thm:decomp}, Appendix~\ref{app:hd-geometry}), where the \emph{intrinsic displacement}~$W_l$ evaluates the frozen layer-$l$ probe on adjacent representations, and the \emph{readout change}~$K_l$ absorbs both the probe-frame rotation and the head-switch contribution.
Under the chosen readout gauge, $W_l$ isolates the empirically higher-SNR component of the class-conditional drift, while $K_l$ projects onto a near-orthogonal, noisier direction that nonetheless retains correlated class information (Property~\ref{asm:manifold}).
Depth averaging primarily suppresses the idiosyncratic, frame-specific component of~$K$ while preserving the shared class signal carried by both components, providing an algebraic account consistent with the observed SNR dominance of the mean.
A four-way decomposition of~$K_l$ (Lemma~\ref{lem:k-decomp}, Appendix~\ref{app:readout_mismatch}) and the source-level feature ablation (Tables~\ref{tab:ablation_combined} and~\ref{tab:k-decomp-auroc}) confirm this interpretation.

\section{Experimental Validation}
\label{sec:experiments}

\begin{table*}[t]
\centering
\caption{%
  \textbf{Hallucination detection results (AUROC / AUPRC, \%) across five benchmarks and six LLMs.}
  \textbf{Bold}: best; \underline{underline}: second best.
  $\pm$: std.\ over 5-fold CV.
}
\label{tab:main-results}
\renewcommand{\arraystretch}{1.20}
\setlength{\tabcolsep}{3.2pt}
\resizebox{\textwidth}{!}{%
\begin{tabular}{@{} l l  cc  cc  cc  cc  cc  cc @{}}
\toprule
& & \multicolumn{2}{c}{\textbf{Qwen2.5-7B}}
& \multicolumn{2}{c}{\textbf{Qwen2.5-14B}}
& \multicolumn{2}{c}{\textbf{Qwen2.5-32B}}
& \multicolumn{2}{c}{\textbf{Qwen2.5-72B}}
& \multicolumn{2}{c}{\textbf{LLaMA2-7B}}
& \multicolumn{2}{c}{\textbf{LLaMA3.1-8B}} \\
\cmidrule(lr){3-4}\cmidrule(lr){5-6}\cmidrule(lr){7-8}\cmidrule(lr){9-10}\cmidrule(lr){11-12}\cmidrule(lr){13-14}
\textbf{Dataset} & \textbf{Method}
& AUROC & AUPRC & AUROC & AUPRC & AUROC & AUPRC & AUROC & AUPRC & AUROC & AUPRC & AUROC & AUPRC \\
\midrule
\multirow{5}{*}{\rotatebox[origin=c]{90}{\textsc{TruthfulQA}}}
& HalluGuard  & 60.57 & 31.95 & 59.84 & 35.33 & 52.25   & 18.21   & 53.96   & 16.81   & 54.98 & 46.25 & 51.60 & 42.96 \\
& Fact-Probe  & 67.26 & 48.40 & 60.30 & 22.00 &  75.53   & 40.14  & 72.24   & 32.22   & 69.20 & 62.98 & 66.07 & 59.33 \\
& ITI-Probe   & 64.46 & 44.71 & 69.92 & 42.95 & \second{74.57} & \second{42.72} & 70.51 & \second{37.33} & 64.90 & 56.09 & 63.43 & 57.04 \\
& IRIS        & \second{68.44} & \second{50.10} & \second{72.54} & \second{43.51} & 74.29 & 40.63 & \second{74.28} & 32.66 & \second{71.12} & \second{60.64} & \second{65.04} & \second{58.17} \\
& \thistool{}
  & \best{68.71}$_{\pm0.3}$ & \best{50.50}$_{\pm0.4}$
  & \best{73.65}$_{\pm0.4}$ & \best{46.76}$_{\pm0.5}$
  & \best{80.42}$_{\pm0.3}$  & \best{49.08}$_{\pm0.3}$
  & \best{79.83}$_{\pm0.1}$  & \best{45.77}$_{\pm0.3}$
  & \best{71.54}$_{\pm0.2}$ & \best{63.51}$_{\pm0.1}$
  & \best{69.01}$_{\pm0.1}$ & \best{63.76}$_{\pm0.1}$ \\
\midrule

\multirow{5}{*}{\rotatebox[origin=c]{90}{\textsc{True-False}}}
& HalluGuard  & 52.17 & 52.13 & 50.38 & 50.23 & 58.55 & 56.40 & 57.53   & 61.21   & 50.84 & 50.77 & 51.11 & 49.78 \\
& Fact-Probe  & 88.95 & 88.35 & 57.78 & 53.24 & 61.98 & 61.30   & 62.17	& 61.33   & 52.66 & 52.48 & 54.10 & 53.68 \\
& ITI-Probe   & \second{96.98} & \second{96.76} & \second{98.08} & \second{97.87} & \second{98.05} & \second{97.98} & 95.51 & 94.95 & \second{94.59} & \second{94.19} & \second{96.97} & \second{96.69} \\
& IRIS        & 95.36 & 94.41 & 94.86 & 90.38 & 97.78 & 97.53 & \second{98.56} & \second{98.33} & 93.94 & 93.49 & 96.73 & 96.42 \\
& \thistool{}
  & \best{97.59}$_{\pm0.3}$ & \best{97.49}$_{\pm0.3}$
  & \best{98.40}$_{\pm0.3}$ & \best{98.23}$_{\pm0.3}$
  & \best{98.54}$_{\pm0.3}$ & \best{98.45}$_{\pm0.3}$
  & \best{98.96}$_{\pm0.2}$ & \best{98.77}$_{\pm0.5}$
  & \best{95.34}$_{\pm0.1}$ & \best{95.00}$_{\pm0.1}$
  & \best{97.44}$_{\pm0.4}$ & \best{97.26}$_{\pm0.6}$ \\
\midrule
\multirow{5}{*}{\rotatebox[origin=c]{90}{\textsc{HaluEval2}}}
& HalluGuard  & 50.46 & 45.91 & 53.19 & 47.73 & 51.29 & 47.46 & 52.66 & 49.12 & 50.48 & 45.63 & 50.61 & 46.26 \\
& Fact-Probe  & 57.13 & 53.73 & 58.20 & 54.49 &  62.37 & 57.73  & 62.83 & 59.60   & 58.63 & 55.61 & 59.71 & 56.26 \\
& ITI-Probe   & \second{81.42} & \second{76.31} & \second{82.34} & \second{76.37} & \second{82.98} & 76.87 & 85.03 & 79.69 & \second{79.11} & \second{74.10} & 82.24 & \second{77.39} \\
& IRIS        & 79.96 & 74.68 & 81.78 & 74.53 & 81.83 & \second{77.98} & \second{85.65} & \second{80.38} & 78.86 & 74.31 & \second{81.43} & 75.59 \\
& \thistool{}
  & \best{82.42}$_{\pm0.8}$ & \best{78.14}$_{\pm0.3}$
  & \best{83.88}$_{\pm0.3}$  & \best{79.11}$_{\pm0.3}$
  & \best{84.53}$_{\pm0.8}$ & \best{79.39}$_{\pm1.3}$
  & \best{86.61}$_{\pm0.9}$ & \best{81.64}$_{\pm0.5}$
  & \best{80.68}$_{\pm0.5}$ & \best{76.25}$_{\pm0.9}$
  & \best{83.85}$_{\pm0.4}$ & \best{79.46}$_{\pm0.1}$ \\
\midrule
\multirow{5}{*}{\rotatebox[origin=c]{90}{\textsc{HELM}}}
& HalluGuard  & 52.18 & 51.61 & 51.89 & 49.08 & 51.56 & 55.09 & 57.26 & 49.08 & 51.61 & 54.27 & 54.31 & 56.72 \\
& Fact-Probe  & 59.14 & 59.89 & 57.78 & 59.55 & 67.76	& 67.00   & 68.41 & 67.78 & 57.53 & 59.40 & 58.14 & 58.94 \\
& ITI-Probe   & 85.45 & 86.48 & 85.54 & 85.51 & 86.07 & 86.49 & 86.33 & 86.65 & 84.25 & 84.50 & \second{80.38} & \second{80.05} \\
& IRIS        & \second{86.08} & \second{86.49} & \second{87.40} & \second{87.33} & \second{87.45} & \second{87.82} & \second{88.92} & \second{89.13} & \second{87.15} & \second{86.69} & 79.51 & 80.38 \\
& \thistool{}
  & \best{87.84}$_{\pm0.5}$ & \best{88.50}$_{\pm0.9}$
  & \best{88.34}$_{\pm0.4}$  & \best{89.15}$_{\pm0.2}$
  & \best{88.59}$_{\pm1.3}$ & \best{89.23}$_{\pm0.4}$
  & \best{89.33}$_{\pm0.4}$ & \best{89.45}$_{\pm0.2}$
  & \best{87.65}$_{\pm0.1}$ & \best{88.06}$_{\pm0.4}$
  & \best{88.73}$_{\pm0.4}$             & \best{89.21}$_{\pm0.4}$ \\
\midrule
\multirow{5}{*}{\rotatebox[origin=c]{90}{\textsc{Agentic}}}
& HalluGuard  & 55.13 & 31.70 & 63.66 & 35.52 & 62.08   & 21.43   & 75.62   & 58.35   & 54.41 & 40.35 & 58.71 & 48.12 \\
& Fact-Probe  & \second{89.81} & 52.01 & 81.86 & 77.73 & 70.59 & \second{58.82} & 82.74 & 79.96 & 67.89 & 55.29 & 71.76 & 50.53 \\
& ITI-Probe   & 81.02 & 53.46 & 82.57 & 79.79 & 72.01 & 58.12 & \second{86.16} & \second{85.16} & 63.87 & 58.19 & 70.37 & 42.79 \\
& IRIS        & 74.31 & \second{59.75} & \second{87.57} & \second{80.42} & \second{79.69} & 56.85 & 84.51 & 81.91 & \second{76.64} & \second{66.93} & \second{70.33} & 41.22 \\
& \thistool{}
  & \best{94.91}$_{\pm1.3}$ & \best{82.98}$_{\pm0.4}$
  & \best{93.57}$_{\pm0.2}$ & \best{90.68}$_{\pm0.5}$
  & \best{96.05}$_{\pm0.2}$ & \best{88.93}$_{\pm0.3}$
  & \best{97.86}$_{\pm0.8}$ & \best{94.94}$_{\pm0.3}$
  & \best{82.91}$_{\pm0.1}$ & \best{75.86}$_{\pm0.3}$
  & \best{94.91}$_{\pm0.8}$ & \best{81.64}$_{\pm0.3}$ \\
\bottomrule
\end{tabular}%
}
\vspace{-5mm}
\end{table*}

\subsection{Experimental Setup}
\label{sec:exp-setup}
\textbf{Datasets and Models.}
We evaluate \thistool{} on five benchmark datasets spanning distinct phenomenological categories of hallucination and factuality:
TruthfulQA~\citep{TruthfulQA}, TrueFalse~\citep{truefalse},
HaluEval2~\citep{halueval2}, HELM~\citep{helm},
and Agentic~\citep{agentic}.
To ensure generality across parameter scales and architectural families, we employ six large language models spanning two families:
LLaMA-2-7B and Meta-LLaMA-3.1-8B from the LLaMA series, and Qwen2.5-7B, 14B, 32B, and 72B from the Qwen2.5 series.

\textbf{Baselines.}
We compare against three internal-state baselines that share our information budget:
Fact-Probe~\citep{fact-probe} trains a linear classifier on a single layer's hidden state,
ITI-Probe~\citep{iti} selects the single best attention head across all layers via probing,
and IRIS~\citep{iris} classifies the last-layer representation with an MLP.
We also include HalluGuard~\citep{HalluGuard}, a post-generation detector that scores the full generated response via NTK analysis; comparisons should be read as cross-regime, since it operates under a richer information budget.

\textbf{Implementation Details.}
All models are loaded in FP32 precision on NVIDIA H200 GPUs.
We use stratified 5-fold cross-validation with group-disjoint splits (no question-level overlap between folds).
All probe training, per-layer normalisation (zero-mean, unit-variance using training-fold statistics only), head selection, feature extraction, and classifier fitting are performed strictly within each training fold; no information from test samples influences any pipeline component.
\thistool{} operates on a single scalar: the trajectory mean $\bar{L}$ (\S\ref{sec:detector}); four shape statistics are retained only as ablation controls.
We report AUROC and AUPRC as primary metrics. Probe training details and evaluation protocol are in Appendix~\ref{app:implementation}. All complete feature definitions are in Appendix~\ref{app:features}.

\subsection{Main Comparison: Dynamic versus Static Detection}
\label{sec:exp-main}
Table~\ref{tab:main-results} reports the AUROC and AUPRC of \thistool{} and four baselines across six models and five benchmarks.
First, \thistool{} uniformly achieves the highest AUROC and AUPRC among all matched-budget baselines, with AUROC gains ranging from one to fourteen points depending on the benchmark.
Single-layer white-box methods (ITI-Probe, Fact-Probe, IRIS) achieve strong performance on individual datasets but remain limited by reading a fixed depth; depth aggregation consistently recovers additional discriminative signal.
Second, the advantage is most pronounced on complex sequential tasks.
On the Agentic benchmark, the best single-layer baseline (ITI-Probe) achieves 86.16\% AUROC and 85.16\% AUPRC for Qwen2.5-72B, whereas \thistool{} reaches 97.86\% AUROC and 94.94\% AUPRC,
yielding an 11.70-point AUROC improvement and a 9.78-point AUPRC improvement.
Third, HalluGuard is included only as a cross-regime reference. Its lower scores under our evaluation should not be interpreted as a direct failure of post-generation detection, since it operates with a different information budget and objective.
These results provide direct empirical support for the depth aggregation argument (Remark~\ref{prop:ensemble}): the trajectory mean captures a dominant class-conditional signal present at every depth, consistent with the SNR prediction that $\bar{L}$ is near-optimal under the level-dominant regime.

\begin{table}[t]
    \centering
    \caption{%
      Validation of structural hypotheses (mid-$60\%$ layer averages, $n > 10{,}000$).
      \textbf{Signal \& Decomposition}: class separation, variance ratio, and covariance distance of the intrinsic increment $W_l$ (\S\ref{sec:decomp}).
      \textbf{Geometric Invariants}: probe--mean-shift alignment, probe energy fraction, and adjacent-probe cosine $|\cos|$ testing Property~\ref{asm:manifold} ($t$-test: $p = 0.60$ for Llama-3.1-8B, consistent with random baseline).
      \textbf{Probe-Free Direct}: raw mean-shift norm, ~\citet{chen2010two} energy ratio, and layer-wise significance count (no trained probe required).
    }
    \label{tab:validation_results}
    \small
    \vspace{2pt}
    \setlength{\tabcolsep}{2.5pt}
    \begin{tabular}{l ccc| cccc |cc c}
        \toprule
        & \multicolumn{3}{c}{\textbf{Signal \& Decomposition}} & \multicolumn{4}{c}{\textbf{Geometric Invariants}} & \multicolumn{3}{c}{\textbf{Probe-Free Direct}} \\
        \cmidrule(lr){2-4} \cmidrule(lr){5-8} \cmidrule(lr){9-11}
        \textbf{Model}
        & Cohen's $d$ & $r_w$ & $\|\mathrm{dCov}\|$
        & $|\langle \mathbf{v}, \hat{\boldsymbol{\mu}}^{\Delta}\rangle|$
        & $v^{2}_{\mathrm{Top90}}$
        & $E(v)/\mathrm{Tr}$
        & $|\cos|$
        & $\|\boldsymbol{\mu}^{\Delta}\|_{2}$
        & $T_{\mathrm{CQ}}/\mathrm{Tr}$
        & CQ sig. \\
        \midrule
        LLaMA-2-7B   & 0.546 & 1.135 & 0.551 & 0.191 & 0.21 & 0.35\% & 0.09  & 0.092 & 3.46\%  & 20/20 \\
        LLaMA-3.1-8B & 0.666 & 1.382 & 0.610 & 0.184 & 0.16 & 0.31\% & 0.07  & 0.074 & 8.68\%  & 20/20 \\
        Qwen2.5-7B   & 0.513 & 1.202 & 0.559 & 0.136 & 0.12 & 0.26\% & 0.06  & 0.324 & 6.86\%  & 18/18 \\
        Qwen2.5-14B  & 0.597 & 1.150 & 0.503 & 0.153 & 0.15 & 0.27\% & 0.05  & 0.156 & 6.41\%  & 30/30 \\
        Qwen2.5-32B  & 0.531 & 1.060 & 0.475 & 0.172 & 0.14 & 0.31\% & 0.06  & 0.236 & 5.99\%  & 39/40 \\
        Qwen2.5-72B  & 0.520 & 1.210 & 0.555 & 0.377 & 0.29 & 0.36\% & 0.07  & 0.182 & 4.29\%  & 46/48 \\
        \bottomrule
    \end{tabular}
    \vspace{-4mm}
\end{table}

\subsection{Empirical Validation of Geometric Assumptions}
\label{sec:exp-geometry}

The detection results above demonstrate what trajectory analysis achieves. We now validate why it works by testing the two structural assumptions from~\S\ref{sec:physics}.

\textbf{Sparse Signal Validation.}
Assumption~\ref{asm:sparse} requires that each probe direction 
$\mathbf{v}_l$ captures no more variance than a uniformly random 
direction ($1/d_h \approx 0.78\%$ for $d_h = 128$).
Table~\ref{tab:validation_results} confirms this across all six 
models via two complementary diagnostics.
First, the fractional energy 
$\mathbf{v}_l^{\!\top}\bar{\Sigma}_l\,\mathbf{v}_l / 
\mathrm{Tr}(\bar{\Sigma}_l)$ remains below $0.37\%$, well under 
the random baseline, and the projection onto the top-$90\%$ 
variance subspace ($v^{2}_{\mathrm{Top90}}$) stays below $0.22$ 
for five of six models.
Second, a probe-free analysis independently corroborates the 
sparsity finding.
The~\citet{chen2010two} high-dimensional two-sample test rejects equal class means
at virtually all intermediate layers, yet the bias-corrected energy ratio
$T_{\mathrm{CQ}}/\mathrm{Tr}(\Sigma)$ remains below $9\%$, confirming that
separation is genuine but low-energy.

\textbf{Probe Isotropy Validation.}
Property~\ref{asm:manifold} predicts that adjacent probe directions are near-orthogonal, with expected absolute cosine $\sqrt{2/(\pi\, d_h)} \approx 0.071$ for $d_h = 128$.
In Table~\ref{tab:validation_results}, the measured cosines $|\langle \mathbf{v}_l, \mathbf{v}_{l+1}\rangle|$ across all consecutive layer pairs fall in $[0.05, 0.09]$, closely tracking this prediction. A formal t-test also confirms no significant deviation from the random baseline.
This near-orthogonality, together with the residual ACF analysis after common-mode removal (Table~\ref{tab:variance-decomp}), supports the weak-residual-correlation mechanism underlying depth averaging (Remark~\ref{prop:ensemble}).

\subsection{Ablation Studies}
\label{sec:exp-ablation}

\textbf{Readout Robustness.}
Before analysing trajectory-level statistics, we verify that the per-layer coordinate $L_l$ is robust to the choice of extraction strategy.
Three complementary ablations address spatial pooling, readout dimensionality, and head selection respectively (Appendices~\ref{app:pooling},~\ref{app:readout-ablation}, and~\ref{app:head-uniformity}).
First, among eight candidate spatial pooling strategies, the answer-onset readout yields the highest AUROC, with performance degrading monotonically as the pooling window extends beyond position~$t^{\star}$.
Second, substituting full residual-stream probes ($d = d_{\mathrm{model}}$) for per-head probes produces near-equivalent trajectory-mean AUROC ($|\Delta| < 0.5$\,pp on most datasets).
This confirms that the sparse truthfulness signal is recoverable from multiple readout subspaces, and that the trajectory geometry is not an artefact of the head-level extraction.
Third, within-layer head selection exhibits low sensitivity (coefficient of variation ${<}\,0.08$), consistent with the interpretation that attention heads within the same layer provide highly correlated projections of the same low-energy signal.
These results collectively indicate that the critical design choice is not the readout mechanism at each layer, but rather the aggregation strategy across layers.
The following ablation tests this prediction directly.

\textbf{Mean Sufficiency.}
The SNR analysis (Proposition~\ref{prop:snr}) and the Fisher decomposition (Theorem~\ref{thm:uniform-optimal}) jointly predict that the trajectory mean $\bar{L}$ is a near-optimal detection statistic among all linear aggregations, and that centred linear filters such as slope and zone contrast carry negligible additional power.
Nonlinear high-pass summaries (total variation and range) are included as empirical controls that lie outside the linear theory.
Comprehensive numerical results across six models and three datasets appear in Appendix~\ref{app:snr-empirical}.
We verify this prediction at two levels of granularity.

At the trajectory level, Table~\ref{tab:ablation_combined}(a) confirms this prediction: $\bar{L}$ alone matches the full 5D diagnostics ($|\Delta| < 0.5$\,pp across three datasets), whereas removing it collapses AUROC by more than 19\,pp.
Even when a centred statistic has comparable marginal SNR on some datasets (Appendix~\ref{app:snr-empirical}), it provides little incremental AUROC once the trajectory mean is included.
Replacing the mean with the trajectory median yields a consistent but negligible loss ($\leq 0.2$\,pp), indicating that the per-layer noise is symmetric and light-tailed; a full 3-model ablation including a trimmed mean is reported in Appendix~\ref{app:mean-vs-median}.
Figure~\ref{fig:depth-subsampling} provides complementary validation of the depth aggregation prediction: AUROC increases monotonically with the number of included layers with diminishing returns beyond $m_{\mathrm{eff}} = 8$, consistent with the $\sigma_\varepsilon^2 / m_{\mathrm{eff}}$ variance reduction predicted by Prop.~\ref{prop:snr}.

\begin{figure}[t]
\centering
\includegraphics[width=0.95\linewidth]{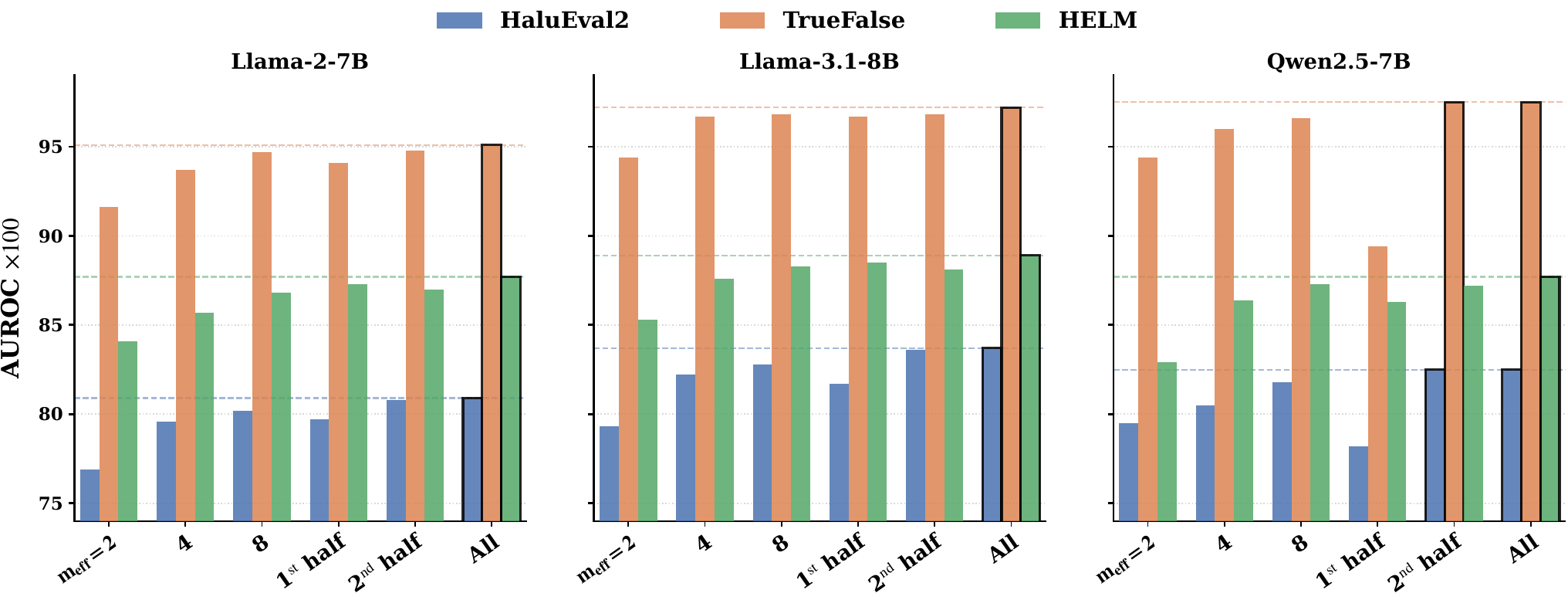}
\caption{\textbf{Depth subsampling ablation.} AUROC as a function of the number of included layers $m_{\mathrm{eff}}$ across three models and three datasets (averaged over 5 seeds).}
\label{fig:depth-subsampling}
\vspace{-5mm}
\end{figure}

At the source level, Table~\ref{tab:ablation_combined}(b) reports an ablation in the 13D decomposed feature space.
The State group (3D), which contains low-frequency trajectory anchors such as zone-level means and the endpoint difference, alone recovers the full 13D AUROC to within 0.5\,pp, whereas the Friction group (2D) is insufficient in isolation.
This confirms that the depth-averaged mean dominates detection-relevant information across both the trajectory-level and the source-level representations.

\textbf{Mechanistic Decomposition.}
The $W$/$K$ decomposition (\S\ref{sec:decomp}) provides an algebraic account of \emph{why} depth averaging concentrates the class signal; we now verify its predictions empirically.
Reconstructing trajectory-level diagnostics from the decomposed source features closely recovers the trajectory-level AUROC on core benchmarks, with the State group dominating the discriminative signal and the Kinematic group contributing correlated but lower-SNR class information, consistent with the mechanistic model (Table~\ref{tab:ablation-12d-22d}; Appendix~\ref{app:snr-empirical}).
A four-way decomposition of $K_l$ via Lemma~\ref{lem:k-decomp} further indicates that probe-drift accounts for the majority of the discriminative content in $K$, while head-switch and metric-drift effects contribute primarily nuisance variance (Table~\ref{tab:k-decomp-auroc}; Appendix~\ref{app:readout_mismatch}).

\begin{table}[t]
\centering
\caption{\textbf{Mean-dominance ablation} on Llama-3.1-8B (AUROC, averaged over 5 runs; std $\leq$ 0.05\,pp).
\textbf{(a)}~Trajectory-level diagnostics: $\bar{L}$ uses the depth-averaged mean only, median replaces the mean with the trajectory median, +shape appends four shape statistics to $\bar{L}$ and shape only uses the four shape statistics.
\textbf{(b)}~Source-level diagnostics: Full uses all 13 decomposed features; State (3D), Kinematic (5D), Friction (2D), and Output (3D) denote individual feature groups used in isolation.}
\label{tab:ablation_combined}
\small
\setlength{\tabcolsep}{3.5pt}
\begin{minipage}[t]{0.46\textwidth}
\vspace{2pt}
\centering
\textbf{(a) Trajectory Diagnostics}\\
\begin{tabular}{lcccc}
\toprule
 & \textbf{$\bar{L}$} & \textbf{median} & \textbf{+shape} & {\textbf{shape only}} \\
\midrule
HaluEval2  & \best{83.9} & \second{83.8} & 83.8 & {51.5} \\
TrueFalse & \best{97.4} & \second{97.3} & 97.3 & {78.0} \\
HELM      & \second{88.7} & 88.5 & \best{88.8} & {55.6} \\
\bottomrule
\end{tabular}
\end{minipage}%
\hfill
\begin{minipage}[t]{0.52\textwidth}
\vspace{2pt}
\centering
\textbf{(b) Source-Level Diagnostics}\\
\begin{tabular}{lccccc}
\toprule
 & \textbf{Full} & \textbf{State} & \textbf{Kinem.} & \textbf{Frict.} & \textbf{Output} \\
\midrule
HaluEval2  & \best{83.4} & \second{83.3} & 82.3 & {65.3} & 82.0 \\
TrueFalse & \best{97.1} & \second{97.2} & 96.7 & {88.5} & 96.0 \\
HELM      & \best{88.2} & \second{87.7} & 86.0 & {62.4} & 86.8 \\
\bottomrule
\end{tabular}
\end{minipage}
\vspace{-4mm}
\end{table} 

\section{Conclusion}
\label{sec:conclusion}

We introduced \thistool{}, a pre-decoding hallucination detector that replaces layer selection with depth aggregation.
The key insight is geometric: per-layer truthfulness probes are low-energy and nearly orthogonal, so their depth-averaged mean suppresses layer-specific noise
while a covariance-aware Fisher identity certifies that, in the evaluated regimes, population-optimal linear reweighting would add only two to four percent in SNR over this parameter-free statistic.
Across six LLMs and five benchmarks, \thistool{} consistently outperforms matched pre-decoding baselines, with the largest gains on challenging tasks.
Because the detector reads propensity signals at the answer-onset position, it should be paired with generation-time monitors for hallucinations introduced during decoding.
Promising extensions include streaming trajectory monitors that update risk estimates during token generation and training-time objectives that optimise for factual trajectory stability.

\bibliographystyle{plainnat}
\bibliography{refs}

\appendix

\section{Implementation Details}
\label{app:implementation}

This section provides the exact theoretical and engineering specifications for constructing the truthfulness observable $\phi_{l}$ introduced in~\S\ref{sec:method}.

\paragraph{Feature Summary.}
The detector operates on a single scalar feature: the trajectory mean $\bar{L}$ (\S\ref{sec:detector}).
Four additional trajectory shape statistics (slope, zone shift, total variation, range) are retained exclusively as ablation controls to verify the SNR prediction of mean dominance.
The source-level diagnostics (\S\ref{sec:decomp}) yield a 13-dimensional vector for mechanistic verification.
The full trajectory baseline retains the raw per-layer logit values, their first differences, and summary statistics ($2m+5$ dimensions, model-dependent; exact construction in~\S\ref{app:features}).

\paragraph{Capacity and Generalisation.}
Although the downstream classifier is a two-parameter affine map over $\bar{L}$, the full pipeline trains $m$ layer-wise probes (each with $d_h + 1$ parameters). The information bottleneck at the scalar trajectory mean substantially constrains effective model capacity, yet does not \emph{a priori} exclude all forms of shortcut learning.
We therefore complement the capacity argument with cross-domain evaluation across 18 topically disjoint sub-datasets (spanning five HaluEval2 domains, seven TrueFalse entity categories, and six HELM source models) and probe-free geometric verification (see also~\S\ref{app:probe-shortcut}).

\subsection{Probe Training and Head Selection}
\label{app:probe-training}

For an autoregressive Transformer comprising $m$ layers and $K$ attention heads, let $\mathbf{h}_{l,k} \in \mathbb{R}^{d_{h}}$ denote the internal activation at the answer-onset position $t^{\star}$ (the final prompt token) for layer~$l$ and head~$k$, where $d_{h}$ is the head dimension (typically $d_h=128$).

\paragraph{Supervised Projection.}
At each distinct spatial coordinate $(l, k)$, we optimise an $\ell_1$-regularised logistic regression hyperplane separating a balanced, held-out training corpus of factual and hallucinated latent states. Before optimisation, the empirical activations are strictly standardised with respect to the training partition statistics:
\begin{equation}
\label{eq:standardize}
  \mathbf{z}_{l,k} = \operatorname{diag}(\boldsymbol{\sigma}_{l,k})^{-1} \bigl(\mathbf{h}_{l,k} - \boldsymbol{\mu}_{l,k}\bigr),
\end{equation}
where $\boldsymbol{\mu}_{l,k}$ and $\boldsymbol{\sigma}_{l,k}$ denote the sample mean and standard deviation vectors respectively.
In the standardised coordinate $\mathbf{z}_{l,k}$, the classifier converges to an affine score
\begin{equation}
  \label{eq:standardized-probe}
  \widetilde{\phi}_{l,k}(\mathbf{z})
  =
  \widetilde{\mathbf{w}}_{l,k}^{\!\top}\mathbf{z}
  +
  \widetilde{b}_{l,k}.
\end{equation}
Substituting Eq.~\eqref{eq:standardize} into Eq.~\eqref{eq:standardized-probe} first gives the raw-space affine score
\begin{equation}
  \label{eq:raw-space-probe}
  \widehat{\phi}_{l,k}(\mathbf{a})
  =
  \widehat{\mathbf{w}}_{l,k}^{\!\top}\mathbf{a}
  +
  \widehat{b}_{l,k},
  \qquad
  \widehat{\mathbf{w}}_{l,k}
  =
  \operatorname{diag}(\boldsymbol{\sigma}_{l,k})^{-1}
  \widetilde{\mathbf{w}}_{l,k},
  \qquad
  \widehat{b}_{l,k}
  =
  \widetilde{b}_{l,k}
  -
  \widehat{\mathbf{w}}_{l,k}^{\!\top}\boldsymbol{\mu}_{l,k}.
\end{equation}
For consistency with the unit-norm convention in Definition~\ref{def:truth-coord}, we then use the positively rescaled parametrisation
\begin{equation}
  \label{eq:raw-space-normalized-probe}
  \phi_{l,k}(\mathbf{a})
  =
  \mathbf{v}_{l,k}^{\!\top}\mathbf{a}
  +
  c_{l,k},
  \qquad
  \mathbf{v}_{l,k}
  =
  \widehat{\mathbf{w}}_{l,k}/\|\widehat{\mathbf{w}}_{l,k}\|_2,
  \qquad
  c_{l,k}
  =
  \widehat{b}_{l,k}/\|\widehat{\mathbf{w}}_{l,k}\|_2.
\end{equation}
This positive rescaling preserves the head-selection AUROC while fixing the geometric scale of the observable.
The layer-wise notation in Definition~\ref{def:truth-coord} is obtained only after selecting one head per layer:
$\mathbf{v}_{l} := \mathbf{v}_{l,k_l^*}$ and
$b_l := c_{l,k_l^*}$,
so that the selected observable $\phi_l \equiv \phi_{l,k_l^*}$ has the form
$\phi_l(\mathbf{a})=\mathbf{v}_l^{\!\top}\mathbf{a}+b_l$ used in Definition~\ref{def:truth-coord}.

\paragraph{Head Selection Protocol.}
Let $\mathcal{D}_{\mathcal{F}}$ and $\mathcal{D}_{\mathcal{H}}$ denote the training-fold activation populations for factual and hallucinated examples, respectively.
Inspired by the selection criterion in Inference-Time Intervention~\citep{iti}, we evaluate each head $(l, k)$ via its training-set AUROC separating the $\mathcal{D}_{\mathcal{F}}$ and $\mathcal{D}_{\mathcal{H}}$ populations.
For each layer $l$, a single head is selected:
\begin{equation}
  k_{l}^{*} = \arg\max_{k \in \{1, \dots, K\}} \operatorname{AUROC}(\phi_{l,k}),
\end{equation}
yielding $\phi_{l} \equiv \phi_{l, k_{l}^{*}}$.
We emphasise that this selection is an \emph{engineering convenience}, not a theoretical necessity: per-head AUROC is near-constant across all heads (CV~$< 0.08$; \S\ref{app:head-uniformity}), consistent with the signal sparsity hypothesis, and replacing head-level readouts with full residual-stream probes yields equivalent results (\S\ref{app:readout-ablation}).

\subsection{Baseline Implementation Details}
\label{app:baseline-impl}

All baselines share the same evaluation protocol as \thistool{}: stratified 5-fold cross-validation with group-disjoint splits, three random seeds, and per-fold Youden's $J$ thresholding on the training partition.
For \textbf{ITI-Probe}~\citep{iti}, we train a logistic regression classifier (\texttt{liblinear}, $C\!=\!1.0$) at every layer--head coordinate on the standardised activations (Eq.~\eqref{eq:standardize}) and select the single globally best head by training-set AUROC; an ensemble of three bootstrap probes reduces variance.
\textbf{Fact-Probe}~\citep{fact-probe} trains an $\ell_1$-regularised logistic regression on the full hidden state at the answer-onset position; we search over layer groups (sliding windows of size 1 and 5) and regularisation strengths ($C \!\in\! \{0.1, 0.5\}$), selecting the configuration that maximises inner-CV AUROC.
\textbf{IRIS}~\citep{iris} uses a four-layer MLP ($256 \!\to\! 128 \!\to\! 64 \!\to\! 2$, ReLU) trained with Adam on the last-layer hidden state concatenated across all heads, with soft bootstrapping loss ($\beta\!=\!0.8$) and early stopping (patience 5).
\textbf{HalluGuard}~\citep{HalluGuard} first generates a response, then computes per-token gradients of the log-probability with respect to the last Transformer block's parameters, forming a normalised Jacobian matrix $\mathbf{G} \!\in\! \mathbb{R}^{T \times P}$; the NTK Gram matrix $\mathbf{K}\!=\!\mathbf{G}\mathbf{G}^{\!\top}$ yields a detection score $\det(\mathbf{K}) + \log\sigma_{\max} - 2\log\kappa$, where $\sigma_{\max}$ is a Lipschitz proxy from hidden-state displacement ratios and $\kappa$ the condition number of $\mathbf{K}$. Because HalluGuard requires full generation, comparisons are cross-regime.

\section{Readout Position Ablation}
\label{app:pooling}

The logit trajectory~\eqref{eq:trajectory} is constructed from representations extracted at a single sequence position per layer.
Our primary experiments use the \emph{answer-onset position}~$t^{\star}$, defined as the final prompt token immediately preceding generation~(\S\ref{sec:logit}).
A natural question is whether broader spatial aggregation over the prompt could recover additional discriminative signal.
Table~\ref{tab:ablation_pooling} compares the answer-onset readout (Last-1) against seven alternatives: mean pooling over the last $k \in \{4, 8, 16\}$ prompt tokens, full-prompt mean and max pooling, and content-word-restricted variants (CW-Mean, CW-Max).
Across two models and multiple detection pipelines, the answer-onset readout achieves the highest AUROC, with performance degrading monotonically as the pooling window widens ($\downarrow$1--5\,pp at full-prompt scale).
This indicates that the most discriminative truthfulness signal is spatially concentrated at the terminal prompt position, consistent with the causal attention mechanism funnelling contextual information into the final token.
Broader aggregation dilutes this signal with representations from earlier, less informative positions.

\begin{table}[h]
\centering
\caption{Readout position ablation averaged across HaluEval2, TrueFalse, and HELM.
\textit{last-$k$} denotes mean pooling over the final $k$ prompt tokens; \textit{mean/max} aggregate all valid tokens; \textit{content} restricts to content-word tokens.
Drops ($\downarrow$) are reported in percentage points ($\times 10^{-2}$) relative to Last-1.}
\label{tab:ablation_pooling}
\begin{tabular}{lcccccccc}
\toprule
& \multicolumn{4}{c}{\textit{Last-$k$ Mean}} & \multicolumn{4}{c}{\textit{Global Pooling}} \\
\cmidrule(lr){2-5} \cmidrule(lr){6-9}
\textbf{Method}
  & \textbf{Last-1} & \textbf{Last-4} & \textbf{Last-8} & \textbf{Last-16}
  & \textbf{Mean} & \textbf{Max} & \textbf{CW-Mean} & \textbf{CW-Max} \\
\midrule
\multicolumn{9}{l}{\textit{Llama-3.1-8B-Instruct}} \\
Source    & \best{89.86} & $\downarrow$0.10 & $\downarrow$1.59 & $\downarrow$1.94 & $\downarrow$1.74 & $\downarrow$1.46 & $\downarrow$2.50 & $\downarrow$3.46 \\
Shape     & \best{89.60} & $\downarrow$0.07 & $\downarrow$1.87 & $\downarrow$2.45 & $\downarrow$2.43 & $\downarrow$2.06 & $\downarrow$3.82 & $\downarrow$5.15 \\
\midrule
\multicolumn{9}{l}{\textit{Qwen2.5-7B-Instruct}} \\
ITI-Probe & \best{87.82} & $\downarrow$0.32 & $\downarrow$2.00 & $\downarrow$1.51 & $\downarrow$0.97 & $\downarrow$2.66 & $\downarrow$3.11 & $\downarrow$5.44 \\
Traj-LR   & \best{89.20} & $\downarrow$0.18 & $\downarrow$0.85 & $\downarrow$1.08 & $\downarrow$0.82 & $\downarrow$0.91 & $\downarrow$2.25 & $\downarrow$2.98 \\
SDE       & \best{89.02} & $\downarrow$0.15 & $\downarrow$0.66 & $\downarrow$0.96 & $\downarrow$0.48 & $\downarrow$0.93 & $\downarrow$1.99 & $\downarrow$2.75 \\
\bottomrule
\end{tabular}
\end{table}

\section{Readout Mechanism Ablation}
\label{app:readout-ablation}

The theoretical framework defines the layer-specific extractor $\mathcal{A}_l$ abstractly (Eq.~\ref{eq:readout}).
Our primary experiments instantiate $\mathcal{A}_l$ as a per-head attention readout ($d = d_h$).
To verify that the conclusions are readout-agnostic, we compare this against \emph{residual-stream probes} that train a per-layer $L_2$-regularised logistic regression directly on the full residual stream hidden state ($d = d_{\mathrm{model}}$), with no head selection.

Table~\ref{tab:readout_ablation} reports 5-fold cross-validated AUROC across 5 models and 3 dataset families (15 conditions).
Both readout mechanisms yield statistically equivalent Mean~(1D) performance on HaluEval2 ($|\Delta| < 0.005$) and TrueFalse ($|\Delta| < 0.004$).
On HELM, residual-stream probes \emph{outperform} head-based probes by $\sim$2\,pp, suggesting that the full $d_{\mathrm{model}}$ representation recovers slightly more signal when the per-layer probe has sufficient regularisation.
These results confirm that the SNR dominance of $\bar{L}$ is a property of the trajectory geometry, not the readout mechanism.

\begin{table}[ht]
\centering
\caption{Readout mechanism ablation (AUROC): per-head readout ($d{=}d_h$) vs.\ residual-stream ($d{=}d_{\mathrm{model}}$, no head selection), both evaluated via trajectory mean $\bar{L}$.
$\Delta$: residual-stream minus head-based.
Mean~(1D) detection is statistically equivalent ($|\Delta| < 0.005$) on HaluEval2 and TrueFalse; residual-stream slightly outperforms on HELM.}
\label{tab:readout_ablation}
\small
\setlength{\tabcolsep}{5pt}
\begin{tabular}{llccc}
\toprule
\textbf{Model} & \textbf{Dataset} & \textbf{Head Mean} & \textbf{RS Mean} & $\boldsymbol{\Delta}$ \\
\midrule
\multirow{3}{*}{Qwen2.5-32B}
  & HaluEval2  & .852 & .856 & $+.005$ \\
  & TrueFalse & .985 & .986 & $+.001$ \\
  & HELM      & .887 & .907 & $+.020$ \\
\midrule
\multirow{3}{*}{Qwen2.5-14B}
  & HaluEval2  & .847 & .847 & $\phantom{+}.000$ \\
  & TrueFalse & .983 & .984 & $+.001$ \\
  & HELM      & .884 & .907 & $+.023$ \\
\midrule
\multirow{3}{*}{Qwen2.5-7B}
  & HaluEval2  & .832 & .829 & $-.003$ \\
  & TrueFalse & .974 & .975 & $+.001$ \\
  & HELM      & .877 & .897 & $+.021$ \\
\midrule
\multirow{3}{*}{Llama-3.1-8B}
  & HaluEval2  & .846 & .848 & $+.002$ \\
  & TrueFalse & .973 & .976 & $+.003$ \\
  & HELM      & .888 & .906 & $+.018$ \\
\midrule
\multirow{3}{*}{Llama-2-7B}
  & HaluEval2  & .821 & .822 & $+.001$ \\
  & TrueFalse & .952 & .956 & $+.004$ \\
  & HELM      & .877 & .896 & $+.019$ \\
\bottomrule
\end{tabular}
\end{table}

\subsection{Head Signal Uniformity}
\label{app:head-uniformity}

The readout ablation (\S\ref{app:readout-ablation}) suggests that detection performance is largely invariant to the readout mechanism.
To elucidate the underlying cause, we perform two complementary analyses: an exhaustive per-head probe evaluation and a Top-$K$ head ensemble ablation.

\paragraph{Per-Head Discriminability.}
For each of the $m \times K$ (layer, head) combinations, we train an independent logistic regression probe and record its training AUROC.
Table~\ref{tab:head_distribution} reports the results across three model families.
The per-head AUROC exhibits consistently low variation (CV~$< 0.08$), and the identity of the layer-best head changes at nearly every layer (switch rate $\geq 96\%$), with no head appearing as layer-best more than $3/m$ times (Table~\ref{tab:head_distribution}).

Although the raw head-wise class-mean separation varies substantially (CV~$\approx 0.7$--$1.0$), this quantity is scale-sensitive and should not be interpreted as classification difficulty.
In our probe pipeline, each head undergoes independent standardisation before fitting (Eq.~\ref{eq:standardize}). Under a Fisher/Mahalanobis view, this provides the correct geometric intuition: head-wise scaling largely cancels in the discriminative ratio, though finite-sample logistic regression AUROC is not exactly equal to the Fisher ratio.
The near-constant per-head AUROC therefore suggests that truthfulness information is not concentrated in a small subset of specialised heads.
This pattern is consistent with signal sparsity (Assumption~\ref{asm:sparse}) combined with the absence of a privileged head alignment: when no head block has a structurally preferred orientation relative to the truthfulness direction, different heads capture comparable fractions of both the class-conditional signal and the within-class noise.

\begin{table}[h]
\centering
\caption{Head signal distribution analysis on HaluEval2 (pooled).
Per-head probes are trained exhaustively on all $m \times H$ heads.
\textbf{AUC CV}: coefficient of variation of per-head AUROC within each layer (averaged over layers);
\textbf{Switches}: fraction of adjacent layers where the layer-best head changes;
\textbf{$\bar{\rho}$}: mean pairwise Pearson correlation of within-layer probe scores;
\textbf{Eff.\ Rank}: mean effective rank ($\exp$ entropy of normalised eigenvalues of the within-layer correlation matrix).
The high within-layer correlation ($\bar{\rho} \geq 0.65$) and low effective rank (${\leq}\,18\%$ of $H$) confirm that within-layer heads provide redundant---not complementary---readouts.}
\label{tab:head_distribution}
\small
\setlength{\tabcolsep}{4pt}
\begin{tabular}{lcccccc}
\toprule
\textbf{Model} & $m{\times}H$ & \textbf{AUC CV} & \textbf{Switches} & $\boldsymbol{\bar{\rho}}$ & \textbf{Eff.\ Rank} & \textbf{Rank/}$H$ \\
\midrule
Qwen2.5-7B   & $28{\times}28$ & 0.047 & 96\%  & 0.75 & 3.5 & 12.4\% \\
Llama-2-7B   & $32{\times}32$ & 0.073 & 97\%  & 0.65 & 5.7 & 17.8\% \\
Llama-3.1-8B & $32{\times}32$ & 0.061 & 100\% & 0.74 & 3.7 & 11.5\% \\
\bottomrule
\end{tabular}
\end{table}

\paragraph{Top-$K$ Head Ensemble and Random-Head Baseline.}
If within-layer heads carry largely equivalent information, ensembling multiple heads should yield little additional detection power, and randomly selecting a head should perform comparably to the best head.
Table~\ref{tab:topk_heads} confirms both predictions: the trajectory mean $\bar{L}$ computed from a single head per layer (Top-1) already matches a Top-5 ensemble across all conditions.
More critically, a uniformly random head per layer (Random-1, averaged over 50 independent draws) achieves AUROC within half a percentage point of Top-1 in every condition, and even the worst-performing head per layer (Bottom-1) remains well above chance.

The negligible Top-$K$ gain and the narrow Random-1/Top-1 gap jointly reveal that within-layer heads are \emph{redundant} rather than \emph{complementary}.
To quantify this redundancy, we compute the pairwise Pearson correlation of within-layer probe scores and the effective rank of the resulting correlation matrix (Table~\ref{tab:head_distribution}).
Across three model families, the mean within-layer correlation is high and the effective rank is only a small fraction of $H$, confirming that the $H$ probe scores are near-degenerate.
Under the equi-correlated model $z_h = s + \varepsilon_h$ with $\mathrm{Corr}(\varepsilon_h, \varepsilon_{h'}) = \rho$, averaging $K$ heads multiplies the noise variance by $\rho + (1{-}\rho)/K$, reduces the noise standard deviation to $[\rho + (1{-}\rho)/K]^{1/2}$ times its single-head value, and hence improves the SNR by a factor of $[\rho + (1{-}\rho)/K]^{-1/2}$; at the observed correlation levels, going from $K{=}1$ to $K{=}5$ yields only marginal improvement, consistent with Table~\ref{tab:topk_heads}.
This stands in contrast to \emph{depth} averaging, where the quasi-independence of adjacent-layer probes (Property~\ref{asm:manifold}) provides genuine variance reduction, explaining the large performance gap between single-layer probes and the trajectory mean observed in the main results (Table~\ref{tab:main-results}).

In summary, the detector is \emph{head-agnostic} but not \emph{head-independent}: any single head suffices because within-layer heads provide highly correlated readouts of the same low-dimensional truthfulness signal. The Random-1 baseline makes this claim explicit by showing that head identity has negligible impact on downstream detection performance.

\begin{table}[h]
\centering
\caption{Head selection ablation (AUROC).
Top-$K$: trajectory mean $\bar{L}$ from the $K$ highest-AUROC heads per layer.
Bottom-1: worst-AUROC head per layer.
Random-1: uniformly sampled head per layer (mean $\pm$ std over 50 draws).
$\Delta$: gap relative to Top-1.
Random-1 and Top-1 are within 0.5\,pp across all conditions, supporting the head-agnostic claim.}
\label{tab:topk_heads}
\small
\setlength{\tabcolsep}{4pt}
\begin{tabular}{llccccc}
\toprule
\textbf{Model} & \textbf{Dataset}
  & \textbf{Top-1} & \textbf{Top-5}
  & \textbf{Bottom-1} & \textbf{Random-1}
  & $\boldsymbol{\Delta}$\textbf{(R\,--\,T1)} \\
\midrule
\multirow{3}{*}{LLaMA-2-7B}
  & HaluEval2  & 80.8 & 80.9 & 78.7 & 80.4{\scriptsize$\pm$.013} & $-.004$ \\
  & TrueFalse & 95.4 & 95.4 & 91.7 & 94.7{\scriptsize$\pm$.004} & $-.007$ \\
  & HELM      & 88.3 & 88.4 & 87.0 & 88.0{\scriptsize$\pm$.016} & $-.003$ \\
\midrule
\multirow{3}{*}{LLaMA-3.1-8B}
  & HaluEval2  & 83.9 & 83.9 & 82.0 & 83.3{\scriptsize$\pm$.012} & $-.005$ \\
  & TrueFalse & 97.5 & 97.5 & 96.2 & 97.2{\scriptsize$\pm$.004} & $-.003$ \\
  & HELM      & 89.2 & 89.3 & 88.1 & 88.9{\scriptsize$\pm$.016} & $-.004$ \\
\midrule
\multirow{3}{*}{Qwen2.5-7B}
  & HaluEval2  & 82.2 & 82.2 & 81.1 & 81.9{\scriptsize$\pm$.014} & $-.003$ \\
  & TrueFalse & 97.4 & 97.4 & 96.2 & 97.1{\scriptsize$\pm$.004} & $-.003$ \\
  & HELM      & 88.4 & 88.4 & 87.9 & 88.0{\scriptsize$\pm$.015} & $-.004$ \\
\bottomrule
\end{tabular}
\end{table}

\section{Theoretical Scope and Approximation Hierarchy}
\label{app:theory-scope}

\begin{remark}[Theoretical Scope and Approximation Hierarchy]
\label{rem:scope}
We distinguish six levels of theoretical exactness in this framework.
\begin{enumerate}[nosep,leftmargin=1.5em]
  \item \textbf{Exact algebraic identity.}
  Proposition~\ref{thm:decomp} ($W$/$K$ decomposition) holds without approximation for all Euclidean observation spaces.

  \item \textbf{Exact algebraic identity (covariance-aware).}
  Theorem~\ref{thm:uniform-optimal} (Fisher gap decomposition) holds for arbitrary positive-definite within-class covariance $\Sigma_\tau$ without structural assumptions.
  Its three diagnostics ($\theta$, $\chi$, $t$) are directly estimable from the trajectory data (Remark~\ref{rem:uniform-calibration}).
  The independent-residual analysis is used only as intuition; empirical near-optimality is certified by the exact Fisher identity and the directly estimated diagnostics.

  \item \textbf{First-order stochastic models.}
  Property~\ref{asm:manifold} (probe isotropy) and Proposition~\ref{prop:snr} (SNR dominance over centred statistics) are motivated by representation geometry and supported empirically; they should not be read as algebraic identities of the residual stream.

  \item \textbf{Decorrelation, not literal independence.}
  Near-orthogonality of probe frames is used to justify approximate \emph{decorrelation} of the residual nuisance after removing the sample-level common mode~$\alpha_i$, not literal statistical independence.
  The bridge requires that innovation terms in the quiet subspace do not systematically align with any specific probe direction, an assumption grounded in the perturbative model of Appendix~\ref{app:ortho-grounding} and supported by the residual ACF analysis (Table~\ref{tab:variance-decomp}).
  Under Gaussian innovations, decorrelation upgrades to approximate conditional independence; we invoke this only as a tractable first-order idealisation.

  \item \textbf{Isotropy verification protocol.}
  The fixed-head protocol in Appendix~\ref{app:isotropy-verification} places all probe directions in a common Euclidean subspace for geometric testing only; the main detector is not head-fixed, and the random-head baseline (Table~\ref{tab:topk_heads}) confirms that the conclusion transfers.

  \item \textbf{Mean-dominant criterion.}
  The operative quantities for mean dominance are the Fisher gap ratio $R_{\mathrm{Fisher}}$ and its components $\theta$, $\chi$ (Theorem~\ref{thm:uniform-optimal}; Table~\ref{tab:fisher-gap}), together with the plugin ratio $R_{\mathrm{plugin}}$ for slope-vs-mean comparisons (Appendix~\ref{app:snr-empirical}).
  The Fisher gap ratio $R_{\mathrm{Fisher}} \in [1.02, 1.04]$ across the three models with full trajectory diagnostics directly supports near-optimality among all linear aggregations without requiring isotropic residual covariance.
\end{enumerate}
\end{remark}

\section{Complete Feature Definitions}
\label{app:features}

This section formally enumerates the feature definitions introduced in~\S\ref{sec:detector} and~\S\ref{sec:decomp}: the \emph{trajectory diagnostics} (comprising the detector scalar~$\bar{L}$ and four ablation controls) and the \emph{source-level diagnostics} derived from the $W$/$K$ decomposition for mechanistic verification.

\subsection{Trajectory Diagnostic Definitions}
Given the discrete 1D trajectory $\boldsymbol{\tau} = [L_{0}, \dots, L_{m-1}]^{\top}$, we use two pre-specified depth windows, the \emph{recognition zone} (RZ) and the \emph{output zone} (OZ), only for trajectory-shape controls and source-level ablations; their exact index sets are specified in Eq.~\eqref{eq:zone-index-sets}.
The deployed detector itself uses the full-depth mean $\bar{L}$ and does not depend on these zone definitions.
The trajectory diagnostics are formulated as:
\begin{enumerate}
    \item \textbf{Trajectory Mean} ($\bar{L}$): The spatial expectation $\bar{L} = \frac{1}{m} \sum_{l=0}^{m-1} L_{l}$.
    \item \textbf{Global Gradient} ($\beta$): The linear trend coefficient from ordinary least squares (OLS) regression over the depth indices $l$.
    \item \textbf{Zone Shift} ($\Delta_{\text{zone}}$): Output zone mean minus recognition zone mean, $\bar{L}_{\text{OZ}} - \bar{L}_{\text{RZ}}$, capturing a coarse contrast between the two fixed windows.
    \item \textbf{Total Variation} ($\text{TV}$): The absolute path length $\sum_{l=0}^{m-2} \lvert \Delta L_{l} \rvert$, parameterising the oscillatory roughness of the logit path.
    \item \textbf{Range}: The trajectory amplitude $\max_{l} L_{l} - \min_{l} L_{l}$.
\end{enumerate}
The mean-sufficiency ablation (Table~\ref{tab:ablation_combined}(a)) confirms that $\bar{L}$ alone matches the full 5D configuration ($|\Delta| < 0.5$\,pp), while removing it collapses AUROC by $>$19\,pp; the four shape controls therefore serve exclusively to verify that no additional discriminative information resides in the trajectory's higher-order structure.

\subsection{Source Feature Formalisation}
Exploiting the cross-layer readout $\widetilde{L}_{l}^{\,+} = \mathbf{v}_{l}^{\!\top}\mathbf{a}_{l+1} + b_{l}$, we partition the intrinsic displacement $W_{l} = \widetilde{L}_{l}^{\,+} - L_{l}$ and the readout change $K_{l} = L_{l+1} - \widetilde{L}_{l}^{\,+}$ for $l \in \{0, \dots, m-2\}$.
To keep the ablation features reproducible, we use fixed recognition-zone and output-zone windows only as feature-engineering index sets:
\begin{align}
  \label{eq:zone-index-sets}
  \mathcal{I}^{L}_{\mathrm{RZ}}
  &:=
  \{\,l \in \{0,\dots,m-1\}: \lfloor m/3 \rfloor \leq l < \lfloor 3m/4 \rfloor\,\},
  \\
  \mathcal{I}^{L}_{\mathrm{OZ}}
  &:=
  \{\,l \in \{0,\dots,m-1\}: \lfloor 3m/4 \rfloor \leq l \leq m-1\,\}. \notag
\end{align}
For transition-level quantities, we use $\mathcal{I}^{\Delta}_{Z}:=\mathcal{I}^{L}_{Z}\cap\{0,\dots,m-2\}$ for $Z\in\{\mathrm{RZ},\mathrm{OZ}\}$.
For $X \in \{W,K\}$, we define the recognition-zone mean and energy as
\begin{equation}
  \label{eq:source-mean-energy}
  \bar{X}_{\mathrm{RZ}}
  :=
  \frac{1}{|\mathcal{I}^{\Delta}_{\mathrm{RZ}}|}
  \sum_{l \in \mathcal{I}^{\Delta}_{\mathrm{RZ}}} X_l,
  \qquad
  E_X
  :=
  \frac{1}{|\mathcal{I}^{\Delta}_{\mathrm{RZ}}|}
  \sum_{l \in \mathcal{I}^{\Delta}_{\mathrm{RZ}}} X_l^2.
\end{equation}
Thus the drift--diffusion energy ratio $\log(E_W/E_K)$ compares the mean squared magnitudes of the intrinsic displacement and readout-change channels over the recognition zone.
The 13 source-level diagnostics are organised into four physically interpretable groups:

\begin{itemize}
    \item \textbf{State Anchors (3):} Normalised probe score mean in the RZ ($\bar{\rho}_{\text{RZ}}$), logit trajectory mean in the RZ ($\bar{L}_{\text{RZ}}$), and logit endpoint difference ($L_{m-1} - L_0$), anchoring the algebraic decomposition to the observable trajectory.
    \item \textbf{Kinematic Decomposition (5):} Mean drift ($\bar{W}_{\text{RZ}}$) and its total variation ($\text{TV}(W_{\text{RZ}})$) quantifying the intrinsic factual signal strength and stability, mean diffusion ($\bar{K}_{\text{RZ}}$) and its total variation ($\text{TV}(K_{\text{RZ}})$) parameterising the frame-induced diagnostic component, and the drift--diffusion energy ratio $\log(E_W / E_K)$ measuring the relative contribution of the two diagnostic components.
    \item \textbf{Friction Coefficients (2):} Mean friction $\bar{\zeta}_{\text{RZ}}$ (damping rate of the trajectory) and mean condition number $\bar{\kappa}_{\text{RZ}}$ (numerical stability of the readout decomposition).
    \item \textbf{Output Zone State (3):} Output zone probe state ($\bar{\rho}_{\text{OZ}}$), output zone diffusion ($\bar{K}_{\text{OZ}}$), and final-layer probe state ($\rho_{m-1}$), capturing the terminal decision geometry.
\end{itemize}

These 13 parameters were selected via systematic group ablation (Table~\ref{tab:ablation_combined}(b)), retaining near-identical performance to the full representation while preserving all four component categories for mechanistic interpretability.

\subsection{Full Trajectory Baseline}
The full trajectory representation used in the ablation comparison (Table~\ref{tab:ablation-12d-22d}) is a model-dependent, high-dimensional baseline that retains the raw per-layer information without compression.
For a model with $m$ Transformer blocks, it concatenates:
\begin{enumerate}[nosep]
  \item the raw logit trajectory $[L_0, \ldots, L_{m-1}]$ ($m$ dimensions),
  \item the first differences $[\Delta L_0, \ldots, \Delta L_{m-2}]$ ($m{-}1$ dimensions),
  \item six summary statistics: mean, standard deviation, minimum, maximum, final value, and total change ($L_{m-1} - L_0$).
\end{enumerate}
This yields $2m + 5$ dimensions (e.g.\ 69D for 32-layer models, 61D for 28-layer models).
Its purpose is to serve as an empirical ceiling: if the trajectory diagnostics match or exceed this baseline, then the low-dimensional statistics recover most of the discriminative information.

\begin{table}[h]
\centering
\caption{Source-level diagnostic definitions (13D), organised by physical component.}
\label{tab:source_features_def}
\begin{tabular}{cllc}
\toprule
\textbf{Group} & \textbf{Feature} & \textbf{Physical Meaning} & \textbf{Dim} \\
\midrule
\multirow{3}{*}{State}
  & $\bar{\rho}_{\text{RZ}}$        & Normalised probe score (recognition zone)       & \multirow{3}{*}{3} \\
  & $\bar{L}_{\text{RZ}}$           & Logit trajectory mean (recognition zone)        & \\
  & $\Delta L_{\text{end}}$         & Logit endpoint difference ($L_{m-1} - L_0$)         & \\
\midrule
\multirow{5}{*}{Kinematic}
  & $\bar{W}_{\text{RZ}}$           & Drift mean (deterministic signal)               & \multirow{5}{*}{5} \\
  & $\text{TV}(W_{\text{RZ}})$      & Drift total variation (signal stability)         & \\
  & $\bar{K}_{\text{RZ}}$           & Diffusion mean (frame-induced readout)           & \\
  & $\text{TV}(K_{\text{RZ}})$      & Diffusion total variation (readout fluctuation)  & \\
  & $\log(E_W/E_K)$                 & Drift--diffusion energy ratio (channel balance)  & \\
\midrule
\multirow{2}{*}{Friction}
  & $\bar{\zeta}_{\text{RZ}}$       & Friction coefficient (damping rate)              & \multirow{2}{*}{2} \\
  & $\bar{\kappa}_{\text{RZ}}$      & Condition number (numerical stability)           & \\
\midrule
\multirow{3}{*}{Output}
  & $\bar{\rho}_{\text{OZ}}$        & Output zone probe state                         & \multirow{3}{*}{3} \\
  & $\bar{K}_{\text{OZ}}$           & Output zone diffusion                           & \\
  & $\rho_{m-1}$                        & Final-layer probe state                         & \\
\bottomrule
\end{tabular}%

\end{table}

\section{SNR Dominance: Proof and Empirical Verification}
\label{app:snr-proof}

\subsection{Proof of Proposition~\ref{prop:snr}}

We derive the SNR ratio between the ordinary least-squares (OLS) slope $\hat{\beta}$ and the trajectory mean $\bar{L}$ under the level--shape model Eq.~\eqref{eq:level-shape}:
\begin{equation}
  L_l^{(i)} \;=\; \alpha_i + \mu_l^{(c_i)} + \varepsilon_{i,l},
\end{equation}
where $\alpha_i \sim (0, \sigma_\alpha^2)$ is a sample-level random intercept shared across all layers (capturing per-sample common-mode variation within each class) and $\varepsilon_{i,l} \overset{\mathrm{iid}}{\sim} (0, \sigma_\varepsilon^2)$ are independent layer-specific residuals, with $\alpha_i \perp\!\!\!\perp \varepsilon_{i,l}$.
Throughout, $\mathrm{SNR}(T) := |\mathbb{E}[T \mid \mathcal{F}] - \mathbb{E}[T \mid \mathcal{H}]|\,/\,\mathrm{Std}(T \mid c)$ denotes the class-separability ratio, where $c \in \{\mathcal{F},\mathcal{H}\}$ is a generic class label and $\mathrm{Std}(T \mid c)$ is the within-class standard deviation (assumed equal across classes under homoscedasticity) and $T$ is any scalar statistic of the trajectory. 

\paragraph{Signal.}
Define $\delta_l := \mu_l^{(\mathcal{F})} - \mu_l^{(\mathcal{H})}$ and $\bar{\delta} := m^{-1}\sum_l \delta_l$.
The class-conditional mean of $\bar{L}$ differs by $\bar{\delta}$, so $\mathrm{Signal}(\bar{L}) = |\bar{\delta}|$.
For the OLS slope $\hat{\beta} = \sum_l w_l L_l$ with centred weights $w_l \propto (l - \bar{l})$ and $\sum w_l = 0$, the signal is $\mathrm{Signal}(\hat{\beta}) = |\sum_l w_l \delta_l| = |\bar{\delta}'|$, the magnitude of the OLS slope of the class gap profile.

\paragraph{Noise.}
Since $\bar{L} = \frac{1}{m}\sum_l (\alpha_i + \mu_l^{(c_i)} + \varepsilon_{i,l}) = \alpha_i + \bar{\mu}^{(c_i)} + \frac{1}{m}\sum_l \varepsilon_{i,l}$, conditioning on class~$c$ makes $\bar{\mu}^{(c_i)}$ a constant, so:
\begin{align}
  \mathrm{Var}(\bar{L} \mid c)
  &= \mathrm{Var}(\alpha_i \mid c)
    + \mathrm{Var}\!\left(\tfrac{1}{m}\textstyle\sum_l \varepsilon_{i,l} \;\middle|\; c\right)
    + \underbrace{2\,\mathrm{Cov}\!\left(\alpha_i,\,\tfrac{1}{m}\textstyle\sum_l \varepsilon_{i,l} \;\middle|\; c\right)}_{= 0\;\text{($\alpha_i \perp\!\!\!\perp \varepsilon_{i,l}$)}} \notag\\
  &= \sigma_\alpha^2
    + \frac{1}{m^2}\!\left[\sum_l \mathrm{Var}(\varepsilon_{i,l} \mid c)
    + \underbrace{\sum_{l \neq l'} \mathrm{Cov}(\varepsilon_{i,l},\,\varepsilon_{i,l'} \mid c)}_{= 0\;\text{(independence across layers)}}\right] \notag\\
  &= \sigma_\alpha^2 + \frac{m\,\sigma_\varepsilon^2}{m^2}
  \;=\; \sigma_\alpha^2 + \frac{\sigma_\varepsilon^2}{m}\,,
\end{align}
where $\sigma_\alpha^2 := \mathrm{Var}(\alpha_i)$ is the population variance of the random intercept (shared across all layers and therefore not reduced by depth averaging).
For the slope $\hat{\beta} = \sum_l w_l L_l$, we first observe that the $\alpha_i$ term vanishes:
since $\sum_l w_l = 0$, we have $\sum_l w_l (\alpha_i + \mu_l^{(c_i)} + \varepsilon_{i,l}) = \alpha_i \!\cdot\! 0 + \sum_l w_l \mu_l^{(c_i)} + \sum_l w_l \varepsilon_{i,l}$.
The first term vanishes and the second is a class-specific constant, so only the residuals contribute to variance:
\begin{align}
  \mathrm{Var}(\hat{\beta} \mid c)
  &= \mathrm{Var}\!\left(\sum_l w_l \varepsilon_{i,l} \;\middle|\; c\right) \notag\\
  &= \sum_l w_l^2\,\mathrm{Var}(\varepsilon_{i,l} \mid c)
    + \underbrace{\sum_{l \neq l'} w_l\, w_{l'}\,\mathrm{Cov}(\varepsilon_{i,l},\, \varepsilon_{i,l'} \mid c)}_{= 0 \;\text{(independence across layers)}} \notag\\
  &= \sigma_\varepsilon^2 \sum_l w_l^2\,.
\end{align}
It remains to evaluate $\sum_l w_l^2$.
The OLS weights for regressing on equally spaced indices $l \in \{0, \ldots, m{-}1\}$ are
\begin{equation}
  w_l = \frac{l - \bar{l}}{\sum_{j=0}^{m-1}(j - \bar{l})^2}\,,
  \qquad \bar{l} = \frac{m-1}{2}\,,
\end{equation}
so that $\sum_l w_l^2 = \sum_l (l - \bar{l})^2 \big/ \bigl[\sum_j (j - \bar{l})^2\bigr]^2 = 1\big/\sum_l(l-\bar{l})^2$.
We evaluate the denominator using the standard sum-of-squares formula $\sum_{k=0}^{n} k^2 = n(n{+}1)(2n{+}1)/6$ with $n = m{-}1$:
\begin{align}
  \sum_{l=0}^{m-1}(l - \bar{l})^2
  &= \sum_{l=0}^{m-1} l^2 \;-\; m\,\bar{l}^{\,2}
  \;=\; \frac{(m-1)\,m\,(2m-1)}{6} \;-\; m\cdot\frac{(m{-}1)^2}{4} \notag\\
  &= \frac{m(m-1)}{12}\bigl[2(2m-1) - 3(m-1)\bigr]
  \;=\; \frac{m(m-1)(m+1)}{12}
  \;=\; \frac{m(m^2-1)}{12}\,.
\end{align}
Substituting back yields:
\begin{equation}
  \mathrm{Var}(\hat{\beta} \mid c)
  = \frac{\sigma_\varepsilon^2}{\sum_l(l-\bar{l})^2}
  = \frac{12\,\sigma_\varepsilon^2}{m(m^2-1)}\,.
\end{equation}

\paragraph{SNR ratio.}
Collecting the signal and noise results, the individual SNRs are:
\begin{align}
  \mathrm{SNR}(\bar{L})
  &= \frac{\mathrm{Signal}(\bar{L})}{\mathrm{Std}(\bar{L} \mid c)}
  = \frac{|\bar{\delta}|}{\sqrt{\sigma_\alpha^2 + \sigma_\varepsilon^2/m}}\,, \label{eq:snr-mean}\\[4pt]
  \mathrm{SNR}(\hat{\beta})
  &= \frac{\mathrm{Signal}(\hat{\beta})}{\mathrm{Std}(\hat{\beta} \mid c)}
  = \frac{|\bar{\delta}'|}{\sigma_\varepsilon\sqrt{12/(m(m^2-1))}}\,. \label{eq:snr-slope}
\end{align}
Taking their ratio:
\begin{align}
\label{eq:snr-ratio-exact}
  \frac{\mathrm{SNR}(\hat{\beta})}{\mathrm{SNR}(\bar{L})}
  &= \frac{|\bar{\delta}'|}{|\bar{\delta}|}
  \cdot \frac{\sqrt{\sigma_\alpha^2 + \sigma_\varepsilon^2/m}}{\sigma_\varepsilon\sqrt{12/(m(m^2-1))}} \notag\\
  &= \frac{|\bar{\delta}'|}{|\bar{\delta}|}
  \cdot \sqrt{\frac{(\sigma_\alpha^2 + \sigma_\varepsilon^2/m)\,m(m^2-1)}{12\,\sigma_\varepsilon^2}} \notag\\
  &= \frac{|\bar{\delta}'|}{|\bar{\delta}|}
  \cdot \frac{\gamma\,\sqrt{m^2-1}}{\sqrt{12}}
  \;=\; \frac{\rho \cdot \gamma}{\sqrt{12}}
  \cdot \underbrace{\sqrt{1 - \tfrac{1}{m^2}}}_{\text{finite-depth factor}}\,,
\end{align}
where $\rho := |\bar{\delta}'| \cdot m / |\bar{\delta}|$ is the divergence ratio and $\gamma := \sqrt{1 + m\,\sigma_\alpha^2/\sigma_\varepsilon^2}$ is the origin penalty factor.
The third equality uses $(m^2-1)(m\sigma_\alpha^2 + \sigma_\varepsilon^2)/(12\sigma_\varepsilon^2) = (m^2-1)\gamma^2/12$, and the final step substitutes $|\bar{\delta}'|/|\bar{\delta}| = \rho/m$.
The finite-depth factor $\sqrt{1-1/m^2}$ is strictly less than unity, so the simplified formula $\rho\gamma/\sqrt{12}$ is a slight \emph{overestimate} of the exact ratio.
For the model depths used in this work ($m \geq 28$), this overestimation is negligible (Remark~\ref{rem:ols-approx-bound}).
Proposition~\ref{prop:snr} states the exact finite-depth expression; the simplified form $\rho\gamma/\sqrt{12}$ is used throughout for convenience. \qed

\begin{remark}[Approximation Error Bound]
\label{rem:ols-approx-bound}
The simplified formula $\rho\gamma/\sqrt{12}$ overestimates the exact SNR ratio~\eqref{eq:snr-ratio-exact} by the factor $m/\sqrt{m^2-1}$.
Measured relative to the simplified formula, this fractional overestimation is bounded as follows.
Using the algebraic identity $1 - \sqrt{1-x} = x\,/\,(1+\sqrt{1-x})$ with $x = 1/m^2$, and observing that $1 < 1+\sqrt{1-1/m^2} < 2$ for all $m \geq 2$, we obtain the two-sided bound:
\begin{equation}
  \frac{1}{2m^2}
  \;<\; 1 - \sqrt{1 - \frac{1}{m^2}}
  \;<\; \frac{1}{m^2}\,.
\end{equation}
The upper bound $1/m^2$ requires no calculus: it follows directly from $1 + \sqrt{1-1/m^2} > 1$.
For the model depths evaluated in this work, the exact values are:

\begin{center}
\small
\begin{tabular}{lccccc}
  \toprule
  \textbf{Model} & $m$ & $\sqrt{1-1/m^2}$ & Relative overestimation & Lower bd $1/(2m^2)$ & Upper bd $1/m^2$ \\
  \midrule
  Qwen2.5-7B   & 28 & 0.999362 & 0.064\% & 0.064\% & 0.128\% \\
  LLaMA-2-7B   & 32 & 0.999512 & 0.049\% & 0.049\% & 0.098\% \\
  LLaMA-3.1-8B & 32 & 0.999512 & 0.049\% & 0.049\% & 0.098\% \\
  Qwen2.5-14B  & 48 & 0.999783 & 0.022\% & 0.022\% & 0.043\% \\
  \bottomrule
\end{tabular}
\end{center}
In all cases the overestimation is below $0.13\%$ (and empirically below $0.07\%$), confirming that the simplified formula is indistinguishable from the exact expression at the precision of any downstream AUROC measurement.
Since the approximation \emph{overestimates} the slope's relative SNR, it is conservative with respect to the paper's conclusion that the trajectory mean dominates.
\end{remark}

\paragraph{Extension to general zero-sum linear filters.}
For any linear statistic $T = \sum_l w_l L_l$ with $\sum w_l = 0$ (including zone contrast $\Delta_{\text{zone}}$), the $\alpha_i$ term cancels and the signal is $|\sum_l w_l \delta_l| = |\mathbf{w}^\top \mathbf{r}|$ (since $\mathbf{1}^\top\mathbf{w} = 0$ and $\mathbf{d} = \bar{\delta}\,\mathbf{1} + \mathbf{r}$).
Define the shape-to-level gap ratio
\begin{equation}
  \label{eq:rho-shape}
  \rho_{\mathrm{shape}}
  :=
  \frac{\|\mathbf{r}\|}{|\bar{\delta}|\sqrt{m}}.
\end{equation}
Under independent residuals, the supremum over all unit-norm zero-sum filters satisfies:
\begin{equation}
  \sup_{\mathbf{w}:\, \mathbf{1}^\top\mathbf{w}=0,\, \|\mathbf{w}\|=1}
  \frac{\mathrm{SNR}(\mathbf{w}^\top\boldsymbol{\tau})}{\mathrm{SNR}(\bar{L})}
  \;\leq\; \gamma\,\rho_{\mathrm{shape}}\,.
\end{equation}
When $\mathbf{r}\neq\mathbf{0}$, equality is achieved by $\mathbf{w} \propto \mathbf{r}$ (the worst-case zero-sum filter aligns with the gap shape residual). The slope and zone contrast are representative examples of centred filters; the bound above covers all possible zero-sum linear combinations. The trajectory mean, which is the canonical constant-weight linear statistic, therefore dominates in the level-dominant regime ($\gamma\,\rho_{\mathrm{shape}} \ll 1$).

\subsection{Proof of Theorem~\ref{thm:uniform-optimal}}
\label{app:uniform-proof}

Theorem~\ref{thm:uniform-optimal} decomposes both the class-gap vector~$\mathbf{d}$ and the within-class trajectory covariance~$\Sigma_\tau$ into level and shape components, summarised by a level signal~$s$, a shape coordinate~$\mathbf{z}$, and covariance blocks $a$, $\mathbf{b}$, $C$.
The main text states the resulting diagnostics and identity; we now provide the explicit orthogonal construction and the full proof.

\paragraph{Notation and setup.}
We prove the covariance-aware Fisher gap identity~\eqref{eq:fisher-gap-exact} for arbitrary positive-definite within-class trajectory covariance.
Let $\boldsymbol{\tau}_i = (L_0^{(i)}, \ldots, L_{m-1}^{(i)})^\top$ denote the truth-logit trajectory for sample~$i$, with class-conditional means $\mu_l^{(c)} = \mathbb{E}[L_l \mid c]$ for $c \in \{\mathcal{F}, \mathcal{H}\}$.
Write $\mathbf{d} := [\delta_0, \ldots, \delta_{m-1}]^\top \in \mathbb{R}^m$ for the vector of class-conditional gaps, where $\delta_l := \mu_l^{(\mathcal{F})} - \mu_l^{(\mathcal{H})}$.
Define the pooled within-class trajectory covariance:
\begin{equation}
  \Sigma_\tau := \pi_{\mathcal{F}}\,\mathrm{Cov}(\boldsymbol{\tau} \mid \mathcal{F}) + \pi_{\mathcal{H}}\,\mathrm{Cov}(\boldsymbol{\tau} \mid \mathcal{H}) \;\succ\; 0,
\end{equation}
where $\pi_c$ denotes the class prior. No structural assumption is imposed on $\Sigma_\tau$.

\paragraph{Setup.}
The Fisher-optimal linear discriminant is $\mathbf{w}_\star \propto \Sigma_\tau^{-1} \mathbf{d}$.
Set $\mathbf{u} = m^{-1/2}\,\mathbf{1}$ and decompose $\mathbf{d} = s\,\mathbf{u} + \mathbf{r}$ with $\mathbf{r} \perp \mathbf{u}$ and $s = \mathbf{u}^\top \mathbf{d} = \sqrt{m}\,\bar{\delta}$.
Assume $s \neq 0$, so that the uniform-mean SNR is nonzero and the Fisher ratio is well-defined.
Let $Q \in \mathbb{R}^{m \times (m-1)}$ satisfy $Q^\top Q = I$ and $Q^\top \mathbf{u} = \mathbf{0}$, set $U = [\mathbf{u},\, Q]$, and define the shape coordinate vector $\mathbf{z} = Q^\top \mathbf{r} \in \mathbb{R}^{m-1}$.
Although $Q$ is not unique, the scalar diagnostics $\theta$, $\chi$, and $t$ defined below are invariant under orthogonal changes of basis in $\mathbf{u}^\perp$.
Indeed, if $Q' = QR$ for an orthogonal $R$, then $\mathbf{z}' = R^\top\mathbf{z}$, $\mathbf{b}' = R^\top\mathbf{b}$, and $C' = R^\top C R$.
Hence $C'^{-1} = R^\top C^{-1} R$, so
$\mathbf{b}'^\top C'^{-1}\mathbf{z}' = \mathbf{b}^\top C^{-1}\mathbf{z}$,
$\mathbf{b}'^\top C'^{-1}\mathbf{b}' = \mathbf{b}^\top C^{-1}\mathbf{b}$,
and $\mathbf{z}'^\top C'^{-1}\mathbf{z}' = \mathbf{z}^\top C^{-1}\mathbf{z}$, confirming that $\theta$, $\chi^2$, and $t$ are unchanged.
In the $U$-basis:
\begin{equation}
  U^\top \mathbf{d} = \binom{s}{\mathbf{z}}, \qquad
  U^\top \Sigma_\tau\, U = \begin{pmatrix} a & \mathbf{b}^\top \\ \mathbf{b} & C \end{pmatrix},
\end{equation}
where $a = \mathbf{u}^\top \Sigma_\tau\, \mathbf{u} > 0$, $\mathbf{b} = Q^\top \Sigma_\tau\, \mathbf{u} \in \mathbb{R}^{m-1}$, and $C = Q^\top \Sigma_\tau\, Q \succ 0$.
Since $\Sigma_\tau \succ 0$, the Schur complement satisfies $\tilde{a} := a - \mathbf{b}^\top C^{-1}\mathbf{b} > 0$, hence $0 \leq \theta := \mathbf{b}^\top C^{-1}\mathbf{b}/a < 1$.

\paragraph{Step 1: Fisher SNR via block inversion.}
Since $U$ is orthogonal, $\mathbf{d}^\top \Sigma_\tau^{-1} \mathbf{d} = (s, \mathbf{z})^\top M^{-1} (s, \mathbf{z})$ where $M = U^\top \Sigma_\tau U$.
Using the standard $2 \times 2$ block-inverse formula with Schur complement $\tilde{a} = a - \mathbf{b}^\top C^{-1}\mathbf{b}$:
\begin{equation}
  M^{-1} = \begin{pmatrix} \tilde{a}^{-1} & -\tilde{a}^{-1}\,\mathbf{b}^\top C^{-1} \\ -C^{-1}\mathbf{b}\,\tilde{a}^{-1} & C^{-1} + C^{-1}\mathbf{b}\,\tilde{a}^{-1}\,\mathbf{b}^\top C^{-1} \end{pmatrix}.
\end{equation}
Substituting $(s, \mathbf{z})$:
\begin{align}
  \mathbf{d}^\top \Sigma_\tau^{-1} \mathbf{d}
  &= \frac{s^2}{\tilde{a}} - \frac{2s\,\mathbf{b}^\top C^{-1}\mathbf{z}}{\tilde{a}} + \mathbf{z}^\top C^{-1}\mathbf{z} + \frac{(\mathbf{b}^\top C^{-1}\mathbf{z})^2}{\tilde{a}} \notag \\
  &= \frac{(s - \mathbf{b}^\top C^{-1}\mathbf{z})^2}{\tilde{a}} + \mathbf{z}^\top C^{-1}\mathbf{z}.
\end{align}

\paragraph{Step 2: Connecting to SNR.}
We first establish the SNR expressions for both statistics.

\textit{Fisher-optimal statistic.}
For the Fisher-optimal weight vector $\mathbf{w}_\star \propto \Sigma_\tau^{-1}\mathbf{d}$, the squared SNR of the linear score $\mathbf{w}_\star^\top \boldsymbol{\tau}$ is:
\begin{equation}
  \mathrm{SNR}^2(\mathbf{w}_\star^\top\boldsymbol{\tau})
  \;=\; \frac{(\mathbf{w}_\star^\top \mathbf{d})^2}{\mathbf{w}_\star^\top \Sigma_\tau\, \mathbf{w}_\star}
  \;=\; \frac{(\mathbf{d}^\top \Sigma_\tau^{-1} \mathbf{d})^2}{\mathbf{d}^\top \Sigma_\tau^{-1} \Sigma_\tau\, \Sigma_\tau^{-1} \mathbf{d}}
  \;=\; \mathbf{d}^\top \Sigma_\tau^{-1} \mathbf{d}\,,
\end{equation}
where the second equality substitutes $\mathbf{w}_\star = \Sigma_\tau^{-1}\mathbf{d}$ (the proportionality constant cancels in the ratio), and the third uses $\Sigma_\tau^{-1} \Sigma_\tau = I$.

\textit{Uniform-mean statistic.}
Since the SNR is invariant to rescaling the score, we compute the SNR using the equivalent uniform direction $\mathbf{u}$ rather than $\mathbf{u}/\sqrt{m}$:
\begin{equation}
  \mathrm{SNR}^2(\bar{L})
  \;=\; \frac{(\mathbf{u}^\top \mathbf{d})^2}{\mathbf{u}^\top \Sigma_\tau\, \mathbf{u}}
  \;=\; \frac{s^2}{a}\,,
\end{equation}
where $s = \mathbf{u}^\top \mathbf{d}$ and $a = \mathbf{u}^\top \Sigma_\tau\, \mathbf{u}$ by definition.

\paragraph{Step 3: Forming the ratio.}
Combining the Step~1 result $\mathrm{SNR}^2(\mathbf{w}_\star^\top\boldsymbol{\tau}) = (s - \mathbf{b}^\top C^{-1}\mathbf{z})^2/\tilde{a} + \mathbf{z}^\top C^{-1}\mathbf{z}$ with $\mathrm{SNR}^2(\bar{L}) = s^2/a$:
\begin{align}
  \frac{\mathrm{SNR}^2(\mathbf{w}_\star^\top\boldsymbol{\tau})}{\mathrm{SNR}^2(\bar{L})}
  &= \frac{a}{s^2}\left[\frac{(s - \mathbf{b}^\top C^{-1}\mathbf{z})^2}{\tilde{a}} + \mathbf{z}^\top C^{-1}\mathbf{z}\right] \notag \\
  &= \frac{a}{s^2} \cdot \frac{(s - \mathbf{b}^\top C^{-1}\mathbf{z})^2}{a - \mathbf{b}^\top C^{-1}\mathbf{b}} + \frac{a\,\mathbf{z}^\top C^{-1}\mathbf{z}}{s^2}\,,
\end{align}
where we substituted $\tilde{a} = a - \mathbf{b}^\top C^{-1}\mathbf{b}$.
Now factor $s^2$ from the numerator and $a$ from the denominator of the first term:
\begin{equation}
  \frac{a}{s^2} \cdot \frac{s^2(1 - \mathbf{b}^\top C^{-1}\mathbf{z}/s)^2}{a(1 - \mathbf{b}^\top C^{-1}\mathbf{b}/a)}
  \;=\; \frac{(1 - t)^2}{1 - \theta}\,,
\end{equation}
using $t := \mathbf{b}^\top C^{-1}\mathbf{z}/s$ and $\theta := \mathbf{b}^\top C^{-1}\mathbf{b}/a$. The second term is $\chi^2 := a\,\mathbf{z}^\top C^{-1}\mathbf{z}/s^2$ by definition.
Combining both terms yields the exact identity~\eqref{eq:fisher-gap-exact}:
\begin{equation}
  \frac{\mathrm{SNR}^2(\mathbf{w}_\star^\top\boldsymbol{\tau})}{\mathrm{SNR}^2(\bar{L})}
  \;=\; \chi^2 + \frac{(1 - t)^2}{1 - \theta}\,.
\end{equation}
This proves the exact identity. It remains to derive the stated Cauchy--Schwarz upper bound.

\paragraph{Step 4: Cauchy--Schwarz bound.}
By Cauchy--Schwarz in the $C^{-1}$-inner product:
$(\mathbf{b}^\top C^{-1}\mathbf{z})^2 \leq (\mathbf{b}^\top C^{-1}\mathbf{b})(\mathbf{z}^\top C^{-1}\mathbf{z})$,
so $t^2 \leq \theta\,\chi^2$, yielding $|t| \leq \sqrt{\theta}\,\chi$.
Substituting the worst case $t = -\sqrt{\theta}\,\chi$ into Eq.~\eqref{eq:fisher-gap-exact} gives the upper bound:
\begin{equation}
  \frac{\mathrm{SNR}(\mathbf{w}_\star^\top\boldsymbol{\tau})}
       {\mathrm{SNR}(\bar{L})}
  \;\leq\;
  \sqrt{\chi^2 + \frac{(1 + \sqrt{\theta}\,\chi)^2}{1 - \theta}}\,.
\end{equation}
\noindent\hfill\qed

\paragraph{Interpretation.}
The exact identity decomposes the Fisher gain into two sources:
\begin{enumerate}[nosep,leftmargin=1.5em]
  \item $\chi^2$: the covariance-aware shape energy. This measures whether the shape coordinate vector $\mathbf{z} = Q^\top \mathbf{r}$ falls on low-noise (discriminative) directions of $C$. Unlike the isotropic proxy $\gamma^2\rho_{\mathrm{shape}}^2$ (which assumes $C \approx \sigma_\varepsilon^2 I$), $\chi^2$ accounts for the full anisotropy of the residual covariance.
  \item $(1-t)^2/(1-\theta)$: the level-shape coupling gain. The quantity $\theta$ measures whether shape coordinates can serve as control variates for the level mean; when $\theta = 0$ (no coupling), this term equals~$1$ and contributes no gain.
\end{enumerate}
When both $\theta \ll 1$ and $\chi \ll 1$, the ratio reduces to $1 + \mathcal{O}(\chi^2 + \theta)$, confirming Fisher near-optimality.

\paragraph{Relation to the independent-residual intuition.}
The Fisher identity also recovers the simpler intuition behind depth averaging in the idealised case where the shape covariance is isotropic and decoupled from the level coordinate.
If $C=\sigma_\varepsilon^2 I$ and $\mathbf{b}=\mathbf{0}$, then $\theta=0$ and $t=0$, so Eq.~\eqref{eq:fisher-gap-exact} reduces to
\begin{equation}
  \label{eq:isotropic-intuition}
  R_{\mathrm{Fisher}}^2
  =
  1 + \chi^2.
\end{equation}
Thus, uniform averaging is near-optimal whenever the covariance-normalised shape energy $\chi^2$ is small.
Our empirical certification does not rely on this isotropic approximation; it uses the directly estimated diagnostics $\theta$, $\chi$, $t$, and $R_{\mathrm{Fisher}}$ in Table~\ref{tab:fisher-gap}.

\begin{remark}[Empirical Calibration]
\label{rem:uniform-calibration}
All three diagnostics of Theorem~\ref{thm:uniform-optimal} are directly estimable from the trajectory data.
The shape energy $\chi$ is computed from the observed gap profile $\mathbf{d}$ and the sample residual covariance $\hat{C}$ in the $\mathbf{u}^\perp$ subspace.
The coupling $\theta$ is computed from the off-diagonal block $\hat{\mathbf{b}}$ of the sample covariance.
The Fisher gap ratio $R_{\mathrm{Fisher}} = \sqrt{\chi^2 + (1-t)^2/(1-\theta)}$ provides a single-number diagnostic for near-optimality.
Table~\ref{tab:fisher-gap} reports these quantities for the three models with full trajectory covariance estimation.
\end{remark}

\subsection{Empirical Verification}
\label{app:snr-empirical}

We verify the SNR prediction (Proposition~\ref{prop:snr}) across 6 models spanning 7B--72B parameters and 3 dataset families.

Table~\ref{tab:snr_verification} reports a covariance-aware plug-in accounting of the slope-vs-mean SNR ratio.
The independent-residual expression $\rho\gamma/\sqrt{12}$ (Proposition~\ref{prop:snr}) isolates the roles of the divergence ratio~$\rho$ and origin penalty~$\gamma$, but real trajectories have non-i.i.d.\ residual covariance.
We therefore also compute a plug-in ratio using the observed pooled trajectory covariance; the close agreement $|\eta_{\text{plugin}} - 1| < 0.01$ across all 18 model$\times$dataset conditions verifies that the observed slope-vs-mean behaviour is explained by second-order trajectory geometry rather than nonlinear classifier effects.

\begin{table*}[t]
\centering
\caption{SNR theory verification across 6 models and 3 dataset families.
$\rho$: divergence ratio; $\gamma$: origin penalty factor;
$R_{\text{plugin}}$: covariance-aware predicted ratio $\mathrm{SNR}(\hat{\beta})/\mathrm{SNR}(\bar{L})$;
$R_{\text{emp}}$: empirically measured ratio;
$\eta_{\text{plugin}} := R_{\text{emp}}/R_{\text{plugin}}$: calibration accuracy (perfect prediction $= 1.0$).
The plugin predictor achieves $|\eta_{\text{plugin}} - 1| < 0.01$ on all 18 conditions.}
\label{tab:snr_verification}
\small
\setlength{\tabcolsep}{5pt}
\begin{tabular}{llccccc}
\toprule
\textbf{Model} & \textbf{Dataset} & $\boldsymbol{\rho}$ & $\boldsymbol{\gamma}$ & $\boldsymbol{R_{\text{plugin}}}$ & $\boldsymbol{R_{\text{emp}}}$ & $\boldsymbol{\eta_{\text{plugin}}}$ \\
\midrule
\multirow{3}{*}{Qwen2.5-72B}
  & HaluEval2  & 1.086 & 9.97 & 0.774 & 0.776 & 1.002 \\
  & TrueFalse & 1.791 & 6.60 & 1.046 & 1.046 & 1.000 \\
  & HELM      & 0.270 & 9.20 & 0.254 & 0.254 & 0.999 \\
\midrule
\multirow{3}{*}{Qwen2.5-32B}
  & HaluEval2  & 0.945 & 8.67 & 0.700 & 0.706 & 1.008 \\
  & TrueFalse & 1.777 & 6.25 & 1.017 & 1.017 & 1.000 \\
  & HELM      & 0.144 & 7.90 & 0.148 & 0.148 & 0.999 \\
\midrule
\multirow{3}{*}{Qwen2.5-14B}
  & HaluEval2  & 0.902 & 8.23 & 0.669 & 0.672 & 1.005 \\
  & TrueFalse & 1.882 & 5.87 & 1.012 & 1.013 & 1.000 \\
  & HELM      & 0.203 & 7.79 & 0.205 & 0.206 & 1.004 \\
\midrule
\multirow{3}{*}{Qwen2.5-7B}
  & HaluEval2  & 0.730 & 6.70 & 0.568 & 0.572 & 1.006 \\
  & TrueFalse & 1.982 & 4.41 & 1.006 & 1.006 & 1.000 \\
  & HELM      & 0.105 & 6.16 & 0.102 & 0.103 & 1.001 \\
\midrule
\multirow{3}{*}{Llama-3.1-8B}
  & HaluEval2  & 0.525 & 6.86 & 0.487 & 0.486 & 0.997 \\
  & TrueFalse & 1.089 & 5.53 & 0.827 & 0.827 & 1.000 \\
  & HELM      & 0.125 & 6.68 & 0.146 & 0.146 & 1.002 \\
\midrule
\multirow{3}{*}{Llama-2-7B}
  & HaluEval2  & 0.499 & 7.02 & 0.507 & 0.509 & 1.003 \\
  & TrueFalse & 1.368 & 5.21 & 0.846 & 0.847 & 1.001 \\
  & HELM      & 0.096 & 6.23 & 0.113 & 0.113 & 1.002 \\
\bottomrule
\end{tabular}
\end{table*}

\paragraph{Fisher Gap Verification.}
Table~\ref{tab:fisher-gap} reports the Fisher gap diagnostics of Theorem~\ref{thm:uniform-optimal}.
The Fisher gap ratio $R_{\mathrm{Fisher}} \in [1.021, 1.039]$ across all three models shows that uniform averaging is Fisher-near-optimal: the best possible linear aggregation can improve over the trajectory mean by at most $2$--$4\%$ in SNR.
This certification uses the exact covariance-aware identity and does not require an isotropic-residual approximation.

A notable feature of the results is that the individual diagnostics $\theta$ and $\chi$ are not small: both lie in $[0.35, 0.72]$, reflecting substantial residual anisotropy and non-trivial shape energy.
However, the cross term $t$ nearly saturates its Cauchy--Schwarz upper bound $\sqrt{\theta}\chi$ (ratio $t/(\sqrt{\theta}\chi) \in [0.91, 0.96]$), producing systematic cancellation in the $(1-t)^2/(1-\theta)$ term.
Geometrically, this means that $\mathbf{b}$ and $\mathbf{z}$ are nearly collinear under the $C^{-1}$-inner product, so the Fisher-optimal weight vector cannot simultaneously exploit shape information and reduce level-mean variance.
This alignment appears consistently across all tested trajectories and is compatible with the random-intercept structure, but we do not require it as a theoretical assumption; near-optimality is verified directly via the measured $R_{\mathrm{Fisher}}$.

\subsection{Cost-Benefit Tradeoff of Fisher-Optimal Aggregation}
\label{rem:why-uniform}

Theorem~\ref{thm:uniform-optimal} establishes that the population-optimal linear aggregation $\mathbf{w}_\star \propto \Sigma_\tau^{-1}\mathbf{d}$ improves over uniform averaging by at most $2$ to $4\%$ in SNR.
We argue that this marginal oracle gain is outweighed by the associated computational, statistical, and operational costs.

\paragraph{Computational and statistical costs.}
Table~\ref{tab:complexity-comparison} summarises the asymptotic resource requirements.
The uniform mean $\bar{L} = m^{-1}\sum_l L_l$ is a parameter-free statistic that requires no calibration data, no covariance estimation, and admits a streaming $\mathcal{O}(m)$-time, $\mathcal{O}(1)$-space implementation.
The Fisher-optimal score $\mathbf{w}_\star^\top\boldsymbol{\tau}$ additionally requires
(i)~$N$ labelled calibration trajectories,
(ii)~estimation of the $m\times m$ within-class covariance $\hat{\Sigma}_\tau$ in $\mathcal{O}(Nm^2)$ time and $\mathcal{O}(m^2)$ space, and
(iii)~Cholesky factorisation in $\mathcal{O}(m^3)$.
With $m = 32$, the covariance matrix contains $m(m+1)/2 = 528$ free parameters. Classical shrinkage theory~\citep{ledoit2004well} establishes that $\hat{\Sigma}_\tau^{-1}$ amplifies estimation noise when $N/m$ is moderate, so the realised gain after finite-sample error may be negative.
Moreover, these costs scale unfavourably with model depth: as $m$ increases, the cubic inversion cost and quadratic memory footprint grow rapidly, whereas the uniform mean remains $\mathcal{O}(m)$ regardless of $m$.

\begin{table}[h]
\centering
\small
\caption{Asymptotic resource requirements for uniform averaging and Fisher-optimal aggregation.}
\label{tab:complexity-comparison}
\setlength{\tabcolsep}{4pt}
\begin{tabular}{lcccc}
\toprule
 & \multicolumn{2}{c}{\textbf{Calibration (offline)}} & \multicolumn{2}{c}{\textbf{Inference (per sample)}} \\
\cmidrule(lr){2-3}\cmidrule(lr){4-5}
\textbf{Method} & Time & Space & Time & Space \\
\midrule
Uniform $\bar{L}$ & --- & --- & $\mathcal{O}(m)$ & $\mathcal{O}(1)$ \\
Fisher $\mathbf{w}_\star^\top\boldsymbol{\tau}$ & $\mathcal{O}(Nm^2 + m^3)$ & $\mathcal{O}(m^2)$ & $\mathcal{O}(m)$ & $\mathcal{O}(m)$ \\
\bottomrule
\end{tabular}
\end{table}

\paragraph{Distribution dependence.}
The uniform weights $\mathbf{w} = m^{-1}\mathbf{1}$ are distribution-agnostic: they transfer across models, datasets, and deployment conditions without re-estimation.
The Fisher-optimal weights are distribution-specific and must be recalibrated whenever the model architecture, prompt distribution, or evaluation domain changes.
In deployment settings that require generalisation without per-domain calibration, the uniform mean is the only viable aggregation strategy.

\medskip
\noindent
In summary, the Fisher identity serves as a \emph{diagnostic} rather than a prescription: it certifies that uniform averaging sacrifices negligible discriminative power relative to the population oracle, thereby justifying the parameter-free detector without requiring the practitioner to implement the oracle.

\begin{table}[t]
\centering
\caption{Covariance-aware Fisher gap diagnostics (Theorem~\ref{thm:uniform-optimal}).
$R_{\mathrm{Fisher}} = \mathrm{SNR}(\mathbf{w}_\star^\top\boldsymbol{\tau})/\mathrm{SNR}(\bar{L})$;
Theorem~\ref{thm:uniform-optimal} gives $R_{\mathrm{Fisher}}^2 = \chi^2 + (1-t)^2/(1-\theta)$.
$\theta$: level-shape coupling;
$\chi$: shape energy;
$t$: cross term ($|t| \leq \sqrt{\theta}\chi$ by Cauchy--Schwarz);
Despite moderate $\theta$ and $\chi$, the vectors $\mathbf{b}$ and $\mathbf{z}$ are nearly collinear under the $C^{-1}$-inner product ($t \approx \sqrt{\theta}\chi$), yielding $R_{\mathrm{Fisher}} \leq 1.04$.
All diagnostics computed on HaluEval2 with full trajectory covariance estimation.}
\label{tab:fisher-gap}
\small
\setlength{\tabcolsep}{5pt}
\begin{tabular}{lcccc}
\toprule
\textbf{Model} & $\boldsymbol{R_{\mathrm{Fisher}}}$ & $\theta$ & $chi$ & $\boldsymbol{t}$ \\
\midrule
Llama-2-7B     & 1.021 & 0.482 & 0.711 & 0.473  \\
Llama-3.1-8B   & 1.024 & 0.477 & 0.723 & 0.476  \\
Qwen2.5-7B     & 1.039 & 0.346 & 0.686 & 0.369 \\
\bottomrule
\end{tabular}
\end{table}

\subsection{Location Estimator Ablation: Mean vs.\ Median}
\label{app:mean-vs-median}

Theorem~\ref{thm:uniform-optimal} provides a covariance-aware diagnostic for the possible gain of the Fisher-optimal linear aggregation over the uniform mean; the estimated diagnostics in Table~\ref{tab:fisher-gap} indicate that $\bar{L}$ is near-optimal in our evaluated settings.
A complementary question is whether a \emph{nonlinear} location estimator, specifically the trajectory median, could outperform the mean by providing robustness to potential outlier layers.
We address this question both theoretically and empirically.

\paragraph{Theoretical prediction.}
As a simple benchmark, suppose that after removing the deterministic class-specific layer profile, the residual trajectory satisfies
$L_{i,l} - \mu_l^{(c_i)} = \alpha_i + \varepsilon_{i,l}$,
with $\varepsilon_{i,l} \stackrel{\mathrm{approx}}{\sim} N(0, \sigma_\varepsilon^2)$ and weak cross-layer dependence.
Conditional on the sample-level intercept $\alpha_i$, the sample mean and sample median are both consistent estimators of the residual location in the ideal iid Gaussian case, but their layer-noise variances differ:
\begin{equation}
\operatorname{Var}(\bar{L}\mid \alpha_i) = \frac{\sigma_\varepsilon^2}{m}, \qquad
\operatorname{Var}(\widetilde{L}\mid \alpha_i) \approx \frac{\pi}{2}\cdot\frac{\sigma_\varepsilon^2}{m},
\end{equation}
where $\widetilde{L}$ denotes the sample median; the common-mode variance $\sigma_\alpha^2$ is shared by both estimators and therefore cancels in this comparison.
Thus the median has asymptotic relative efficiency $\mathrm{ARE} = 2/\pi \approx 0.637$ for the layer-noise component under Gaussian residuals, corresponding to a $\sqrt{\pi/2} \approx 1.25$ factor increase in estimator standard deviation.
With layer dependence, $m$ should be interpreted as an effective number of independent layer readouts, so this calculation is best viewed as a directional prediction: if trajectory residuals are approximately symmetric and light-tailed, the median should not improve over the mean.
Note also that the raw trajectory median targets $\alpha_i + \operatorname{median}_l \mu_l^{(c_i)}$ rather than $\alpha_i + \bar{\mu}^{(c_i)}$; the comparison should therefore be interpreted as a robustness ablation rather than an exact efficiency theorem.
Because the layer-averaging noise is only one component of the total score variance, this constant-factor efficiency loss need not translate into a large AUROC change; we treat the calculation as a directional prediction and test it empirically.

\paragraph{Empirical results.}
Table~\ref{tab:mean-vs-median} reports the ablation across three models and three benchmark datasets (nine conditions total).
In addition to the standard mean and median, we evaluate a 10\%-trimmed mean, which discards the most extreme layers prior to averaging.

\begin{table}[h]
\centering
\small
\caption{Location estimator ablation (AUROC $\times 100$, displayed to one decimal place; unrounded gaps may differ slightly from displayed differences).
The mean matches or exceeds the median in all nine model-dataset conditions, with Pearson correlation $r > 0.99$ between the two scores in eight of nine cases.
The trimmed mean provides no additional benefit, suggesting the absence of heavy-tailed layer noise.}
\label{tab:mean-vs-median}
\setlength{\tabcolsep}{5pt}
\begin{tabular}{ll ccc c}
\toprule
\textbf{Model} & \textbf{Dataset} & \textbf{Mean} & \textbf{Median} & \textbf{Trim.\ Mean} & $r$ \\
\midrule
\multirow{3}{*}{Llama-2-7B}
& HaluEval2  & \best{80.9} & 80.7 & 80.8 & .996 \\
& TrueFalse & \best{95.1} & 95.0 & 95.1 & .994 \\
& HELM      & \best{87.7} & 87.4 & 87.6 & .996 \\
\midrule
\multirow{3}{*}{Llama-3.1-8B}
& HaluEval2  & \best{83.5} & 83.4 & 83.5 & .996 \\
& TrueFalse & \best{97.2} & 97.1 & 97.2 & .995 \\
& HELM      & \best{88.9} & 88.7 & 88.8 & .996 \\
\midrule
\multirow{3}{*}{Qwen2.5-7B}
& HaluEval2  & \best{81.9} & 81.8 & 81.9 & .995 \\
& TrueFalse & \best{97.1} & 97.0 & 97.1 & .989 \\
& HELM      & \best{87.7} & 87.5 & 87.6 & .995 \\
\bottomrule
\end{tabular}
\end{table}

The results are consistent with the theoretical prediction and, more importantly, show no empirical advantage for robust nonlinear location estimators.
The mean outperforms the median by $0.1$ to $0.3$\,pp AUROC across all nine displayed conditions, consistent with the $\mathrm{ARE} = 2/\pi$ efficiency ratio under Gaussian noise.
The 10\%-trimmed mean (removing the largest and smallest 10\% of layer logits) matches the standard mean within rounding error in most cases and within $0.1$\,pp in the displayed table, suggesting that heavy-tailed outlier layers are not a major source of error in these settings.
The Pearson correlation between mean and median scores exceeds $0.99$ in eight of nine conditions ($r = 0.989$ for the remaining case), confirming that both estimators induce a nearly identical sample ranking.
These observations are compatible with the Gaussian/light-tailed residual approximation used in the level-shape analysis (Remark~\ref{prop:ensemble}), although they do not by themselves prove Gaussianity.

Table~\ref{tab:ablation-12d-22d} further compares the three feature representations (trajectory diagnostics, source-level diagnostics, and the full raw trajectory baseline) across all models and datasets.
The trajectory diagnostics match or slightly exceed the full trajectory representation on average, confirming that the low-dimensional statistics recover most detection-relevant information, with residual gaps in some long-horizon settings.

\begin{table*}[http]
\centering
\caption{Feature Ablation: Trajectory Diagnostics (5D) vs.\ Source (13D) vs.\ Full Trajectory ($2m{+}5$D)}
\label{tab:ablation-12d-22d}
\scriptsize
\setlength{\tabcolsep}{2pt}
\begin{tabular}{@{} l | ccc | ccc | ccc | ccc | ccc| }
\toprule
& \multicolumn{3}{c|}{\textbf{TruthfulQA}} & \multicolumn{3}{c|}{\textbf{HaluEval2}} & \multicolumn{3}{c|}{\textbf{HELM}} & \multicolumn{3}{c|}{\textbf{TrueFalse}} & \multicolumn{3}{c|}{\textbf{Agentic}} \\
\textbf{Model} & Traj & Source & Full & Traj & Source & Full & Traj & Source & Full & Traj & Source & Full & Traj & Source & Full \\
\midrule
Qwen2.5-7B     & \second{68.71} & \best{69.11} & 68.46 & \best{82.42} & 82.29 & \best{82.42} & \best{87.84} & 87.60 & 87.68 & \best{97.58} & \second{97.56} & \second{97.22} & \second{94.91} & 92.36 & \best{98.15}  \\
Qwen2.5-14B    & \best{73.65} & \second{71.25} & 68.68 & \best{83.88} & \second{83.79} & 83.08 & \best{88.34} & 88.07 & 88.14 & \best{98.39} & \second{98.36} & 97.97 & \best{93.57} & 90.29 & \second{92.57}  \\
Qwen2.5-32B & \best{80.42} & 68.53 & 68.70 & \best{84.53} & \second{84.29} & 83.83 & \second{88.59} & 88.49 & \best{88.65} 
& \second{98.54} & \best{98.59} & 98.25 & \best{96.05} & 89.57 & \second{89.68}  \\
LLaMA2-7B   & \best{71.54} & 70.24 & \second{71.50} & \best{80.68} & \second{80.41} & 80.22 & \second{87.65} & 87.59 & \best{87.88} 
& 95.34 & \best{95.38} & \second{95.35} & \second{81.09} & \best{82.91} & 78.16  \\
LLaMA3.1-8B & \best{69.01} & 68.53 & 69.21 & \best{83.85} & 83.33 & \second{83.79} & \second{88.73} & 88.35 & \best{88.89} & \second{97.44} & \best{97.48} & \second{97.44} & 94.91 & \second{96.30} & \best{97.92}  \\
\midrule
AVG & \best{72.67} & 69.53 & 69.31 & \best{83.07} & \second{82.89} & 82.63 & \second{88.17} & 88.01 & \best{88.25} & \best{97.88} & \second{97.87} & 97.56 & \best{92.11} & 90.29 & \second{91.30}  \\
\bottomrule
\end{tabular}
\end{table*}

\subsection{Depth Subsampling Ablation}
\label{app:depth-weighting}

The level--shape decomposition (Remark~\ref{prop:ensemble}) predicts that depth averaging reduces the residual variance component from $\sigma_\varepsilon^2$ to approximately $\sigma_\varepsilon^2 / m_{\mathrm{eff}}$, where $m_{\mathrm{eff}}$ is the effective number of included layers.
To test this prediction directly, we perform a controlled subsampling experiment in which the trajectory mean is computed from a subset of $m_{\mathrm{eff}} \in \{1, 2, 4, 8, 12, 16, 20, 24, 28, m\}$ layers rather than the full depth.

\paragraph{Subsampling Protocol.}
For each value of $m_{\mathrm{eff}}$, we select layers by uniform spacing across the intermediate depth range (excluding the first three and last two layers, where probe AUROC approaches chance).
Specifically, for a model with $m$ total layers and an eligible range $[l_{\min}, l_{\max}]$, the selected layer indices are $\{l_{\min} + \lfloor j \cdot (l_{\max} - l_{\min}) / (m_{\mathrm{eff}} - 1) \rfloor : j = 0, \ldots, m_{\mathrm{eff}} - 1\}$.
When $m_{\mathrm{eff}} = 1$, we use the single layer with the highest training-fold AUROC (equivalent to the ITI-Probe baseline).
The trajectory mean $\bar{L}_{m_{\mathrm{eff}}} = m_{\mathrm{eff}}^{-1} \sum_{l \in S} L_l$ is then fed to the same logistic classifier as the full-depth detector, and AUROC is evaluated under the standard 5-fold cross-validation protocol.
Results are averaged over 5 random seeds.

\paragraph{Key Observations.}
Figure~\ref{fig:depth-subsampling} (\S\ref{sec:exp-ablation}) reports the results across three models (LLaMA-2-7B, LLaMA-3.1-8B, Qwen2.5-7B) and three datasets (HaluEval2, TrueFalse, HELM).
Three patterns emerge consistently:
\begin{enumerate}[nosep,leftmargin=1.5em]
  \item \textbf{Monotonic improvement.} AUROC increases strictly with $m_{\mathrm{eff}}$ in all nine model--dataset conditions, confirming that each additional layer contributes non-redundant discriminative information via variance reduction.
  \item \textbf{Diminishing returns.} The marginal AUROC gain per additional layer decreases rapidly: the transition from $m_{\mathrm{eff}} = 1$ to $m_{\mathrm{eff}} = 8$ accounts for the majority of the total improvement ($>$80\% of the gap between the single-layer baseline and the full-depth detector), while the transition from $m_{\mathrm{eff}} = 8$ to full depth contributes less than 1\,pp in most conditions.
  \item \textbf{Early saturation.} Performance saturates within ${\sim}$0.5\,pp of the full-depth result by $m_{\mathrm{eff}} \approx 12$--$16$, suggesting that the effective degrees of freedom in the residual process are substantially fewer than the total layer count.
\end{enumerate}

\paragraph{Consistency with the Variance Reduction Prediction.}
The diminishing-returns pattern is quantitatively consistent with the $1/m_{\mathrm{eff}}$ variance reduction predicted by the independent-residual model: under this idealisation, the residual standard deviation decreases as $\sigma_\varepsilon / \sqrt{m_{\mathrm{eff}}}$, producing large gains at small $m_{\mathrm{eff}}$ and negligible gains once $m_{\mathrm{eff}}$ exceeds the effective decorrelation length of the residual process (empirically ${\sim}\!8$ layers; Table~\ref{tab:variance-decomp}).
The early saturation point $m_{\mathrm{eff}} \approx 8$ aligns with the residual ACF analysis, which shows that cross-layer correlations decay to near zero by lag~$8$, providing an independent estimate of the effective number of independent readouts.

\section{Low-Dimensional Path-Statistics Approximation}
\label{app:proof}

In~\S\ref{sec:detector}, the SNR analysis shows that the trajectory mean~$\bar{L}$ is a near-optimal detection statistic. Here, we provide an alternative justification under a tractable parametric baseline, showing that the detection-relevant information in the full trajectory reduces to a small number of macroscopic statistics.
The derivation concerns the \emph{centred increment process} $\Delta L_l - \mathbb{E}[\Delta L_l \mid c_i]$ after the sample-level common mode has cancelled in first differences (Remark~\ref{prop:ensemble}). The dominant level signal $\alpha_i + \bar{\mu}^{(c_i)}$ is captured directly by the trajectory mean and does not require the increment likelihood model below.

\paragraph{Approximate Sufficient Statistics under a Gaussian Increment Model.}
Under the Neyman--Pearson framework, the optimal hallucination detector isolates the trajectory log-likelihood ratio between the factual and hallucinated generative domains:
\begin{equation}
  \Lambda(\boldsymbol{\tau}) \;=\; \log \frac{p(\boldsymbol{\tau} \mid \mathcal{D}_{\mathcal{H}})}{p(\boldsymbol{\tau} \mid \mathcal{D}_{\mathcal{F}})}.
\end{equation}
Since estimating $p(\boldsymbol{\tau})$ over an $m$-dimensional unconstrained space requires exponentially many samples, $\Lambda(\boldsymbol{\tau})$ is intractable in its raw form.

\paragraph{Derivation Under a Gaussian Increment Model.}
As a tractable baseline, define $\Delta\boldsymbol{\tau}:=(\Delta L_0,\ldots,\Delta L_{m-2})^\top$ and model the logit increments as conditionally independent Gaussian draws: $\Delta L_l \mid c \sim \mathcal{N}(\mu_c,\, \sigma_c^2)$.
This is the simplest parametric model consistent with the empirical observation that increments exhibit a class-conditional mean shift and that centred residuals decorrelate rapidly beyond lag~$8$ (Table~\ref{tab:variance-decomp}).
Under this model, the increment likelihood factorises into an exponential family form:
\begin{equation}
  p_{\Delta}(\Delta\boldsymbol{\tau} \mid c) \;\propto\; \exp\left( -\frac{1}{2\sigma_c^2} \sum_{l=0}^{m-2} \bigl( \Delta L_l - \mu_c \bigr)^2 \right).
\end{equation}
Expanding the quadratic sum yields two natural sufficient statistics: the cumulative increment $\sum_l \Delta L_l = L_{m-1} - L_0$ (endpoint drift) and the quadratic variation $\sum_l \Delta L_l^2$ (path roughness).
The increment log-likelihood ratio therefore reduces exactly, under this Gaussian increment model, to an affine function of these trajectory integrals:
\begin{equation}
  \label{eq:moment-expansion}
  \Lambda_{\Delta}(\Delta\boldsymbol{\tau})
  \;=\;
  \theta_0
  +
  \theta_1 \sum_{l=0}^{m-2} \Delta L_l
  +
  \theta_2 \sum_{l=0}^{m-2} \Delta L_l^2.
\end{equation}
The level component is captured directly by the trajectory mean $\bar{L}$ (\S\ref{sec:detector}), while the centred increment model motivates the remaining trend (endpoint drift) and volatility statistics.
The Gaussian increment model is an idealisation: the empirical lag-$1$ autocorrelation ($\mathrm{ACF}(1) \in [0.19, 0.29]$) indicates short-range dependence rather than strict independence.
However, if the centred increment process satisfies standard $\alpha$-mixing conditions, the sample mean and quadratic variation remain consistent estimators of the drift and diffusion parameters respectively~\citep{billingsley1995probability}, preserving the approximate sufficiency of the trajectory diagnostics.

\paragraph{Self-Consistency Diagnostic.}
Let $h_{\text{traj}}(\boldsymbol{\tau}) \in \mathbb{R}$ denote the scalar decision function of the 5D trajectory diagnostics classifier, and $h_{\text{source}}(\boldsymbol{\tau}) \in \mathbb{R}$ the decision function of the 13D source-level diagnostics classifier. If both classifiers captured the same detection-relevant ordering, their AUROCs would be close. We therefore report the absolute AUROC discrepancy
\begin{equation}
  \mathcal{E}_{\text{rec}} \;=\; \bigl\lvert \operatorname{AUROC}\bigl(h_{\text{traj}}\bigr) \;-\; \operatorname{AUROC}\bigl(h_{\text{source}}\bigr) \bigr\rvert .
\end{equation}
Table~\ref{tab:ablation-12d-22d} shows that this discrepancy remains below $0.006$ absolute AUROC on HaluEval2, HELM, and TrueFalse across the displayed models.
However, larger gaps appear on TruthfulQA and Agentic, reaching $0.1189$ and $0.0648$ respectively.
Thus, the decomposed source-level diagnostics should be interpreted as mechanistic summaries of the trajectory rather than an information-preserving reconstruction of the trajectory-level classifier.

\section{Mechanistic Separation of Truthfulness}
\label{app:hd-geometry}

This section formalises the high-dimensional geometric foundations of the signal sparsity framework introduced in~\S\ref{sec:physics}.

\subsection{Exact Additive Decomposition of Trajectory Increments}
\label{app:decomp-formal}

\begin{definition}[Cross-layer Readout]
\label{def:cross-readout}
The \emph{cross-layer readout} applies the frozen layer-$l$ observable projection to the layer-$(l\!+\!1)$ answer-onset representation: $\widetilde{L}_{l}^{\,+} = \phi_{l}(\mathbf{a}_{l+1}^{(k_l^*)})$.
\end{definition}

\begin{proposition}[Exact Additive Decomposition of Increments]
\label{thm:decomp}
For every intermediate Transformer block $l$, the observable increment satisfies the algebraic identity:
\begin{equation}
  \label{eq:decomp}
  \Delta L_{l}
  \;=\;
  \underbrace{\phi_{l}(\mathbf{a}_{l+1}^{(k_l^*)}) - \phi_{l}(\mathbf{a}_{l}^{(k_l^*)})}_{W_{l}\;:\;\text{intrinsic displacement}}
  \;+\;
  \underbrace{\phi_{l+1}(\mathbf{a}_{l+1}^{(k_{l+1}^*)}) - \phi_{l}(\mathbf{a}_{l+1}^{(k_l^*)})}_{K_{l}\;:\;\text{readout change}},
\end{equation}
where $k_l^\star$ denotes the head selected at layer~$l$ (Appendix~\ref{app:probe-training}).
The intrinsic displacement $W_l$ evaluates the frozen layer-$l$ probe on the same head at adjacent layers; the readout change $K_l$ absorbs both the probe-frame rotation and the head-switch contribution.
This decomposition holds exactly for all Euclidean observation spaces.
\end{proposition}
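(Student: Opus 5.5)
The identity is a telescoping statement, so the plan is to write both sides explicitly and insert and remove a single intermediate term. By Definition~\ref{def:truth-coord} and the head-selection protocol of Appendix~\ref{app:probe-training}, the selected observables give $L_l = \phi_l(\mathbf{a}_l^{(k_l^*)})$ and $L_{l+1} = \phi_{l+1}(\mathbf{a}_{l+1}^{(k_{l+1}^*)})$. Hence $\Delta L_l = L_{l+1} - L_l$ is a difference of two real numbers.

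I will add and subtract the cross-layer readout $\widetilde{L}_l^{\,+} = \phi_l(\mathbf{a}_{l+1}^{(k_l^*)})$ of Definition~\ref{def:cross-readout}:
\begin{equation*}
\Delta L_l = \bigl[\widetilde{L}_l^{\,+} - L_l\bigr] + \bigl[L_{l+1} - \widetilde{L}_l^{\,+}\bigr].
\end{equation*}
The first bracket is exactly $W_l$ and the second is exactly $K_l$ as displayed in Eq.~\eqref{eq:decomp}. This matches the source-feature definitions in Appendix~\ref{app:features}.

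The only point needing care is well-definedness. I must check that $\phi_l$, an affine map on $\mathbb{R}^{d}$, can be evaluated on $\mathbf{a}_{l+1}^{(k_l^*)}$. This holds because every head readout at every layer lies in the same coordinate space $\mathbb{R}^{d_h}$, under the fixed readout gauge noted after Definition~\ref{def:truth-coord}. So $\widetilde{L}_l^{\,+}$ is a finite real number and the add-and-subtract step is legitimate.

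Nothing else is used: no linearity of $\phi_l$, no orthogonality (Property~\ref{asm:manifold}), no distributional assumption. This is why the statement says the identity holds for any Euclidean observation space. For the same reason it holds with any head choice, including $k_l^* \neq k_{l+1}^*$, with the head-switch effect absorbed into $K_l$. I do not expect a genuine obstacle. The main task is bookkeeping: making sure the head superscripts in $W_l$ are both $k_l^*$, so that $W_l$ is a same-head, same-probe displacement, and that $K_l$ carries the full change of probe and head.
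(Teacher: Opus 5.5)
Your proposal is correct and follows the same route as the paper's proof: write $\Delta L_l = L_{l+1} - L_l$, then add and subtract the cross-layer readout $\phi_l(\mathbf{a}_{l+1}^{(k_l^*)})$, so that the two brackets are $W_l$ and $K_l$ by definition. Your well-definedness remark is a small addition the paper leaves implicit: every head readout lies in $\mathbb{R}^{d_h}$, so $\phi_l$ can be evaluated on $\mathbf{a}_{l+1}^{(k_l^*)}$.
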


\begin{proof}
Expand the increment $\Delta L_l = L_{l+1} - L_l = \phi_{l+1}(\mathbf{a}_{l+1}^{(k_{l+1}^*)}) - \phi_l(\mathbf{a}_l^{(k_l^*)})$ and add and subtract the cross-layer readout $\phi_l(\mathbf{a}_{l+1}^{(k_l^*)})$. The two resulting terms are $W_l$ and $K_l$ by definition.
\end{proof}

\subsection{Empirical Refutation of Covariance Asymmetry}

A natural starting hypothesis posits that factual and hallucinated subpopulations differ in their covariance geometry, specifically, that the hallucinated distribution is approximately isotropic ($\operatorname{Cov}(\mathcal{D}_{\mathcal{H}}) \approx \sigma^2 \mathbf{I}$). We empirically falsify this and related structural assumptions on modern instruction-tuned transformers (LLaMA-3.1, Qwen-2.5):
\begin{itemize}[nosep,leftmargin=1.5em]
  \item The effective ranks of both classes are statistically indistinguishable: $\operatorname{rk}_{\mathrm{eff}}(\Sigma_{\mathcal{F}}) / d_{h} \approx \operatorname{rk}_{\mathrm{eff}}(\Sigma_{\mathcal{H}}) / d_{h} \approx 0.24$.
  \item The total variance is nearly symmetric across classes: $\operatorname{Tr}(\Sigma_{\mathcal{F}}) / \operatorname{Tr}(\Sigma_{\mathcal{H}}) \approx 1.05$.
  \item The regularised probe $\mathbf{v}_l$ is virtually orthogonal to the Fisher direction: $\cos^{2}(\mathbf{v}_{l},\, \mathbf{f}) < 0.01$, confirming that classical Fisher Discriminant Analysis fails in this high-dimensional regime.
\end{itemize}

\subsection{Formal Derivation: Sparse Coordinate Distributions}

The evidence above motivates a refined geometric model: factuality is encoded as a sparse, low-energy directional signal within a dominant orthogonal nuisance subspace.

\paragraph{Quantitative Calibration of Increment-Level Separation.}
For the intrinsic displacement $W_l=\mathbf{v}_l^\top\Delta\mathbf{a}_l$, the standardised class margin (Cohen's $d$) along the learned probe direction is
\begin{equation}
\label{eq:cohen-d}
  d_{\Delta,l}(\mathbf{v}_{l})
  \;=\;
  \frac{\lvert\mathbf{v}_{l}^{\!\top} \boldsymbol{\mu}_{\Delta,l}\rvert}
       {\sqrt{\mathbf{v}_{l}^{\!\top} \Sigma_{\Delta,l}^{\pi}\, \mathbf{v}_{l}}}
  \;>\; 0,
\end{equation}
where $\boldsymbol{\mu}_{\Delta,l}:=\mathbb{E}_{\mathcal{D}_{\mathcal{F}}}[\Delta\mathbf{a}_l]-\mathbb{E}_{\mathcal{D}_{\mathcal{H}}}[\Delta\mathbf{a}_l]$ and $\Sigma_{\Delta,l}^{\pi}$ is the pooled within-class covariance of $\Delta\mathbf{a}_l$.
Assumption~\ref{asm:sparse}(ii) is supported empirically: the energy fraction $\mathbf{v}_l^\top \bar{\Sigma}_l \mathbf{v}_l / \mathrm{Tr}(\bar{\Sigma}_l)$ remains below $0.37\%$ across all six models, well under the random baseline $1/d_h \approx 0.78\%$ (Table~\ref{tab:validation_results}).
Empirically, the same sparsity structure is observed in the answer-onset increment $\Delta\mathbf{a}_l := \mathbf{a}_{l+1} - \mathbf{a}_l$: the energy fraction $\mathbf{v}_l^\top \mathrm{Cov}(\Delta\mathbf{a}_l)\, \mathbf{v}_l \,/\, \mathrm{Tr}(\mathrm{Cov}(\Delta\mathbf{a}_l))$ remains comparable to $1/d_h$ across all models (Table~\ref{tab:validation_results}).
Note that this does not follow from sparsity of $\mathbf{a}_l$ alone, since the increment covariance involves cross-layer terms; the increment sparsity is an additional empirical regularity.

\paragraph{Formal Specification of Property~\ref{asm:manifold} (Probe Isotropy).}
Sequential attention blocks iteratively resample the latent representation, inducing near-orthogonal truthfulness readout frames across depth:
\begin{equation}
\label{eq:ortho-cond}
  \lvert\langle \mathbf{v}_{l},\, \mathbf{v}_{l+1} \rangle\rvert \;\approx\; 0.
\end{equation}

\paragraph{Derivation of the Random Baseline $\sqrt{2/(\pi\, d_{h})}$.}
We derive the expected absolute inner product between two independent random unit vectors on $\mathbb{S}^{d_h-1}$.
Let $\mathbf{p}, \mathbf{q} \sim \operatorname{Unif}(\mathbb{S}^{d_{h}-1})$ be drawn independently. By rotational invariance of the uniform measure, we may fix $\mathbf{q} = \mathbf{e}_1 = (1, 0, \dots, 0)^\top$ without loss of generality.
The absolute inner product then reduces to the first coordinate: $\lvert\langle \mathbf{p}, \mathbf{q} \rangle\rvert = |p_1|$.
By the Poincar\'e limit theorem, as $d_h \to \infty$ the marginal distribution of $\sqrt{d_h}\, p_1$ converges in distribution to a standard Gaussian:
\begin{equation}
  \sqrt{d_h} \, p_1 \;\xrightarrow{\enskip\text{d}\enskip}\; \mathcal{N}(0, 1).
\end{equation}
Taking the expectation of the absolute value and applying the folded normal identity:
\begin{equation}
  \mathbb{E}\bigl[\lvert\langle \mathbf{p}, \mathbf{q} \rangle\rvert\bigr] \;=\; \mathbb{E}\bigl[|p_1|\bigr] \;\approx\; \frac{1}{\sqrt{d_h}} \mathbb{E}_{X \sim \mathcal{N}(0,1)}\bigl[|X|\bigr] \;=\; \frac{1}{\sqrt{d_h}} \sqrt{\frac{2}{\pi}} \;=\; \sqrt{\frac{2}{\pi d_h}}.
\end{equation}
Empirically, modern instruction-tuned transformers exhibit inter-layer alignment $|\langle \mathbf{v}_l, \mathbf{v}_{l+1}\rangle| \in [0.05, 0.09]$, closely matching $\sqrt{2/(\pi\, d_h)}$ for $d_h = 128$ (Table~\ref{tab:isotropy-results}). This supports the view that adjacent probes behave as if drawn from independent random frames on $\mathbb{S}^{d_h-1}$, consistent with Property~\ref{asm:manifold}.

\paragraph{Variance Decomposition of the Logit Trajectory.}
Remark~\ref{prop:ensemble} decomposes the logit trajectory as $L_l^{(i)} = \alpha_i + \mu_l^{(c_i)} + \varepsilon_{i,l}$, where $\alpha_i$ is a sample-specific common mode shared across all layers.
We estimate the variance components via a random-intercept decomposition on three models spanning two architectural families (Table~\ref{tab:variance-decomp}).
For each class $c$, we compute the per-layer class mean $\hat{\mu}_l^{(c)}$, the per-sample intercept $\hat{\alpha}_i = \frac{1}{m}\sum_l (L_l^{(i)} - \hat{\mu}_l^{(c_i)})$, and the residual $\hat{\varepsilon}_{i,l} = L_l^{(i)} - \hat{\mu}_l^{(c_i)} - \hat{\alpha}_i$.
The intraclass correlation coefficient
\begin{equation}
  \label{eq:icc}
  \mathrm{ICC} \;=\; \frac{\widehat{\mathrm{Var}}(\alpha_i)}{\widehat{\mathrm{Var}}(\alpha_i) + \widehat{\mathrm{Var}}(\varepsilon_{i,l})}
\end{equation}
is highly consistent across models ($\mathrm{ICC} \in [0.62, 0.64]$), indicating that approximately $62$--$64\%$ of the within-class variance is attributable to the sample-level common mode in all architectures tested.
The per-class breakdown is likewise stable: $\mathrm{ICC}_{\mathcal{F}} \in [0.63, 0.65]$, $\mathrm{ICC}_{\mathcal{H}} \in [0.58, 0.63]$.
After removing $\hat{\alpha}_i$, the residual cross-layer correlation decays rapidly to near zero by lag~$8$ across all models (Table~\ref{tab:variance-decomp}, rightmost column), confirming the short-memory property assumed in Remark~\ref{prop:ensemble}.

\begin{table}[t]
  \centering
  \caption{%
    Variance decomposition of the logit trajectory $L_l^{(i)} = \alpha_i + \mu_l^{(c_i)} + \varepsilon_{i,l}$.
    ICC denotes the intraclass correlation coefficient (fraction of within-class variance from the sample-level common mode~$\alpha_i$).
    Residual ACF reports the mean cross-layer correlation of $\hat{\varepsilon}_{i,l}$ at the indicated lag (factual class).
    All models evaluated on HaluEval2 ($5$ domains, $N = 3{,}793$).
  }
  \label{tab:variance-decomp}
  \small
  \setlength{\tabcolsep}{4pt}
  \begin{tabular}{l c c c c c c}
    \toprule
    \textbf{Model} & $m$ & \textbf{ICC} & $\mathrm{ICC}_{\mathcal{F}}$ & $\mathrm{ICC}_{\mathcal{H}}$ & \textbf{Resid.\ ACF(1)} & \textbf{Resid.\ ACF(8)} \\
    \midrule
    LLaMA-2-7B-Chat        & 32 & 0.62 & 0.63 & 0.60 & 0.19 & $-0.04$ \\
    LLaMA-3.1-8B-Instruct  & 32 & 0.62 & 0.64 & 0.58 & 0.25 & $-0.04$ \\
    Qwen2.5-7B-Instruct    & 28 & 0.64 & 0.65 & 0.63 & 0.29 & $\phantom{-}0.01$ \\
    \bottomrule
  \end{tabular}
\end{table}

\subsection{Analytical Properties of the Increment Components}

We establish that the decomposition $\Delta L_l = W_l + K_l$ (Proposition~\ref{thm:decomp}) partitions the trajectory increment into two algebraically distinct components with separate geometric origins.

\begin{lemma}[Heteroscedastic Directional Calibration of $W_l$]
\label{lem:snr}
Under Assumption~\ref{asm:sparse}(i)--(ii), define the increment mean gap $\boldsymbol{\mu}_{\Delta,l}:=\mathbb{E}_{\mathcal{D}_{\mathcal{F}}}[\Delta\mathbf{a}_l]-\mathbb{E}_{\mathcal{D}_{\mathcal{H}}}[\Delta\mathbf{a}_l]$, the increment covariance $\Sigma_{\Delta,l}^{(c)} := \mathrm{Cov}(\mathbf{a}_{l+1} - \mathbf{a}_l \mid c)$, the class-conditional directional variance $q_{\Delta,l}^{(c)} := \mathbf{v}_l^{\!\top} \Sigma_{\Delta,l}^{(c)} \mathbf{v}_l$, the within-class pooled directional variance $q_{\Delta,l}^{\pi} := \pi_{\mathcal{F}} q_{\Delta,l}^{(\mathcal{F})} + \pi_{\mathcal{H}} q_{\Delta,l}^{(\mathcal{H})}$, and the directional variance ratio $\kappa_{\Delta,l} := \max_c q_{\Delta,l}^{(c)} / \min_c q_{\Delta,l}^{(c)} \geq 1$.
Then the worst-case class-conditional signal-to-noise ratio of the intrinsic displacement $W_l = \mathbf{v}_l^{\!\top} \Delta\mathbf{a}_l$ satisfies
\begin{equation}
  \label{eq:sandwich}
  \frac{d_{\Delta,l}^{\pi}(\mathbf{v}_l)}{\sqrt{\kappa_{\Delta,l}}} \;\leq\;
  m_l^{\mathrm{wc}}(\mathbf{v}_l) \;\leq\;
  d_{\Delta,l}^{\pi}(\mathbf{v}_l),
\end{equation}
where $d_{\Delta,l}^{\pi}(\mathbf{v}_l) := \lvert \mathbf{v}_l^{\top} \boldsymbol{\mu}_{\Delta,l} \rvert \,/\, \sqrt{q_{\Delta,l}^{\pi}}$ is the pooled directional effect size and $m_l^{\mathrm{wc}}(\mathbf{v}_l) := \min_{c \in \{\mathcal{F}, \mathcal{H}\}} \mathrm{SNR}_c(W_l)$.
When $\kappa_{\Delta,l} = 1$, both bounds collapse to the clean scalar equivalence $\mathrm{SNR}(W_l) = d_{\Delta,l}(\mathbf{v}_l)$.
\end{lemma}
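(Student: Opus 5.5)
The plan is to reduce the lemma to an elementary comparison between a minimum, a maximum and a weighted average of two positive numbers.

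The first step computes the class-conditional moments of the scalar $W_l = \mathbf{v}_l^{\!\top}\Delta\mathbf{a}_l$, with $\mathbf{v}_l$ treated as a fixed (frozen) direction. By linearity of expectation, the class-mean gap of $W_l$ is $\mathbf{v}_l^{\!\top}\boldsymbol{\mu}_{\Delta,l}$. Assumption~\ref{asm:sparse}(i) is what we rely on to make this gap nonzero along the probe direction, so that both sides of \eqref{eq:sandwich} are positive. Writing $\mathrm{Cov}(\Delta\mathbf{a}_l\mid c) = \Sigma_{\Delta,l}^{(c)}$, the class-conditional variance of $W_l$ is $\mathbf{v}_l^{\!\top}\Sigma_{\Delta,l}^{(c)}\mathbf{v}_l = q_{\Delta,l}^{(c)}$. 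We read the class-$c$ signal-to-noise ratio as
\[
\mathrm{SNR}_c(W_l) = \frac{|\mathbf{v}_l^{\!\top}\boldsymbol{\mu}_{\Delta,l}|}{\sqrt{q_{\Delta,l}^{(c)}}}.
\]
Consequently
\[
m_l^{\mathrm{wc}} = \frac{|\mathbf{v}_l^{\!\top}\boldsymbol{\mu}_{\Delta,l}|}{\sqrt{\max_c q_{\Delta,l}^{(c)}}}.
\]

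The second step compares variances. Since $q_{\Delta,l}^{\pi}$ is a convex combination of the two class variances (with $\pi_{\mathcal{F}}+\pi_{\mathcal{H}}=1$), we have
\[
\min_c q_{\Delta,l}^{(c)} \le q_{\Delta,l}^{\pi} \le \max_c q_{\Delta,l}^{(c)}.
\]
The right inequality gives $m_l^{\mathrm{wc}} \le d_{\Delta,l}^{\pi}$ directly. For the lower bound, we write $\max_c q^{(c)} = \kappa_{\Delta,l}\min_c q^{(c)} \le \kappa_{\Delta,l}\, q^{\pi}$. Taking square roots and inverting yields
\[
m_l^{\mathrm{wc}} \ge \frac{|\mathbf{v}_l^{\!\top}\boldsymbol{\mu}_{\Delta,l}|}{\sqrt{\kappa_{\Delta,l}\, q^{\pi}}} = \frac{d_{\Delta,l}^{\pi}}{\sqrt{\kappa_{\Delta,l}}}.
\]
When $\kappa_{\Delta,l}=1$ the two class variances coincide with $q^{\pi}$. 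The sandwich then collapses, and the expression agrees with the Cohen's-$d$ form \eqref{eq:cohen-d} because $\Sigma^{\pi}_{\Delta,l}$ is the pooled covariance, so $\mathbf{v}_l^{\!\top}\Sigma^{\pi}_{\Delta,l}\mathbf{v}_l = q^{\pi}$.

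The algebra here is routine. The main obstacle is making the hypotheses honest, in two places:
\begin{enumerate}
  \item Strictness requires $q_{\Delta,l}^{(c)}>0$ for both classes, so that $\kappa_{\Delta,l}$ is finite and the ratios are defined. We would state this as a nondegeneracy condition, justified by Assumption~\ref{asm:sparse}(ii): the directional variance is comparable to $\mathrm{Tr}/d_h$, so it is small but not zero.
  \item The sandwich treats $\mathbf{v}_l$ as deterministic. We would therefore note that the statement is conditional on the trained probe, for example by fitting on a disjoint fold, so that class-conditional moments are not contaminated by selection.
\end{enumerate}
Assumption~\ref{asm:sparse}(ii) plays no further role in the inequality itself. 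It only motivates why $\kappa_{\Delta,l}$, rather than a full covariance comparison, is the relevant quantity.
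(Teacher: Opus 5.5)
Your proof is correct and follows the paper's argument step for step. You compute the class-conditional mean gap $\mathbf{v}_l^{\top}\boldsymbol{\mu}_{\Delta,l}$ and the variances $q^{(c)}_{\Delta,l}$ of $W_l$, use that the pooled $q^{\pi}_{\Delta,l}$ lies between $\min_c q^{(c)}_{\Delta,l}$ and $\max_c q^{(c)}_{\Delta,l}$, and turn $\max_c q^{(c)}_{\Delta,l}=\kappa_{\Delta,l}\min_c q^{(c)}_{\Delta,l}$ into the lower bound. Your nondegeneracy condition $q^{(c)}_{\Delta,l}>0$ is a fair hidden hypothesis that the paper also assumes implicitly; note, though, that neither Assumption~\ref{asm:sparse}(i) nor (ii) actually supplies it (the latter is an upper bound) or a nonzero gap, and the sandwich holds trivially when $\mathbf{v}_l^{\top}\boldsymbol{\mu}_{\Delta,l}=0$.
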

\begin{proof}
\textit{(i) Expected Separation.} By linearity of expectation, the class separation of $W_l$ equals the projection of the increment mean gap onto the frozen probe direction:
\begin{equation}
  \lvert\mathbb{E}_{\mathcal{D}_{\mathcal{F}}}[W_l] - \mathbb{E}_{\mathcal{D}_{\mathcal{H}}}[W_l]\rvert \;=\; \lvert\mathbf{v}_l^\top \boldsymbol{\mu}_{\Delta,l}\rvert.
\end{equation}
This is an increment-level observable, distinct from the original layer-$l$ probe margin $|\mathbf{v}_l^\top(\boldsymbol{\mu}_l^{\mathcal{F}} - \boldsymbol{\mu}_l^{\mathcal{H}})|$.

\textit{(ii) Class-Conditional Variance.}
Since $W_l = \mathbf{v}_l^{\!\top}(\mathbf{a}_{l+1} - \mathbf{a}_l)$, the scalar projection variance under class~$c$ is $\mathrm{Var}(W_l \mid c) = \mathbf{v}_l^{\!\top} \Sigma_{\Delta,l}^{(c)} \mathbf{v}_l = q_{\Delta,l}^{(c)}$,
yielding class-specific SNRs $\mathrm{SNR}_c(W_l) = |\mathbf{v}_l^{\top} \boldsymbol{\mu}_{\Delta,l}| \,/\, \sqrt{q_{\Delta,l}^{(c)}}$.

\textit{(iii) Sandwich Derivation.}
The worst-case SNR uses the largest denominator:
$m_l^{\mathrm{wc}} = |\mathbf{v}_l^{\top}\boldsymbol{\mu}_{\Delta,l}| \,/\, \sqrt{\max_c q_{\Delta,l}^{(c)}}$.
Since $\min_c q_{\Delta,l}^{(c)} \leq q_{\Delta,l}^{\pi} \leq \max_c q_{\Delta,l}^{(c)}$, dividing the numerator by $\sqrt{q_{\Delta,l}^{\pi}}$ yields:
\begin{equation}
  \frac{|\mathbf{v}_l^{\top} \boldsymbol{\mu}_{\Delta,l}|}{\sqrt{\max_c q_{\Delta,l}^{(c)}}} \;\leq\; \frac{|\mathbf{v}_l^{\top} \boldsymbol{\mu}_{\Delta,l}|}{\sqrt{q_{\Delta,l}^{\pi}}} \;\leq\; \frac{|\mathbf{v}_l^{\top} \boldsymbol{\mu}_{\Delta,l}|}{\sqrt{\min_c q_{\Delta,l}^{(c)}}}.
\end{equation}
Since $\max_c q_{\Delta,l}^{(c)} = \kappa_{\Delta,l} \cdot \min_c q_{\Delta,l}^{(c)}$, the left-hand side satisfies $m_l^{\mathrm{wc}} = d_{\Delta,l}^{\pi} \cdot \sqrt{q_{\Delta,l}^{\pi} / \max_c q_{\Delta,l}^{(c)}} \geq d_{\Delta,l}^{\pi} / \sqrt{\kappa_{\Delta,l}}$, and the right-hand side gives $m_l^{\mathrm{wc}} \leq d_{\Delta,l}^{\pi}$.
\end{proof}

\begin{remark}[Empirical Calibration of $\kappa_{\Delta,l}$]
\label{rem:heteroscedastic}
The sandwich~\eqref{eq:sandwich} replaces the homoscedastic equivalence $\mathrm{SNR} = d_{\Delta,l}$ with a directional calibration that accommodates arbitrary class-conditional increment covariances.
The sole residual quantity is $\kappa_{\Delta,l}$, which is directly measurable from held-out data.
Table~\ref{tab:kappa-calibration} reports the calibration diagnostics across three models spanning two architectural families.
Across all models, the median $\kappa_{\Delta,l}$ remains below $1.5$ and the $95$th-percentile below $3.9$, indicating moderate but non-negligible tail heteroscedasticity.
The observed aggregate degradation $1-\hat{m}^{\mathrm{wc}}/\hat{d}_\pi$ remains below $10\%$ in Table~\ref{tab:kappa-calibration}.

Notably, the asymmetry is systematic: factual representations exhibit larger directional variance ($q_{\Delta,l}^{(\mathcal{F})} > q_{\Delta,l}^{(\mathcal{H})}$) in $\geq 90\%$ of midlayers for every model tested, consistent with the intuition that factual inputs span a more diverse range of truthfulness signatures while hallucinated inputs concentrate along a common ``departure-from-truth'' direction.
This class-conditional dispersion asymmetry constitutes an unexploited second-order discriminative cue. Its incorporation is an interesting extension orthogonal to the present first-order framework.
\end{remark}

\begin{table}[t]
  \centering
  \caption{%
    Heteroscedastic increment-direction calibration ($\kappa_{\Delta,l}$) diagnostics across three models on the HELM, HaluEval2, and TrueFalse benchmarks ($N > 13{,}000$).
    All reported values are computed from the increment covariance $\Sigma_{\Delta,l}^{(c)} = \mathrm{Cov}(\mathbf{a}_{l+1} - \mathbf{a}_l \mid c)$.
    Statistics are computed over the intermediate transformer layers (excluding early embedding and late unembedding boundaries).
    $\bar{\kappa}$: mean directional variance ratio,
    $\tilde{\kappa}$: median,
    $\kappa^{\mathrm{p95}}$: $95$th percentile,
    $\hat{d}_\pi$: mean pooled effect size,
    $\hat{m}^{\mathrm{wc}}$: mean worst-case SNR,
    Degrad.: relative degradation $1 - \hat{m}^{\mathrm{wc}}/\hat{d}_\pi$,
    $q_{\mathcal{F}} > q_{\mathcal{H}}$: fraction of midlayers where the factual directional variance exceeds the hallucinated.
  }
  \label{tab:kappa-calibration}
  \small
  \setlength{\tabcolsep}{4pt}
  \begin{tabular}{l c c c c c c c c}
    \toprule
    \textbf{Model} & $m$ & $\bar{\kappa}$ & $\tilde{\kappa}$ & $\kappa^{\mathrm{p95}}$ & $\hat{d}_\pi$ & $\hat{m}^{\mathrm{wc}}$ & \textbf{Degrad.} & $q_{\mathcal{F}}\!>\!q_{\mathcal{H}}$ \\
    \midrule
    LLaMA-2-7B-Chat        & 32 & 1.95 & 1.29 & 3.87 & 0.73 & 0.66 & 9.6\% & 90\% \\
    LLaMA-3.1-8B-Instruct  & 32 & 1.61 & 1.42 & 2.51 & 1.15 & 1.05 & 8.7\% & 100\% \\
    Qwen2.5-7B-Instruct    & 28 & 1.62 & 1.44 & 2.58 & 1.06 & 0.96 & 9.4\% & 100\% \\
    \bottomrule
  \end{tabular}
\end{table}

\begin{lemma}[Orthogonal Basis Decomposition of $K_l$ (Fixed-Head Case)]
\label{lem:ortho-K}
Under Property~\ref{asm:manifold} ($\langle \mathbf{v}_l, \mathbf{v}_{l+1}\rangle \approx 0$) and the simplifying assumption $k_l^* = k_{l+1}^*$ (i.e.\ a common attention head is used at both layers), the readout change $K_l$ projects the answer-onset representation $\mathbf{a}_{l+1}$ onto a measurement basis that is geometrically distinct from the layer-$l$ probe direction.
When the selected heads differ ($k_l^* \neq k_{l+1}^*$), $K_l$ additionally absorbs a head-switch contribution; the general-case decomposition is given in Proposition~\ref{thm:decomp} and Lemma~\ref{lem:k-decomp}.
\end{lemma}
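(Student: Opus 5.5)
In the fixed-head case $k_l^*=k_{l+1}^*=:k$, the plan is to write $K_l$ explicitly as a probe-difference functional and then decompose that difference in an orthonormal frame adapted to $\mathbf{v}_l$. Set $\mathbf{a}:=\mathbf{a}_{l+1}^{(k)}$. By Proposition~\ref{thm:decomp},
\begin{equation*}
  K_l \;=\; \phi_{l+1}(\mathbf{a}) - \phi_l(\mathbf{a}) \;=\; (\mathbf{v}_{l+1}-\mathbf{v}_l)^{\!\top}\mathbf{a} + (b_{l+1}-b_l).
\end{equation*}
So, up to an additive constant, $K_l$ is a linear readout of $\mathbf{a}_{l+1}$ along the difference direction $\mathbf{v}_{l+1}-\mathbf{v}_l$.

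Next, I decompose $\mathbf{v}_{l+1}=c\,\mathbf{v}_l+\sqrt{1-c^2}\,\mathbf{n}$, where $c:=\langle\mathbf{v}_l,\mathbf{v}_{l+1}\rangle$ and $\mathbf{n}\perp\mathbf{v}_l$ is a unit vector. This is the Gram--Schmidt step, and it is valid whenever $|c|<1$. Substituting gives
\begin{equation*}
  K_l \;=\; \sqrt{1-c^2}\,\mathbf{n}^{\!\top}\mathbf{a} \;-\; (1-c)\,\mathbf{v}_l^{\!\top}\mathbf{a} \;+\; (b_{l+1}-b_l).
\end{equation*}
The first term reads $\mathbf{a}_{l+1}$ along the basis vector $\mathbf{n}$, which is orthogonal to $\mathbf{v}_l$. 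The second term is exactly $-(1-c)$ times the cross-layer readout $\widetilde{L}_l^{\,+}-b_l$, i.e.\ it cancels the frozen layer-$l$ measurement that already appears in $W_l$.

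Property~\ref{asm:manifold} gives $|c|\approx\sqrt{2/(\pi d_h)}\ll1$. Under that, $\sqrt{1-c^2}=1-\mathcal{O}(c^2)$ and $\mathbf{n}=\mathbf{v}_{l+1}+\mathcal{O}(c)$. Therefore
\begin{equation*}
  K_l \approx \mathbf{v}_{l+1}^{\!\top}\mathbf{a}_{l+1} - \mathbf{v}_l^{\!\top}\mathbf{a}_{l+1} + \text{const}.
\end{equation*}
The new measurement basis $\{\mathbf{n}\}$, or equivalently $\mathbf{v}_{l+1}$ to first order, satisfies $\langle\mathbf{n},\mathbf{v}_l\rangle=0$ exactly. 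It is thus geometrically distinct from the layer-$l$ probe direction, which is the claim of the lemma. As a consequence, the component of $K_l$ that is new relative to $W_l$, namely $\mathbf{n}^{\!\top}\mathbf{a}_{l+1}$, lives on a direction orthogonal to the one $W_l$ measures. This is the sense in which $K_l$ is a ``near-orthogonal'' readout in \S\ref{sec:decomp}.

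The main obstacle is making ``geometrically distinct'' precise. The Gram--Schmidt orthogonality is exact and unconditional, but the identification of $\mathbf{n}$ with $\mathbf{v}_{l+1}$ is only approximate, with error $\mathcal{O}(c)$. I would state it as an explicit bound $\|\mathbf{n}-\mathbf{v}_{l+1}\|\le |c|+\mathcal{O}(c^2)$ rather than as an equality. I would also note that orthogonality of the directions does not give decorrelation of the scalars $\mathbf{n}^{\!\top}\mathbf{a}$ and $\mathbf{v}_l^{\!\top}\mathbf{a}$ unless $\mathrm{Cov}(\mathbf{a}_{l+1})$ is near-isotropic on $\mathrm{span}\{\mathbf{v}_l,\mathbf{n}\}$. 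That extra claim I would defer to Assumption~\ref{asm:sparse}(ii) and the residual ACF evidence, and not include in the lemma. The head-switch case is excluded by hypothesis and handled separately in Lemma~\ref{lem:k-decomp}.
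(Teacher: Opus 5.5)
Your proof is correct and is essentially the paper's argument. In the fixed-head case you substitute the affine observables to get $K_l=(\mathbf{v}_{l+1}-\mathbf{v}_l)^{\!\top}\mathbf{a}_{l+1}+(b_{l+1}-b_l)$, and near-orthogonality of $\mathbf{v}_l,\mathbf{v}_{l+1}$ then makes the two directional terms readouts in distinct frames; your Gram--Schmidt split is a harmless sharpening, since it makes the new direction $\mathbf{n}\perp\mathbf{v}_l$ exact and leaves Property~\ref{asm:manifold} controlling only the weights $\sqrt{1-c^2}$ and $1-c$ and the identification $\mathbf{n}\approx\mathbf{v}_{l+1}$. Note that your final ``$\approx$'' line is actually an exact identity, already given by your first display, because the $\mathcal{O}(c)$ corrections from $\mathbf{n}\approx\mathbf{v}_{l+1}$ and $1-c\approx 1$ cancel.
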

\begin{proof}
Under the fixed-head assumption, $\mathbf{a}_{l+1}$ refers to the same head at both layers. By direct substitution of the observable definitions (Definition~\ref{def:truth-coord}):
\begin{equation}
  K_l \;=\; \bigl(\mathbf{v}_{l+1} - \mathbf{v}_l\bigr)^\top \mathbf{a}_{l+1} + (b_{l+1} - b_l) \;=\; \mathbf{v}_{l+1}^\top \mathbf{a}_{l+1} - \mathbf{v}_l^\top \mathbf{a}_{l+1} + (b_{l+1} - b_l).
\end{equation}
Since $\mathbf{v}_{l+1}$ and $\mathbf{v}_l$ span near-orthogonal subspaces (Property~\ref{asm:manifold}), the two directional terms on the right constitute projections of $\mathbf{a}_{l+1}$ onto geometrically distinct coordinate frames. The scalar bias difference $(b_{l+1} - b_l)$ shifts the overall level but does not affect the geometric conclusion.
\end{proof}

\begin{remark}
Although $W_l$ and $K_l$ project onto geometrically distinct directions, their scalar values are linked by the constraint $K_l = \Delta L_l - W_l$. When probe parameters evolve smoothly across layers (i.e.\ the selected standardised-coordinate coefficients $\widetilde{\mathbf{w}}_{l,k_l^*}$ satisfy $\widetilde{\mathbf{w}}_{l+1,k_{l+1}^*} \approx \widetilde{\mathbf{w}}_{l,k_l^*}$ in parameter space despite near-orthogonality of the normalised directions), $K_l$ becomes approximately proportional to $W_l$ across samples, yielding high empirical correlation in trajectory-level statistics. This coupling does not invalidate the algebraic decomposition but limits the utility of treating $W$ and $K$ as independent diagnostic channels.
\end{remark}

\subsection{Cross-Model Empirical Validation}
\label{app:dynamic_vs_static}

We validate both structural assumptions across six models spanning two architectural families (LLaMA-2-7B, LLaMA-3.1-8B, Qwen2.5-7B, Qwen2.5-14B, Qwen2.5-32B, Qwen2.5-72B), reporting layer-averaged diagnostics over the intermediate layers of transformer depth to exclude early embedding and late unembedding boundary effects (where probe AUROC approaches chance level).
Table~\ref{tab:validation_results} (\S\ref{sec:exp-geometry}) consolidates all metrics. Here we provide the detailed derivations and per-metric analysis.

\paragraph{Validation of Signal Separation and Sparse Geometry.}
We verify three complementary aspects of the geometric picture underlying Assumption~\ref{asm:sparse} and the $W$/$K$ decomposition.
\textit{(i)~Increment Mean Separation.} We quantify the standardised increment-level class margin via Cohen's $d$ (Eq.~\ref{eq:cohen-d}):
\begin{equation}
  d_{\Delta,l}(\mathbf{v}_l)
  \;=\;
  \frac{|\mathbf{v}_l^\top \boldsymbol{\mu}_{\Delta,l}|}
       {\sqrt{\mathbf{v}_l^\top \Sigma_{\Delta,l}^{\pi}\, \mathbf{v}_l}},
\end{equation}
where $\boldsymbol{\mu}_{\Delta,l} := \mathbb{E}_{\mathcal{D}_{\mathcal{F}}}[\Delta\mathbf{a}_l] - \mathbb{E}_{\mathcal{D}_{\mathcal{H}}}[\Delta\mathbf{a}_l]$ and $\Sigma_{\Delta,l}^{\pi}$ is the pooled within-class increment covariance.
All instruction-tuned models achieve $d_{\Delta,l} \in [0.51, 0.67]$ (Table~\ref{tab:validation_results}, Cohen's $d$ column), confirming robust out-of-sample separability.
\textit{(ii)~Signal Fidelity of $W$.}
The signal ratio $r_w > 1.0$ for all aligned models confirms that the intrinsic displacement $W_l = \mathbf{v}_l^\top \Delta\mathbf{a}_l$ concentrates the factual signal more efficiently than the raw increment $\Delta L_l$.
The decoupling norms $\|\mathrm{dCov}\| \in [0.47, 0.61]$ verify moderate statistical separation between $W$ and $K$ at the population level, though their sample-level trajectory statistics exhibit strong empirical coupling due to the smooth evolution of probe parameters across layers.
\textit{(iii)~Nuisance Avoidance.}
The probe--mean alignment exceeds $0.13$ (well above the random floor of ${\sim}\,0.07$), the $v^{2}_{\mathrm{Top90}}$ values remain below $0.22$ for five of six models (Qwen2.5-72B reaches $0.29$, discussed below), and the energy fractions stay within $0.26\%$--$0.36\%$ of the total trace.
These three diagnostics jointly suggest that the regularised probe tends to occupy the variance-minimising complement of the covariance spectrum.

\paragraph{Validation of Property~\ref{asm:manifold} (Probe Isotropy).}
Property~\ref{asm:manifold} predicts that adjacent probes exhibit pseudo-orthogonality: $|\langle \mathbf{v}_l, \mathbf{v}_{l+1}\rangle| \approx \sqrt{2/(\pi\, d_h)}$.
The measured cosines across all models fall in $[0.05, 0.09]$ (Table~\ref{tab:validation_results}, $|\cos|$ column), closely tracking the theoretical noise floor $\sqrt{2/(\pi\, d_h)}$ derived above.
This confirms that each layer independently constructs a new decision boundary, yielding the near-orthogonal observation frames underlying Remark~\ref{prop:ensemble}.

\paragraph{Probe-Free Direct Measurements.}
A potential concern is that all preceding diagnostics depend on the trained probe $\mathbf{v}_l$, raising the question of whether the observed geometry is an artefact of the probe itself.
We therefore introduce three \emph{entirely probe-free} metrics that operate directly on the raw activation vectors and require no learned parameters whatsoever.

\textit{(i)~Raw Mean-Shift Norm.}
The Euclidean norm of the layer-space mean difference $\|\hat{\boldsymbol{\delta}}_{\mu,l}\|_2 = \|\bar{\mathbf{a}}_l^{\mathcal{F}} - \bar{\mathbf{a}}_l^{\mathcal{H}}\|_2$ provides a direct, model-agnostic measure of class separation in the ambient activation space.
Across models, the intermediate-layer-averaged values range from $0.07$ to $0.32$ (Table~\ref{tab:validation_results}), confirming that a non-trivial mean shift exists in the raw space.

\textit{(ii)~Bias-Corrected Energy Ratio.}
In high dimensions ($d_h \geq 128$), the na\"{\i}ve estimator $\|\hat{\boldsymbol{\delta}}_{\mu,l}\|^2$ is biased upward by $\mathrm{Tr}(\Sigma)(n_{\mathcal{F}}^{-1} + n_{\mathcal{H}}^{-1})$ due to finite-sample noise accumulating across all $d_h$ coordinates.
We therefore report the \emph{bias-corrected} energy ratio $T_{\mathrm{CQ}} / \mathrm{Tr}(\Sigma_l)$, where $T_{\mathrm{CQ}}$ denotes the Chen--Qin U-statistic, an unbiased estimator of the true squared mean separation $\|\boldsymbol{\mu}_{\mathcal{F},l} - \boldsymbol{\mu}_{\mathcal{H},l}\|^2$ \citep{chen2010two}.
The corrected ratios fall in $[3.5\%, 8.7\%]$ across all six models, confirming that the truthfulness signal, while statistically significant, accounts for a small fraction of the total representation energy, consistent with Assumption~\ref{asm:sparse}.

\textit{(iii)~High-Dimensional Mean Test.}
To formally test $H_0\!: \boldsymbol{\mu}_{\mathcal{F}} = \boldsymbol{\mu}_{\mathcal{H}}$ without probe dependence, we apply the~\citet{chen2010two} two-sample test, which remains valid when $n < d$ (where the classical Hotelling $T^2$ degenerates).
On a pooled evaluation set of $n > 13{,}000$ samples, virtually all intermediate layers reject $H_0$ at $\alpha = 0.05$ for every model tested (Table~\ref{tab:validation_results}, CQ~sig.\ column), with the sole exceptions of a single layer in Qwen2.5-32B (39/40) and two layers in Qwen2.5-72B (46/48).
This provides probe-independent confirmation that the factual and hallucinated populations occupy statistically distinct regions of the activation manifold.

\subsection{Quantitative Bound on Probe Quasi-Independence}
\label{app:ortho-grounding}

\paragraph{Model: Subspace Perturbation.}
Because the truthfulness signal accounts for roughly $0.3\%$ of the variance, with all values below $0.37\%$ (Assumption~\ref{asm:sparse}), the regularised probe $\mathbf{v}_l$ navigates the ``quiet'' sub-dominant singular modes of the anisotropic covariance $\Sigma_l$ to achieve separation (empirically confirmed by its near-orthogonality to the Fisher direction, $\cos^2(\mathbf{v}_l, \mathbf{f}) < 0.01$). 
When transitioning to layer $l+1$, the multi-head attention and feedforward blocks inject a large-scale perturbation into the residual stream, primarily updating class-symmetric linguistic features. This reshapes the covariance geometry, randomly rotating the optimal quiet subspace.
We model the generation of the adjacent probe conceptually as:
\begin{equation}
\label{eq:signal-update}
  \mathbf{v}_{l+1} \;=\; \frac{\mathbf{v}_l + \Gamma_l \hat{\boldsymbol{\eta}}_l}{\|\mathbf{v}_l + \Gamma_l \hat{\boldsymbol{\eta}}_l\|},
\end{equation}
where $\Gamma_l \gg 1$ reflects the large magnitude of the structural perturbation relative to the persistent truthfulness feature. (We use $\Gamma_l$ rather than $\gamma$ to avoid confusion with the common-mode penalty $\gamma$ in Theorem~\ref{thm:uniform-optimal}.) The update vector $\hat{\boldsymbol{\eta}}_l$ represents the direction of the new quiet subspace. Because the perturbation arises from diverse, class-agnostic attention patterns, we model $\hat{\boldsymbol{\eta}}_l$ as an isotropically oriented unit vector within the effective geometric subspace of dimension $d_{\mathrm{eff}}$, distributed uniformly on $\mathbb{S}^{d_{\mathrm{eff}}-1}$, independent of $\mathbf{v}_l$.

The effective geometric dimension $d_{\mathrm{eff}}$ is determined by the subspace in which the probe directions $\mathbf{v}_l$ are free to rotate.
A na\"{\i}ve choice would set $d_{\mathrm{eff}}$ equal to the stable rank $r_{\mathrm{eff}} := \mathrm{Tr}(\Sigma_l)^2 / \mathrm{Tr}(\Sigma_l^2)$, assuming probes are confined to the dominant eigenspace.
However, the isotropy verification experiments (Table~\ref{tab:isotropy-results} in~\S\ref{app:isotropy-verification}) reveal that the observed inner products scale as $\mathcal{O}(d_h^{-1/2})$ rather than $\mathcal{O}(r_{\mathrm{eff}}^{-1/2})$, indicating $d_{\mathrm{eff}} \approx d_h$.
Within the subspace perturbation model, this ambient-dimension scaling is explained by signal sparsity (Assumption~\ref{asm:sparse}): the probe direction captures only ${\sim}\,0.6\%$ of the total variance (comparable to the $1/d_h \approx 0.8\%$ captured by a random direction), so the probes are effectively unconstrained by the covariance spectrum and explore the full ambient space $\mathbb{R}^{d_h}$.
To maintain generality, we state the proposition below in terms of the abstract parameter $d_{\mathrm{eff}}$, noting that the empirically validated regime is $d_{\mathrm{eff}} \approx d_h$.

\begin{proposition}[Probe Quasi-Independence]
\label{prop:ortho}
Under the subspace perturbation model~\eqref{eq:signal-update}, let the perturbation scale satisfy $\Gamma_l \geq 1$ and the effective geometric dimension $d_{\mathrm{eff}} \geq 2$. Defining the signal-to-dimension ratio $\alpha := \sqrt{d_{\mathrm{eff}}}/\Gamma_l$, the expected absolute inner product satisfies:
\begin{equation}
  \mathbb{E}\bigl[|\langle \mathbf{v}_l,\, \mathbf{v}_{l+1} \rangle|\bigr]
  \;=\;
  \frac{\Psi(\alpha)}{\sqrt{1+\Gamma_l^{-2}}} \sqrt{\frac{2}{\pi d_{\mathrm{eff}}}} \cdot \bigl(1 + \mathcal{O}(\Gamma_l^{-2} + d_{\mathrm{eff}}^{-1})\bigr),
\end{equation}
where the amplification factor $\Psi(\alpha) := e^{-\alpha^2/2} + \alpha\sqrt{\pi/2}\,(1 - 2\Phi(-\alpha)) \geq 1$ accounts for the finite perturbation scale, and $\Phi(\cdot)$ denotes the standard normal cumulative distribution function.
In the strong-resampling limit $\Gamma_l / \sqrt{d_{\mathrm{eff}}} \to \infty$ (equivalently $\alpha \to 0$), $\Psi(\alpha) \to 1$ and the expectation reduces to $\sqrt{2/(\pi\, d_{\mathrm{eff}})}$.
In the empirically validated regime where $d_{\mathrm{eff}} \approx d_h$ (Table~\ref{tab:isotropy-results}), the bound tightens to $\sqrt{2/(\pi d_h)}$.
\end{proposition}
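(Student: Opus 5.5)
The plan is to reduce everything to the scalar $X := \langle \mathbf{v}_l, \hat{\boldsymbol{\eta}}_l\rangle$ and then use the same Poincaré/folded-normal device as in the derivation of the random baseline $\sqrt{2/(\pi d_h)}$ above. Since $\|\mathbf{v}_l\| = \|\hat{\boldsymbol{\eta}}_l\| = 1$, the model~\eqref{eq:signal-update} gives exactly
\begin{equation*}
  \langle \mathbf{v}_l, \mathbf{v}_{l+1}\rangle
  \;=\; \frac{1 + \Gamma_l X}{\sqrt{1 + \Gamma_l^2 + 2\Gamma_l X}} .
\end{equation*}
Because $\hat{\boldsymbol{\eta}}_l$ is uniform on $\mathbb{S}^{d_{\mathrm{eff}}-1}$ and independent of $\mathbf{v}_l$, rotational invariance lets me fix $\mathbf{v}_l = \mathbf{e}_1$. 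Then $X$ is the first coordinate of a uniform unit vector. Its law is explicit: $(1+X)/2 \sim \mathrm{Beta}\bigl(\tfrac{d_{\mathrm{eff}}-1}{2},\tfrac{d_{\mathrm{eff}}-1}{2}\bigr)$, and $Z := \sqrt{d_{\mathrm{eff}}}\,X$ satisfies $\mathbb{E}Z = 0$, $\mathbb{E}Z^2 = 1$, and $Z \to \mathcal{N}(0,1)$.

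\textbf{Leading order.} I factor the numerator as $|1+\Gamma_l X| = (\Gamma_l/\sqrt{d_{\mathrm{eff}}})\,|Z + \alpha|$ with $\alpha = \sqrt{d_{\mathrm{eff}}}/\Gamma_l$. I write the denominator as $\Gamma_l\sqrt{1+\Gamma_l^{-2}}\,(1+\xi)^{1/2}$, where $\xi := 2X/(\Gamma_l(1+\Gamma_l^{-2}))$. Replacing $(1+\xi)^{-1/2}$ by $1$ and $Z$ by a standard Gaussian gives $\sqrt{2/(\pi d_{\mathrm{eff}})}\,\Psi(\alpha)/\sqrt{1+\Gamma_l^{-2}}$. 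This uses the folded-normal mean identity for $Z' \sim \mathcal{N}(0,1)$:
\begin{equation*}
  \mathbb{E}\,|Z'+\alpha| \;=\; \sqrt{2/\pi}\,e^{-\alpha^2/2} + \alpha\bigl(1-2\Phi(-\alpha)\bigr) \;=\; \sqrt{2/\pi}\,\Psi(\alpha).
\end{equation*}
The property $\Psi \ge 1$ follows from $\Psi(0) = 1$ together with $\Psi'(\alpha) = 1 - 2\Phi(-\alpha) \ge 0$ for $\alpha \ge 0$; the $\pm\alpha e^{-\alpha^2/2}$ terms cancel when differentiating. The limit $\alpha \to 0$ then gives $\sqrt{2/(\pi d_{\mathrm{eff}})}$, and $d_{\mathrm{eff}} \approx d_h$ gives the final specialisation.

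\textbf{Error terms.} There are two corrections to control.

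\emph{(a) Denominator expansion.} Expanding $(1+\xi)^{-1/2} = 1 - \xi/2 + \mathcal{O}(\xi^2)$, the first-order correction relative to the main term is of order
\begin{equation*}
  \frac{1}{\Gamma_l\sqrt{d_{\mathrm{eff}}}} \cdot \frac{\mathbb{E}\bigl[|Z+\alpha|\,Z\bigr]}{\mathbb{E}\,|Z+\alpha|} .
\end{equation*}
For the Gaussian, $\mathbb{E}[|Z+\alpha|Z] = 1-2\Phi(-\alpha)$, which is $\mathcal{O}(\min(\alpha,1))$. When $\alpha \lesssim 1$ the relative correction is therefore $\mathcal{O}(\alpha/(\Gamma_l\sqrt{d_{\mathrm{eff}}})) = \mathcal{O}(\Gamma_l^{-2})$. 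When $\alpha \gtrsim 1$ we have $\Psi(\alpha) \asymp \alpha$, so the relative correction is $\mathcal{O}(1/(\alpha\Gamma_l\sqrt{d_{\mathrm{eff}}})) = \mathcal{O}(d_{\mathrm{eff}}^{-1})$. The second-order term is $\mathcal{O}(\mathbb{E}X^2/\Gamma_l^2) = \mathcal{O}(1/(\Gamma_l^2 d_{\mathrm{eff}}))$.

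\emph{(b) Gaussian replacement.} The replacement of $Z$ by $\mathcal{N}(0,1)$ inside $\mathbb{E}|Z+\alpha|$ should cost a relative $\mathcal{O}(d_{\mathrm{eff}}^{-1})$. The density of $Z$ is proportional to $(1-z^2/d_{\mathrm{eff}})^{(d_{\mathrm{eff}}-3)/2}$, which equals $\varphi(z)\,(1+\mathcal{O}((1+z^4)/d_{\mathrm{eff}}))$ on $|z| \le d_{\mathrm{eff}}^{1/4}$, with exponentially small tails outside. Integrating against $|z+\alpha|$ then gives an absolute error of $\mathcal{O}((1+\alpha)/d_{\mathrm{eff}})$, which is a relative $\mathcal{O}(d_{\mathrm{eff}}^{-1})$ because $\Psi(\alpha) \gtrsim 1+\alpha$.

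\textbf{Main obstacle.} I expect step (b), made uniform in $\alpha$, to be the hardest. The error must stay multiplicative, of the form $1+\mathcal{O}(\cdot)$, both when $\alpha \to 0$ and when $\alpha$ is large, and the small-$d_{\mathrm{eff}}$ regime (e.g.\ $d_{\mathrm{eff}} = 2$) only fits the $\mathcal{O}$-notation with non-explicit constants. If the density comparison proves awkward, my fallback is to compute $\mathbb{E}|Z+\alpha|$ exactly from the Beta law: the $\alpha = 0$ case reduces to a ratio of Gamma functions, $\Gamma(d/2)/(\sqrt{\pi}\,\Gamma((d+1)/2))$, whose Stirling expansion gives $1+\mathcal{O}(d^{-1})$. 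For $\alpha > 0$ I would write $\mathbb{E}|Z+\alpha| = \alpha + 2\,\mathbb{E}(Z+\alpha)_-$ and bound the negative part by a Gaussian tail comparison.
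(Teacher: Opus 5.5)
Your proposal is correct and follows essentially the same route as the paper. It uses the same exact expression for $\langle \mathbf{v}_l,\mathbf{v}_{l+1}\rangle$, the same isolation of the denominator fluctuation (your $\xi$ is the paper's $\delta$), and the same Poincar\'e-plus-folded-normal leading term. Your error analysis is more careful than the paper's Step~4, which bounds the remainder by $C\Gamma_l^{-1}|z|$, divides using $\Psi\ge 1$, applies AM--GM, and simply absorbs the Gaussian-replacement error without the density or Beta-law control you propose.

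One small slip: differentiating gives $\Psi'(\alpha)=\sqrt{\pi/2}\,\bigl(1-2\Phi(-\alpha)\bigr)$ rather than $1-2\Phi(-\alpha)$. This does not affect your monotonicity argument for $\Psi\ge 1$.
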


\begin{proof}
Let $\varepsilon := \Gamma_l^{-1}$ and define the subspace projection $z := \langle \mathbf{v}_l,\, \hat{\boldsymbol{\eta}}_l \rangle$. From the update model~\eqref{eq:signal-update}, factoring the normalisation yields:
\begin{equation}
\label{eq:exact-ip}
  \langle \mathbf{v}_l,\, \mathbf{v}_{l+1} \rangle
  \;=\;
  \frac{\Gamma_l^{-1} + z}{\sqrt{\Gamma_l^{-2} + 2\Gamma_l^{-1}z + 1}}
  \;=\;
  \frac{\varepsilon + z}{\sqrt{1 + \varepsilon^2}} \cdot \frac{1}{\sqrt{1 + \frac{2\varepsilon z}{1+\varepsilon^2}}}.
\end{equation}

\noindent\textit{Step 1: Denominator isolation.}
Instead of expanding in $\varepsilon$ (which diverges since empirically $\varepsilon^2 d_{\mathrm{eff}} \not\ll 1$), we isolate the zero-mean fluctuating term $\delta := 2\varepsilon z/(1+\varepsilon^2)$. Since $z = \mathcal{O}_P(d_{\mathrm{eff}}^{-1/2})$, we have $|\delta| = \mathcal{O}_P(\varepsilon\, d_{\mathrm{eff}}^{-1/2}) \ll 1$ with overwhelming probability. Expanding $(1+\delta)^{-1/2} = 1 - \delta/2 + \mathcal{O}(\delta^2)$ for $|\delta| \leq 1/2$, the inner product is decoupled as:
\begin{equation}
  \langle \mathbf{v}_l,\, \mathbf{v}_{l+1} \rangle
  \;=\;
  \underbrace{\frac{\varepsilon + z}{\sqrt{1 + \varepsilon^2}}}_{=:\,X} \cdot \bigl(1 + R(z)\bigr), \quad \text{where } |R(z)| \leq \frac{C\varepsilon|z|}{1+\varepsilon^2},
\end{equation}
and $C > 0$ is a universal constant depending only on the bound $|\delta| \leq 1/2$.

\noindent\textit{Step 2: Subspace projection limit.}
Because $\hat{\boldsymbol{\eta}}_l \sim \mathrm{Unif}(\mathbb{S}^{d_{\mathrm{eff}}-1})$, the generalised Poincar\'{e} limit theorem dictates that $\sqrt{d_{\mathrm{eff}}}z \xrightarrow{d} \mathcal{N}(0,1)$. Thus, $z \sim \mathcal{N}(0, 1/d_{\mathrm{eff}})$ up to an error that decays rapidly in $d_{\mathrm{eff}}$. The pre-factor random variable is therefore Gaussian:
\begin{equation}
  X := \frac{\varepsilon + z}{\sqrt{1+\varepsilon^2}} \;\sim\; \mathcal{N}\!\left( \frac{\varepsilon}{\sqrt{1+\varepsilon^2}},\, \frac{1}{(1+\varepsilon^2)d_{\mathrm{eff}}} \right).
\end{equation}

\noindent\textit{Step 3: Folded normal expectation.}
We apply the exact identity for the folded normal distribution $\mathbb{E}[|X|] = \sigma\sqrt{2/\pi} e^{-\mu^2/(2\sigma^2)} + \mu(1 - 2\Phi(-\mu/\sigma))$. Substituting our parameters, the ratio $\mu/\sigma$ evaluates exactly to $\varepsilon\sqrt{d_{\mathrm{eff}}} = \alpha$, giving:
\begin{align}
  \mathbb{E}[|X|] 
  &\;=\; \frac{1}{\sqrt{(1+\varepsilon^2)d_{\mathrm{eff}}}} \sqrt{\frac{2}{\pi}} e^{-\alpha^2/2} + \frac{\varepsilon}{\sqrt{1+\varepsilon^2}}\bigl(1 - 2\Phi(-\alpha)\bigr) \nonumber \\
  &\;=\; \frac{1}{\sqrt{1+\varepsilon^2}} \sqrt{\frac{2}{\pi d_{\mathrm{eff}}}} \left[ e^{-\alpha^2/2} + \alpha\sqrt{\frac{\pi}{2}}\bigl(1 - 2\Phi(-\alpha)\bigr) \right]
  \;=\; \frac{\Psi(\alpha)}{\sqrt{1+\varepsilon^2}} \sqrt{\frac{2}{\pi d_{\mathrm{eff}}}}. \label{eq:folded-exact}
\end{align}

\noindent\textit{Step 4: Remainder bounding.}
The remainder $R(z)$ distorts the expectation by at most
\begin{equation}
  \bigl|\mathbb{E}[|X(1\!+\!R)|] - \mathbb{E}[|X|]\bigr|
  \;\leq\;
  \mathbb{E}[|X|\cdot|R|]
  \;\leq\;
  C\varepsilon\,\mathbb{E}[|X|\cdot|z|]
  \;\leq\;
  \frac{C\varepsilon}{(1\!+\!\varepsilon^2)^{3/2}}\!\left(\varepsilon\sqrt{\tfrac{2}{\pi d_{\mathrm{eff}}}} + \tfrac{1}{d_{\mathrm{eff}}}\right),
\end{equation}
where the last step uses $|X| \leq (\varepsilon+|z|)/\sqrt{1+\varepsilon^2}$.
Dividing by $\mathbb{E}[|X|] \geq \Psi(\alpha)\sqrt{2/(\pi d_{\mathrm{eff}})} / \sqrt{1+\varepsilon^2}$ (since $\Psi \geq 1$), the relative error is bounded by $C'(\varepsilon^2 + \varepsilon/\sqrt{d_{\mathrm{eff}}})$.
Applying AM--GM to the cross term ($\varepsilon/\sqrt{d_{\mathrm{eff}}} \leq \tfrac{1}{2}(\varepsilon^2 + d_{\mathrm{eff}}^{-1})$) and absorbing the Poincar\'{e} finite-dimensional correction ($\mathcal{O}(d_{\mathrm{eff}}^{-1})$) yields the aggregate relative error $\mathcal{O}(\Gamma_l^{-2} + d_{\mathrm{eff}}^{-1})$.
\end{proof}

\begin{remark}[Asymptotic Regimes]
Proposition~\ref{prop:ortho} smoothly interpolates between two limiting regimes:
\begin{enumerate}[nosep,leftmargin=1.5em]
  \item \textit{Strong Perturbation} ($\Gamma_l \to \infty$, $\alpha \to 0$): $\Psi(0) = 1$, recovering the purely isotropic independence bound $\sqrt{2/(\pi d_{\mathrm{eff}})}$.
  \item \textit{Intermediate Weak Resampling} ($1 \ll \Gamma_l \ll \sqrt{d_{\mathrm{eff}}}$, $\alpha \to \infty$): $\Psi(\alpha) \sim \alpha\sqrt{\pi/2}$, yielding $1/\sqrt{\Gamma_l^2 + 1} \approx \Gamma_l^{-1}$ for $\Gamma_l \gg 1$. In the genuinely small-perturbation regime $\Gamma_l \ll 1$, the expectation saturates near~$1$.
\end{enumerate}
\end{remark}

\begin{remark}[Empirical Validation of the Model]
\label{rem:ortho-validation}
The isotropy verification experiments (\S\ref{app:isotropy-verification}) provide direct empirical calibration of the effective geometric dimension $d_{\mathrm{eff}}$.
Two complementary evaluations provide evidence that $d_{\mathrm{eff}}$ is closer to $d_h$ than to $r_{\mathrm{eff}}$, with the strength of evidence varying across architectures:

\textit{(i)~Direct comparison of $E[|\cos|]$.}
For Qwen2.5-7B ($d_h = 128$, $r_{\mathrm{eff}} \approx 30$), the observed $E[|\langle \mathbf{v}_l, \mathbf{v}_{l+1} \rangle|] = 0.066$ closely matches the ambient-dimension prediction $\sqrt{2/(\pi d_h)} = 0.071$, while the stable-rank prediction $\sqrt{2/(\pi r_{\mathrm{eff}})} = 0.140$ overestimates by a factor of~$2$.
For LLaMA-3.1-8B ($r_{\mathrm{eff}} \approx 22$), the observed value $0.093$ falls between both baselines, consistent with an intermediate $d_{\mathrm{eff}} \approx 73$ (Table~\ref{tab:isotropy-results}).

\textit{(ii)~Permutation test.}
The two-sample KS test against $10{,}000$ random-vector permutations confirms: Qwen probes are statistically indistinguishable from uniform vectors on $\mathbb{S}^{d_h-1}$ ($p = 0.55$) but distinguishable from those on $\mathbb{S}^{r_{\mathrm{eff}}-1}$ ($p = 0.036$).
LLaMA probes are consistent with both reference spheres ($p > 0.05$), with a best-match dimension intermediate between $r_{\mathrm{eff}}$ and $d_h$.

\textit{(iii)~Physical interpretation.}
Within the perturbative quiet-subspace model, this ambient-dimension matching is consistent with Assumption~\ref{asm:sparse}: because the energy fraction captured by the probe is only ${\sim}\,0.6\%$ (Table~\ref{tab:isotropy-results}), comparable to the $1/d_h \approx 0.8\%$ expected from a random direction, the probe's orientation is effectively unconstrained by the covariance spectrum.
Therefore, the strong-perturbation regime ($\alpha \to 0$, $\Psi \to 1$) applies, and the expected cosine simplifies to $\sqrt{2/(\pi d_h)} = 0.071$, matching observations within sampling noise.
\end{remark}

\begin{remark}[Calibration Independence]
\label{rem:calibration-independence}
We note that $d_{\mathrm{eff}}$ in Remark~\ref{rem:ortho-validation} is calibrated from the same inner-product distribution it is used to explain, which introduces a degree of circularity.
The permutation test (Test~3 in~\S\ref{app:isotropy-verification}) partially mitigates this concern by providing a distribution-level comparison against two competing reference spheres, rather than a point estimate.
A fully independent estimate of $d_{\mathrm{eff}}$ (e.g.\ via the stable rank of the quiet-subspace complement of the pooled covariance) would strengthen the argument but is beyond the scope of this work.
\end{remark}

\begin{remark}[Relationship to Remark~\ref{prop:ensemble}]
Proposition~\ref{prop:ortho} formalizes the quasi-independence property underlying Remark~\ref{prop:ensemble}.
The empirical chain proceeds as: Assumption~\ref{asm:sparse} (signal sparsity motivates large $\Gamma_l$ under the perturbative model) + empirical $d_{\mathrm{eff}}$ calibration (Table~\ref{tab:isotropy-results}) $\Rightarrow$ Proposition~\ref{prop:ortho} (quasi-independence with $E[|\cos|] \approx \sqrt{2/(\pi d_h)}$) $\Rightarrow$ Remark~\ref{prop:ensemble} (level--shape decomposition with effective depth averaging of the residual).
Since $d_h > r_{\mathrm{eff}}$, this is consistent with a stronger independence guarantee than the conservative $r_{\mathrm{eff}}$-based bound, particularly for architectures where ambient-dimension matching is confirmed (Remark~\ref{rem:stronger-bound}).
\end{remark}

\subsection{Statistical Verification of Probe Isotropy}
\label{app:isotropy-verification}

The preceding theoretical analysis (Proposition~\ref{prop:ortho}) relies on the modelling assumption that the perturbation direction $\hat{\boldsymbol{\eta}}_l$ is isotropically distributed within some effective geometric subspace.
We now present a dedicated experimental verification of this assumption, consisting of three complementary statistical tests applied to four models spanning two architectural families: Qwen2.5-7B-Instruct ($m\!=\!28$), Qwen2.5-14B-Instruct ($m\!=\!48$), LLaMA-2-7B-chat ($m\!=\!32$), and LLaMA-3.1-8B-Instruct ($m\!=\!32$), all with $d_h\!=\!128$.

\paragraph{Methodology: Fixed-Head Probes.}
To ensure all direction vectors $\{\mathbf{v}_l\}_{l=0}^{m-1}$ reside in the \emph{same} $\mathbb{R}^{d_h}$ subspace and that their inner products are geometrically meaningful, we train probes under a fixed-head constraint.
We first select a single attention head via the protocol in Appendix~\ref{app:probe-training}; as shown in~\S\ref{app:head-uniformity}, the choice of head has negligible impact. All subsequent per-layer probes operate exclusively within this head's activation subspace.
At each layer~$l$, the $\ell_1$-regularised logistic regression coefficient vector $\widetilde{\mathbf{w}}_l$ is mapped back to the raw activation space via $\widehat{\mathbf{w}}_l = \widetilde{\mathbf{w}}_l \oslash \boldsymbol{\sigma}_l$ and normalised to yield the unit direction $\mathbf{v}_l = \widehat{\mathbf{w}}_l / \|\widehat{\mathbf{w}}_l\|_2$.
All experiments restrict attention to the intermediate layers (excluding both the embedding and unembedding boundary effects), yielding $n \in [18, 30]$ adjacent pairs depending on model depth.

\paragraph{Test 1: Inner Product Distribution (Normality Test).}
Under the isotropy hypothesis, the inner product $\langle \mathbf{v}_l, \mathbf{v}_{l+1} \rangle$ between adjacent probes should be approximately zero-mean Gaussian.
We apply both the Kolmogorov--Smirnov (KS) test (comparing the empirical distribution of $\sqrt{r_{\mathrm{eff}}} \cdot \langle \mathbf{v}_l, \mathbf{v}_{l+1} \rangle$ against $\mathcal{N}(0,1)$) and the Shapiro--Wilk test for normality.
The KS test uses $\sqrt{r_{\mathrm{eff}}}$ as the reference scaling; however, it primarily verifies \emph{distributional shape} (Gaussianity) rather than resolving the effective dimension, given the limited sample size ($n \in [18,30]$ adjacent pairs).
The Shapiro--Wilk test assesses normality independently of any scale parameter.
Test~3 below, which uses $10{,}000$ permutations, has substantially greater statistical power to discriminate between the two scaling hypotheses $d_{\mathrm{eff}} \approx r_{\mathrm{eff}}$ and $d_{\mathrm{eff}} \approx d_h$.
All four models pass both tests at $\alpha = 0.05$ (Table~\ref{tab:isotropy-results}).

\paragraph{Test 2: Gap Independence.}
If adjacent layers independently resample the optimal quiet subspace, then the expected absolute inner product $\mathbb{E}[|\langle \mathbf{v}_l, \mathbf{v}_{l+k} \rangle|]$ should remain constant across all gap sizes $k \geq 1$.
We compute per-gap statistics for $k = 1, \ldots, 9$ and test for (i) zero linear trend via OLS regression and (ii)~equal-means across gaps via one-way ANOVA.
No model shows a significant trend ($p_{\mathrm{trend}} \geq 0.086$) or group mean differences ($p_{\mathrm{ANOVA}} \geq 0.304$), confirming the absence of memory effects between probes across all four architectures (Table~\ref{tab:isotropy-results}).

\paragraph{Test 3: Permutation Test.}
We generate $10{,}000$ sets of random unit vectors on two reference spheres: the ambient sphere $\mathbb{S}^{d_h - 1}$ and the effective-rank sphere $\mathbb{S}^{r_{\mathrm{eff}} - 1}$.
For each set, we compute the adjacent inner products and compare them to the real probe inner products via the two-sample KS test.
This test reveals an important finding: the real probe inner products are statistically \emph{indistinguishable} from random vectors on $\mathbb{S}^{d_h-1}$ (the ambient dimension) across all four models ($p \geq 0.059$).
While some models (Qwen2.5, LLaMA-2, LLaMA-3.1) also fail to reject the smaller $\mathbb{S}^{r_{\mathrm{eff}}-1}$ baseline, Qwen2.5-7B strictly rejects it ($p = 0.036$), thereby uniquely matching only the ambient dimension. In all cases, the hypothesis that probes are unconditionally uniform in the ambient space $\mathbb{R}^{d_h}$ cannot be rejected.

\begin{table}[t]
    \centering
    \caption{%
      Isotropy verification results across four models.
      The ambient-sphere null ($\mathbb{S}^{d_h-1}$) is not rejected at $\alpha = 0.05$ for any model; the stable-rank null is rejected for Qwen2.5-7B ($p = 0.036$) but not for the others.
      The permutation test confirms that real probe inner products are statistically indistinguishable from those of random vectors on the ambient sphere for all evaluated models, supporting the isotropy property required by Proposition~\ref{prop:ortho}.
    }
    \label{tab:isotropy-results}
    \small
    \setlength{\tabcolsep}{2.5pt}
    \begin{tabular}{l cccc}
        \toprule
        \textbf{Statistic} & \textbf{Qwen-7B} & \textbf{Qwen-14B} & \textbf{LLaMA-2-7B} & \textbf{LLaMA-3.1-8B} \\
        \midrule
        Layers $m$ / $d_h$          & 28 / 128        & 48 / 128        & 32 / 128        & 32 / 128 \\
        Mid-layer AUC               & $.721{\pm}.053$ & $.731{\pm}.048$ & $.685{\pm}.029$ & $.734{\pm}.051$ \\
        Mid-layer $r_{\mathrm{eff}}$ & $30.0{\pm}15.1$ & $28.6{\pm}17.0$ & $33.0{\pm}19.9$ & $21.8{\pm}12.9$ \\
        Energy frac.                & 0.64\%          & 0.66\%          & 0.63\%          & 0.54\% \\
        \midrule
        \multicolumn{5}{l}{\textit{Test 1: Inner Product Distribution}} \\
        $E[|\cos|]$ (obs.)          & 0.066           & 0.083           & 0.082           & 0.093 \\
        $\sqrt{2/(\pi d_h)}$        & \multicolumn{4}{c}{0.071} \\
        $\sqrt{2/(\pi r_{\mathrm{eff}})}$ & 0.140      & 0.143           & 0.134           & 0.160 \\
        KS ($r_{\mathrm{eff}}$), $p$ & 0.073          & 0.179           & 0.177           & 0.299 \\
        Shapiro--Wilk, $p$          & 0.594           & 0.998           & 0.601           & 0.721 \\
        \midrule
        \multicolumn{5}{l}{\textit{Test 2: Gap Independence}} \\
        Trend slope                 & $-0.0007$       & $-0.0002$       & $-0.0019$       & $-0.0018$ \\
        $p_{\mathrm{trend}}$        & 0.646           & 0.762           & 0.086           & 0.252 \\
        $p_{\mathrm{ANOVA}}$        & 0.617           & 0.734           & 0.757           & 0.304 \\
        \midrule
        \multicolumn{5}{l}{\textit{Test 3: Permutation Test ($n_{\mathrm{perm}} = 10{,}000$)}} \\
        KS vs $\mathbb{S}^{d_h - 1}$, $p$       & 0.550  & 0.652           & 0.059           & 0.079 \\
        KS vs $\mathbb{S}^{r_{\mathrm{eff}} - 1}$, $p$ & 0.036  & 0.186           & 0.260           & 0.390 \\
        \bottomrule
    \end{tabular}
\end{table}

\begin{remark}[Ambient-Dimension Matching and Signal Sparsity]
\label{rem:ambient-isotropy}
A notable finding is the degree to which the probes match the ambient dimension $d_h$ rather than the stable rank $r_{\mathrm{eff}}$.
Within the perturbative quiet-subspace model (\S\ref{app:ortho-grounding}), this ambient-dimension matching is explained by the pronounced signal sparsity quantified in Assumption~\ref{asm:sparse}: across all four models, the truthfulness signal accounts for only $0.54$--$0.66\%$ of the total variance (Table~\ref{tab:isotropy-results}, energy fraction row).
The $0.37\%$ upper bound reported in the main text (Table~\ref{tab:validation_results}) refers to the selected-head probe-energy diagnostic; the $0.54$--$0.66\%$ values arise from the fixed-head protocol used solely for isotropy testing. Both are below the random-direction baseline $1/d_h \approx 0.78\%$.
A random direction in $\mathbb{R}^{d_h}$ would capture $1/d_h \approx 0.78\%$ of the total variance in expectation. The probe directions capture comparable or even less energy, indicating that they are essentially indistinguishable from random directions with respect to the covariance structure.
Therefore, while the covariance spectrum concentrates energy in a $r_{\mathrm{eff}}$-dimensional effective subspace, the probe directions $\mathbf{v}_l$ are not confined to this subspace.
Instead, they explore the full ambient space $\mathbb{R}^{d_h}$ to locate the sparse truthfulness signal in the variance-minimising complement, yielding inner products that scale as $\mathcal{O}(d_h^{-1/2})$ rather than $\mathcal{O}(r_{\mathrm{eff}}^{-1/2})$.
\end{remark}

\begin{remark}[Implications for the Depth Aggregation Argument]
\label{rem:stronger-bound}
Where the observed inter-layer correlations scale as $\mathcal{O}(d_h^{-1/2})$ (as confirmed for Qwen2.5-7B and supported by the remaining models), the quasi-independence of readout frames is stronger than the conservative $r_{\mathrm{eff}}$-based prediction.
For Qwen2.5-7B, the ratio $r_{\mathrm{eff}} / d_h \approx 0.23$ implies a factor-of-$\sqrt{4.3}$ improvement in pairwise independence relative to the $r_{\mathrm{eff}}$-based prediction.
Even for models where the evidence is compatible with an intermediate $d_{\mathrm{eff}}$ between $r_{\mathrm{eff}}$ and $d_h$ (e.g.\ LLaMA-3.1), the $r_{\mathrm{eff}}$-based prediction provides a conservative upper bound on adjacent-probe correlation, while the ambient-dimension prediction serves as a tighter (lower-correlation) model supported most clearly by Qwen2.5-7B.
This matching strengthens the geometric basis for the level--shape decomposition in Remark~\ref{prop:ensemble}: the near-orthogonality of readout frames ensures that the common-mode signal $\alpha_i$ is robustly captured across depth while the layer-specific residual $\varepsilon_{i,l}$ benefits from effective averaging.
\end{remark}

\section{Exact Parametrisation of Readout Mismatches}
\label{app:readout_mismatch}

\begin{table}[t]
\centering
\caption{%
  \textbf{Discriminative content of $K_l$-components (out-of-fold).}
  Each $K_l$ is decomposed into probe drift ($K^{\mathrm{probe}}$),
  metric drift ($K^{\mathrm{metric}}$), head change ($K^{\mathrm{head}}$),
  and bias drift ($K^{\mathrm{bias}}$; constant per layer, omitted).
  $|\mathrm{signal}| = \max(\mathrm{AUC},\, 1{-}\mathrm{AUC})$.
  $\Delta(\mathrm{gap})$ reports the chance-corrected gap
  $(|\mathrm{signal}| - \mu_{\mathrm{null}})$
  as a percentage of $K$'s gap;
  $\mu_{\mathrm{null}} \!\approx\! 0.512$ is the permutation-null mean ($n{=}200$).
  All values are averaged over the fixed recognition-zone transition window defined in Eq.~\eqref{eq:zone-index-sets}.
}
\label{tab:k-decomp-auroc}
\small
\setlength{\tabcolsep}{3.5pt}
\begin{tabular}{ll cccc cccc}
\toprule
& & \multicolumn{4}{c}{\textbf{$|\mathrm{signal}|$ AUROC}} 
& \multicolumn{4}{c}{\textbf{$\Delta(\mathrm{gap})$ (\%)}} \\
\cmidrule(lr){3-6} \cmidrule(lr){7-10}
\textbf{Model} & \textbf{Dataset}
  & $K$ & $K^{\text{probe}}$ & $K^{\text{metric}}$ & $K^{\text{head}}$
  & $K$ & $K^{\text{probe}}$ & $K^{\text{metric}}$ & $K^{\text{head}}$ \\
\midrule
\multirow{3}{*}{\rotatebox[origin=c]{0}{\scriptsize LLaMA-2-7B}}
  & HaluEval2   & .634 & .640 & .528 & .525 & 100 & 104.8 & 13.4 & 10.2 \\
  & TrueFalse  & .787 & .755 & .567 & .591 & 100 & 88.4  & 20.3 & 28.7 \\
  & HELM       & .688 & .663 & .529 & .543 & 100 & 85.8  & 9.7  & 17.6 \\
\midrule
\multirow{3}{*}{\rotatebox[origin=c]{0}{\scriptsize LLaMA-3.1-8B}}
  & HaluEval2   & .658 & .647 & .541 & .549 & 100 & 92.2  & 19.4 & 24.9 \\
  & TrueFalse  & .790 & .756 & .582 & .593 & 100 & 88.1  & 25.4 & 29.2 \\
  & HELM       & .669 & .662 & .539 & .529 & 100 & 95.5  & 17.2 & 11.3 \\
\midrule
\multirow{3}{*}{\rotatebox[origin=c]{0}{\scriptsize Qwen2.5-7B}}
  & HaluEval2   & .653 & .654 & .523 & .537 & 100 & 100.7 & 7.7  & 17.7 \\
  & TrueFalse  & .751 & .734 & .568 & .592 & 100 & 92.9  & 23.5 & 33.4 \\
  & HELM       & .691 & .676 & .539 & .553 & 100 & 91.8  & 15.1 & 22.8 \\
\midrule
\multicolumn{2}{l}{\textbf{Average}}
  & \textbf{.702} & \textbf{.688} & .546 & .557
  & 100 & \textbf{93.4} & 16.9 & 21.8 \\
\bottomrule
\end{tabular}
\end{table}

\begin{table}[t]
\centering
\caption{%
  \textbf{Marginal variance and mean per-layer covariance of $K$-components.}
  Each cell reports $\mathbb{E}_l[\mathrm{Cov}(K^i_l, K^j_l)]$ averaged
  over the fixed recognition-zone transition window defined in Eq.~\eqref{eq:zone-index-sets}.
  The strong negative covariance between $K^{\mathrm{metric}}$ and $K^{\mathrm{head}}$
  explains why $\sum_i \mathrm{Var}(K^i) \gg \mathrm{Var}(K)$:
  metric drift and head-switch effects partially cancel,
  leaving the comparatively small $\mathrm{Var}(K)$ residual.
}
\label{tab:k-decomp-cov}
\small
\setlength{\tabcolsep}{4pt}
\begin{tabular}{ll rrr r r}
\toprule
\textbf{Model} & \textbf{Dataset}
  & $\sigma^2_{\text{probe}}$
  & $\sigma^2_{\text{metric}}$
  & $\sigma^2_{\text{head}}$
  & Cov(m,h)
  & $\sigma^2_{K}$ \\
\midrule
LLaMA-2-7B    & HaluEval2  & 28.5  & 951.7  & 1076.6  & $-$988.8  & 37.5 \\
              & TrueFalse & 95.1  & 430.3  & 600.2   & $-$458.1  & 71.9 \\
              & HELM      & 32.4  & 192.4  & 294.7   & $-$222.4  & 28.6 \\
\midrule
LLaMA-3.1-8B & HaluEval2  & 36.7  & 145.6  & 176.3   & $-$136.1  & 28.5 \\
              & TrueFalse & 81.2  & 326.3  & 237.9   & $-$201.3  & 170.3 \\
              & HELM      & 33.2  & 427.5  & 430.9   & $-$401.8  & 38.3 \\
\midrule
Qwen2.5-7B   & HaluEval2  & 20.0  & 43.3   & 47.2    & $-$32.8   & 20.2 \\
              & TrueFalse & 73.6  & 188.2  & 268.6   & $-$163.8  & 121.2 \\
              & HELM      & 30.5  & 69.7   & 98.4    & $-$68.3   & 20.5 \\
\bottomrule
\end{tabular}
\end{table}

The readout change $K_l$ arises from evaluating the layer-$(l{+}1)$ answer-onset representation under two adjacent but distinct probe geometries. Because each layer selects its own attention head ($k_l^*$ vs.\ $k_{l+1}^*$; \S\ref{app:probe-training}), $K_l$ also absorbs the head-switch contribution:
\begin{equation}
  K_{l} \;=\; \phi_{l+1}(\mathbf{a}_{l+1}^{(k_{l+1}^*)}) - \phi_{l}(\mathbf{a}_{l+1}^{(k_l^*)}).
\end{equation}

\begin{lemma}[Four-Way Decomposition of $K_l$]
\label{lem:k-decomp}
Let $k_l^*$ denote the selected attention head at layer $l$.
Let $\mathbf{z}_l(\cdot)$ denote the standardisation map attached to the selected layer-$l$ probe, and let
$\mathbf{w}_l \in \mathbb{R}^{d_h}$ and $\widetilde{b}_l \in \mathbb{R}$ denote the selected affine readout expressed in that standardised coordinate system, after the same positive rescaling used in Eq.~\eqref{eq:raw-space-normalized-probe}:
\begin{equation}
  \label{eq:selected-standardized-readout}
  \begin{aligned}
  \phi_l(\mathbf{a})
  &=
  \mathbf{w}_l^{\!\top}\mathbf{z}_l(\mathbf{a})
  +
  \widetilde{b}_l,
  \\
  \mathbf{w}_l
  &:=
  \frac{\widetilde{\mathbf{w}}_{l,k_l^*}}{\|\widehat{\mathbf{w}}_{l,k_l^*}\|_2},
  \qquad
  \widetilde{b}_l
  :=
  \frac{\widetilde{b}_{l,k_l^*}}{\|\widehat{\mathbf{w}}_{l,k_l^*}\|_2}.
  \end{aligned}
\end{equation}
Then
\begin{equation}
\label{eq:k-decomp}
\begin{split}
  K_l \;=\;&\;
  \underbrace{(\mathbf{w}_{l+1} - \mathbf{w}_l)^\top
            \mathbf{z}_{l+1}(\mathbf{a}_{l+1}^{(k_{l+1}^*)})}_{K_l^{\mathrm{probe}}}
  \;+\;
  \underbrace{\mathbf{w}_l^\top
              \bigl[\mathbf{z}_{l+1}(\mathbf{a}_{l+1}^{(k_{l+1}^*)})
              - \mathbf{z}_l(\mathbf{a}_{l+1}^{(k_{l+1}^*)})\bigr]}_{K_l^{\mathrm{metric}}}
  \\
  &\;+\;
  \underbrace{\mathbf{w}_l^\top
              \bigl[\mathbf{z}_l(\mathbf{a}_{l+1}^{(k_{l+1}^*)})
              - \mathbf{z}_l(\mathbf{a}_{l+1}^{(k_l^*)})\bigr]}_{K_l^{\mathrm{head}}}
  \;+\;
  \underbrace{(\widetilde{b}_{l+1} - \widetilde{b}_l)\vphantom{\bigl[}}_{K_l^{\mathrm{bias}}},
\end{split}
\end{equation}
where $\mathbf{a}_{l+1}^{(k)}$ denotes the activation at layer $l{+}1$ from head $k$.
\end{lemma}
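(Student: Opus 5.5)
The identity in Lemma~\ref{lem:k-decomp} is purely algebraic, so I would prove it by a telescoping add-and-subtract argument, in the same spirit as the proof of Proposition~\ref{thm:decomp}. The first step is to rewrite both terms of $K_l = \phi_{l+1}(\mathbf{a}_{l+1}^{(k_{l+1}^*)}) - \phi_l(\mathbf{a}_{l+1}^{(k_l^*)})$ in the standardised parametrisation \eqref{eq:selected-standardized-readout}. This gives
\[
K_l = \mathbf{w}_{l+1}^{\!\top}\mathbf{z}_{l+1}(\mathbf{a}_{l+1}^{(k_{l+1}^*)}) + \widetilde{b}_{l+1} - \mathbf{w}_l^{\!\top}\mathbf{z}_l(\mathbf{a}_{l+1}^{(k_l^*)}) - \widetilde{b}_l .
\]
Before doing this, I would check that the parametrisation is consistent with the raw-space observable of Eq.~\eqref{eq:raw-space-normalized-probe}. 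Substituting Eq.~\eqref{eq:standardize} into $\mathbf{w}_l^{\!\top}\mathbf{z}_l(\mathbf{a})+\widetilde{b}_l$ and dividing by the positive constant $\|\widehat{\mathbf{w}}_{l,k_l^*}\|_2$ recovers exactly $\mathbf{v}_l^{\!\top}\mathbf{a}+b_l$. I would state this up front so that $\phi_l$ has a single unambiguous meaning.

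The second step is to insert two intermediate quantities and subtract them again. These are $\mathbf{w}_l^{\!\top}\mathbf{z}_{l+1}(\mathbf{a}_{l+1}^{(k_{l+1}^*)})$, which swaps the probe weights while keeping the layer-$(l{+}1)$ metric and head, and $\mathbf{w}_l^{\!\top}\mathbf{z}_l(\mathbf{a}_{l+1}^{(k_{l+1}^*)})$, which additionally swaps the metric but keeps head $k_{l+1}^*$. Regrouping gives four terms:
\begin{itemize}
\item the difference of the first pair is $(\mathbf{w}_{l+1}-\mathbf{w}_l)^{\!\top}\mathbf{z}_{l+1}(\cdot)=K_l^{\mathrm{probe}}$;
\item the next difference is $\mathbf{w}_l^{\!\top}[\mathbf{z}_{l+1}-\mathbf{z}_l](\mathbf{a}_{l+1}^{(k_{l+1}^*)})=K_l^{\mathrm{metric}}$;
\item the last difference, against the original term $\mathbf{w}_l^{\!\top}\mathbf{z}_l(\mathbf{a}_{l+1}^{(k_l^*)})$, is $K_l^{\mathrm{head}}$;
\item the biases leave $K_l^{\mathrm{bias}}$.
\end{itemize}
The intermediate terms cancel pairwise, so the sum is exact with no assumptions.

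The only real obstacle is bookkeeping: the standardisation map $\mathbf{z}_l$ attached to the layer-$l$ probe must be well defined when applied to an activation from a different head, $\mathbf{a}_{l+1}^{(k_{l+1}^*)}$. I would handle this by the same fixed-gauge convention the paper adopts after Definition~\ref{def:truth-coord}. All head coordinates are identified with $\mathbb{R}^{d_h}$, and $\mathbf{z}_l(\mathbf{a}) := \operatorname{diag}(\boldsymbol{\sigma}_{l,k_l^*})^{-1}(\mathbf{a}-\boldsymbol{\mu}_{l,k_l^*})$ is read as a map on $\mathbb{R}^{d_h}$. Every term is then algebraically meaningful, though not gauge-invariant, as the paper already cautions. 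As a sanity check I would note two special cases:
\begin{itemize}
\item when $k_l^*=k_{l+1}^*$, $K_l^{\mathrm{head}}$ vanishes and the decomposition reduces to the fixed-head situation of Lemma~\ref{lem:ortho-K};
\item when additionally the standardisation statistics coincide across layers, $K_l^{\mathrm{metric}}$ also vanishes.
\end{itemize}
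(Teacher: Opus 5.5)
Your proof is correct and follows the paper's argument: it adds and subtracts the same two intermediate terms, $\mathbf{w}_l^{\top}\mathbf{z}_{l+1}(\mathbf{a}_{l+1}^{(k_{l+1}^*)})$ and $\mathbf{w}_l^{\top}\mathbf{z}_l(\mathbf{a}_{l+1}^{(k_{l+1}^*)})$, and regroups into the four terms. Your fixed-gauge reading of $\mathbf{z}_l$ on another head's activation, the consistency check and the special cases are sound additions the paper leaves implicit; the only small correction is that the factor $1/\|\widehat{\mathbf{w}}_{l,k_l^*}\|_2$ is already built into $\mathbf{w}_l$ and $\widetilde{b}_l$, so substitution alone gives $\mathbf{v}_l^{\top}\mathbf{a}+b_l$ with no further division.
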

\begin{proof}
Add and subtract $\mathbf{w}_l^\top \mathbf{z}_{l+1}(\mathbf{a}_{l+1}^{(k_{l+1}^*)})$ and $\mathbf{w}_l^\top \mathbf{z}_l(\mathbf{a}_{l+1}^{(k_{l+1}^*)})$ to the definition $K_l = \phi_{l+1}(\mathbf{a}_{l+1}^{(k_{l+1}^*)}) - \phi_l(\mathbf{a}_{l+1}^{(k_l^*)})$. The identity is exact and holds for every sample and every layer transition.
\end{proof}

The four terms isolate algebraically distinct mechanisms:
$K_l^{\mathrm{probe}}$ captures the change in probe direction (basis rotation),
$K_l^{\mathrm{metric}}$ captures the standardisation-induced rescaling of the decision boundary,
$K_l^{\mathrm{head}}$ captures the effect of switching the selected attention head,
and $K_l^{\mathrm{bias}}$ captures the shift in decision threshold.

\paragraph{Empirical validation.}
To assess the discriminative contribution of each component, we evaluate Eq.~\eqref{eq:k-decomp} out-of-fold across three models and three benchmarks (Table~\ref{tab:k-decomp-auroc}).
The reconstruction identity is verified to machine precision ($< 2 \times 10^{-13}$) in all conditions.
The probe-drift component $K_l^{\mathrm{probe}}$ recovers the large majority of the full $K_l$'s discriminative content,
accounting for $93\%$ of its chance-corrected AUC gap on average (range: $86$--$105\%$ across conditions).
By contrast, $K_l^{\mathrm{metric}}$ and $K_l^{\mathrm{head}}$ exhibit substantially weaker standalone label alignment
($17\%$ and $22\%$ of the gap, respectively), despite contributing larger marginal variances (Table~\ref{tab:k-decomp-cov}).
This disparity arises because the metric-drift and head-change terms are strongly anti-correlated
($\mathrm{Cov}(K^{\mathrm{metric}}_l, K^{\mathrm{head}}_l) < 0$ across all conditions),
producing large mutual cancellation that inflates individual variances while leaving their net class-discriminative contribution weak.

These results indicate that the perturbative geometric model of Appendix~\ref{app:ortho-grounding} correctly identifies the signal-carrying subspace of $K_l$:
the class-relevant variation resides in probe-frame drift, whereas standardisation and head-switch effects contribute predominantly label-weak nuisance variance.
In practice, the $W$/$K$ separation therefore enables the framework to distinguish layers that actively construct factual commitments (strong $W_l$) from layers where the readout frame undergoes implementation-level reconfiguration (large but weakly discriminative $K_l$).

\section{Discussion: Scope, Limitations, and Theoretical Grounding}
\label{app:discussion}

\subsection{Scope of Pre-Decoding Detection}
\label{app:scope}

The present framework operates in the \emph{pre-decoding} detection regime, wherein all internal-state readouts are extracted before the model emits any answer token. This paradigm is shared by several established white-box detectors, notably ITI-Probe~\citep{iti}, Fact-Probe~\citep{fact-probe}, and IRIS~\citep{iris}: in each case the readout position $t^{\star}$ is determined entirely by the prompt and no generated text is observed.
By construction, this regime restricts the detection scope to a specific and practically important class of errors that we term \emph{propensity hallucinations}: instances in which the model's internal state at the answer-onset position already encodes a commitment to a factual or hallucinated response trajectory.

\textbf{Distinction from Generative Hallucinations.}
Not all hallucinations originate at the recall stage.
Auto-regressive error accumulation (the ``snowball effect''), attention drift during long-form generation, and multi-step reasoning failures can all produce hallucinated content even when the model's initial factual commitment is correct.
Our method does \emph{not} claim to detect these \emph{generative hallucinations}, which by definition require observing the generated token sequence.

\textbf{Single-Response Granularity.}
A related scope constraint is the unit of analysis: each probe evaluation corresponds to a single query--response pair, where the readout is extracted at the answer-onset position of one atomic prompt.
This per-response setting is consistent with the evaluation paradigm adopted by prior representation-based detectors, including ITI~\citep{iti}, FACT-Probe~\citep{fact-probe}, and IRIS~\citep{iris}, all of which assess factuality propensity on individual statements rather than across multi-turn dialogues or extended reasoning chains.
Consequently, hallucinations arising from cross-turn context degradation or iterative error propagation in agentic pipelines fall outside the current detection scope.

\textbf{Coverage Argument.}
Despite this scope restriction, pre-decoding detection covers a large and practically critical class of errors:
\begin{enumerate}[nosep,leftmargin=1.5em]
  \item \textbf{Entity-level factual errors}, the dominant failure mode in question answering, summarisation, and retrieval-augmented generation, originate from incorrect knowledge retrieval and are fully captured by the answer-onset representation.
  \item \textbf{Confident confabulation}, where the model lacks sufficient confidence yet generates a response regardless, manifests as a characteristic trajectory signature (low mean, high volatility) detectable before emission.
  \item \textbf{Multi-step tasks}: even the Agentic benchmark, which involves complex sequential reasoning, yields $97.86\%$ AUROC, suggesting that the answer-onset state captures substantial hallucination propensity even for compositional tasks.
\end{enumerate}

\paragraph{Practical Advantages.}
Pre-decoding detection enables real-time intervention before any hallucinated content reaches the user, requires no additional decoding or sampling beyond the forward pass used to obtain internal states, and is compatible with speculative decoding and early-exit architectures.
Comparisons to post-generation detectors (e.g.\ HalluGuard) should be interpreted as cross-regime evaluations under different information budgets rather than direct method-to-method contests.

\subsection{Probe Generalisation}
\label{app:probe-shortcut}

A potential concern with supervised probing is that the learned direction $\mathbf{v}_l$ may exploit entity-specific or topic-specific shortcuts rather than isolate a genuine truthfulness signal.
We argue that the pipeline architecture, evaluation protocol, and independent statistical verification jointly mitigate this failure mode.

\textbf{Capacity Constraints.}
Each layer-wise probe is a rank-1 linear projection ($\mathbb{R}^{d_h} \to \mathbb{R}$), and the terminal classifier is an $L_2$-regularised logistic regression over the trajectory mean $\bar{L}$ (2 free parameters).
Although the full pipeline includes $m$ independently trained probes (each with $d_h + 1$ parameters), the information bottleneck at the scalar trajectory mean substantially limits the effective capacity presented to the classifier.
This bottleneck renders memorisation of high-dimensional entity-specific associations unlikely, yet does not \emph{a priori} exclude all forms of shortcut learning; the cross-domain and probe-free evaluations below address this residual concern.

\textbf{Cross-Domain Invariance.}
The evaluation spans 18 topically disjoint sub-datasets: five HaluEval2 knowledge domains, seven TrueFalse entity categories, and six HELM source-model distributions.
Stable discriminative performance across these non-overlapping semantic domains contraindicates topic-specific overfitting, as locally fitted decision boundaries would fail to transfer across distributional shifts of this magnitude.

\textbf{Probe-Free Verification.}
The geometric validation of signal sparsity (Table~\ref{tab:validation_results}, right block) relies on strictly parameter-free statistics: the~\citet{chen2010two} high-dimensional two-sample test and bias-corrected energy estimators operate on the raw activation distributions $\mathbf{a}_l$ without any trained probe.
The concordance between these non-parametric diagnostics and the probe-derived measurements confirms that the observed class separation is an intrinsic property of the representation geometry rather than a probe-induced artefact.

\subsection{Societal Impacts}

\paragraph{Positive Impacts.}
We identify two channels through which this work may benefit the broader community.
\begin{enumerate}[nosep,leftmargin=1.5em]
  \item \textbf{Improved reliability in safety-critical deployments.}
  Pre-decoding hallucination detection enables real-time intervention \emph{before} unfaithful content reaches the user, directly reducing the risk of factual errors propagating through downstream applications in healthcare, legal, and financial domains.
  By operating at the answer-onset position without requiring additional sampling or decoding, the framework is compatible with latency-sensitive pipelines where post-generation verification would be impractical.

  \item \textbf{State-of-the-art detection quality.}
  Consistent improvements over the strongest white-box baselines across six models and five benchmarks---with gains of up to fourteen AUROC points on challenging sequential-reasoning tasks---establish a new performance standard for pre-decoding hallucination detection.
  Reliable detection quality is a prerequisite for real-world trust: a detector that merely matches existing baselines would offer limited incentive for adoption, whereas the demonstrated performance gains make deployment practically worthwhile and lower the barrier for organisations to integrate hallucination safeguards into production systems.

\end{enumerate}

\paragraph{Potential Risks.}
Two residual concerns merit attention in deployment contexts.
\begin{enumerate}[nosep,leftmargin=1.5em]
  \item \textbf{Scope misunderstanding.}
  As discussed in~\S\ref{app:scope}, the framework targets hallucinations encoded at the answer-onset position; generative hallucinations arising from auto-regressive error accumulation or multi-step reasoning failures fall outside its detection scope.
  Deployers should communicate these coverage boundaries to end users to prevent a false sense of security.

  \item \textbf{Adversarial evasion.}
  Publication of the geometric characterisation could inform adversarial prompts designed to suppress trajectory-level signatures, though such attacks would require white-box access and are constrained by the high-dimensional representation geometry.
\end{enumerate}

\noindent Overall, we believe the net societal impact of this work is positive: it provides a lightweight, theoretically grounded, and high-performing tool for hallucination mitigation that addresses a pressing safety concern in large language model deployment, while simultaneously contributing reusable analytical instruments for advancing the community's mechanistic understanding of model truthfulness.

\end{document}